\documentclass{article}

\usepackage[
  journal=FoCM,
  lang=american,
  mode=submission,
  cite=numeric
]{ems-journal}

\usepackage{mathtools}
\mathtoolsset{showonlyrefs}
\usepackage{physics}
\usepackage{bm}
\usepackage{tikz}
\usepackage{makecell}
\usepackage{pgfplots}
\usepackage{dsfont}
\usepackage{bigints}
\usepackage{mathrsfs}
\DeclareFontFamily{U}{rsfs}{\skewchar\font127 }
\DeclareFontShape{U}{rsfs}{m}{n}{%
  <5> <5.5> <6> rsfs5
  <7> rsfs7
  <8> <9> <10> <10.95> <12> <14.4> <17.28> <20.74> <24.88> rsfs10
}{}
\usepackage{cleveref}

\newcommand{\BlackBox}{\rule{1.5ex}{1.5ex}}
\newenvironment{noheadproof}[1][]{\par\noindent\ignorespaces}{\par}

\theoremstyle{plain}
\newtheorem{theorem}{Theorem}
\newtheorem{lemma}{Lemma}[section]
\newtheorem{prop}{Proposition}
\newtheorem{corollary}{Corollary}
\theoremstyle{definition}
\newtheorem{assumption}{Assumption}
\newtheorem{definition}{Definition}
\newtheorem{remark}{Remark}

\crefname{prop}{proposition}{propositions}
\Crefname{prop}{Proposition}{Propositions}
\crefname{assumption}{assumption}{assumptions}
\Crefname{assumption}{Assumption}{Assumptions}

\usetikzlibrary{arrows.meta, positioning}
\pgfplotsset{compat=1.18} % adjust if needed

\newcommand{\mS}{\m{S}}
\newcommand{\im}{\operatorname{Im}}
\newcommand{\mA}{\m{A}} % sigma algebra A
\newcommand{\N}{\mb{N}} % natural numbers
\newcommand{\bomega}{\bm{\omega}} % bold omega
\newcommand{\R}{\mb{R}} % reals
\newcommand{\Rnn}{\R^{\ge 0}} % non-negative reals
\newcommand{\Zp}{\mb{Z}^{>0}} % positive integers
\renewcommand{\S}{\mb{S}} % sphere
\newcommand{\I}{\mathds{1}} % indicator function: 1_A
\renewcommand{\d}{\mm{d}} % dx
\renewcommand{\b}{\mathbf} % bold maths
\newcommand{\tb}{\textbf} % bold text
\newcommand{\kB}{k^{(\mm{B})}} % Brownian kernel 
\newcommand{\ku}{k_{u}} 
 
\newcommand{\T}{\top} % transpose
\renewcommand{\t}{\text} % text
\newcommand{\lf}{f^{(1:L)}} % elements of an L-BKL
\newcommand{\lmu}{\mu^{(1:L-1)}}
\newcommand{\lmuh}{\hat\mu^{(1:L-1)}}
\newcommand{\Slmu}{\mS_{\mu}^{(l)}}
\newcommand{\Simu}{\mS_{\mu}^{(i)}}
\newcommand{\Sonemu}{\mS_{\mu}^{(1)}}
\newcommand{\Slpmu}{\mS_{\mu}^{(l+1)}}
\newcommand{\sSmu}{\mS^{(1:L-1)}_{\mu}}
\newcommand{\sSmuh}{\mS^{(1:L-1)}_{\hat\mu}}
\newcommand{\slSmu}{\mS^{(1:l-1)}_{\mu}}

\newcommand{\LHmuh}{\LH_{\hat\mu}} % l = ladder
\newcommand{\LHmu}{\LH_{\mu}} % l = ladder
\newcommand{\LH}{\mH^{(1:L)}} % L-level 
\newcommand{\LHh}{\widehat\mH^{(1:L)}} % L-level 
\newcommand{\HLmuh}{\mH_{\hat\mu}^{(L)}}

\newcommand{\Hlmu}{\mH_{\mu}^{(l)}}
\newcommand{\Hlmuh}{\mH_{\hat\mu}^{(l)}}
\newcommand{\SHlmu}{\S_1\left(\Hlmu\right)}
\newcommand{\Honemu}{\mH_{\mu}^{(1)}}
\newcommand{\Honemuh}{\mH_{\hat\mu}^{(1)}}
\newcommand{\Hlpmu}{\mH_{\mu}^{(l+1)}}
\newcommand{\Hlpmuh}{\mH_{\hat \mu}^{(l+1)}}
\newcommand{\SHlpmu}{\S_1\left(\Hlpmu\right)}
\newcommand{\Hlhmu}{\mH_{\hat\mu}^{(l+1)}}
\newcommand{\Hlhmul}{\mH_{\hat\mu}^{(l)}}

\newcommand{\HL}{\widehat \mH^{(L)}} 
\newcommand{\HLplus}{\widehat \mH^{(L+1)}} 
\newcommand{\Hl}{\mH^{(l)}} % l layer function space

\newcommand{\CL}{\widehat C^{(L)}}

\newcommand{\BKLdict}{\mathfrak{M}_L}
\newcommand{\HLdict}{\mH_{\mathfrak{M}_L}^{(L)}}
\newcommand{\CLdict}{C_{\mathfrak{M}_L}^{(L)}}
\newcommand{\BLdictr}{\m{B}_{\mathfrak{M}_L,r}^{(L)}}
\newcommand{\BL}{\m{B}^{(L)}}

\newcommand{\Psph}{\mP_{1,1}}

\newcommand{\eG}{\widehat{\m{G}}_n} % empirical Gaussian complexity
\newcommand{\G}{\m{G}_n} % empirical Gaussian complexity
\newcommand{\DE}{\mb{E}_{\nu_X^n}} % 
\newcommand{\eE}{\mb{E}_{\m{N}^n(0,1)}}% 
\newcommand{\eDE}{\mb{E}_{\m{N}^n(0,1),\nu_X^n}} % 
\newcommand{\E}{\mb{E}} % expectation
\newcommand{\F}{\m{F}} % function space
\newcommand{\B}{\mb{B}} % unit ball of space
\newcommand{\mP}{\m{P}} % probability measures
\newcommand{\mU}{\m{U}} % domain
\newcommand{\mY}{\m{Y}} % output domain
\newcommand{\mX}{\m{X}} % input domain
\newcommand{\mZ}{\m{Z}} % Image of mapp
\newcommand{\mD}{\m{D}} % domain
\newcommand{\mC}{\m{C}} % continuous functions 
\newcommand{\mN}{\m{N}} % normal random variable
\newcommand{\mH}{\m{H}} % RKHS H = H_k
\newcommand{\mM}{\m{M}} % minimizers in RERM
\newcommand{\C}{\mt{C}} % complexity of function class
\newcommand{\mR}{\m{R}} % expected risk
\newcommand{\mE}{\m{E}} % excess risk

\newcommand{\mb}[1]{\mathbb{#1}}
\newcommand{\m}[1]{\mathcal{#1}}

\newcommand{\mm}[1]{\mathrm{#1}}
\newcommand{\mt}[1]{\mathit{#1}}
\newcommand{\BK}{BKL} %BKERNHLs 
\newcommand{\dL}{d^{(L)}}

\renewcommand{\O}{\mathcal{O}} % O()
\renewcommand{\P}{\mathbb{P}} % probability measure P

\DeclareMathOperator*{\argmin}{arg\,min} % argmin
\DeclareMathOperator*{\Span}{Span}

\definecolor{lightgreen1}{rgb}{0.6, 0.8, 0.6} % Light green
\definecolor{pastelgreen}{rgb}{0.6, 1.0, 0.6}

\allowdisplaybreaks

\setenumerate{labelindent=0em,leftmargin=1.7em,topsep=0cm,partopsep=0cm,parsep=0cm,itemsep=1mm}
\setitemize{labelindent=0em,leftmargin=1.3em,topsep=0cm,partopsep=0cm,parsep=0cm,itemsep=1mm}

\newcommand{\figw}{0.36\linewidth}   % Default figure width
\newcommand{\figh}{0.33\linewidth}   % Default figure height

\pgfplotsset{
  compat=1.18, % or your preferred PGFPlots version
  every axis/.append style={
    width=\figw,
    height=\figh,
    xlabel={$u$},
    ylabel={$\ell(u,0)$},
    xmin=-2, xmax=2,
    ymin=0, ymax=4,
    grid=major,
    axis equal image,
    ticklabel style={font=\scriptsize},
    label style={font=\scriptsize},
    title style={font=\scriptsize, yshift=-1ex},
    clip=false,
    scale only axis,
  },
}

\usetikzlibrary{calc,arrows.meta,positioning,backgrounds}

\DeclareMathOperator{\supp}{supp}

\numberwithin{equation}{section}

\hypersetup{
  pdftitle={Brownian Kernel Ladders},
  pdfauthor={Mahdi Mohammadigohari, Giuseppe Di Fatta, Giuseppe Nicosia, Panos M. Pardalos},
  pdfsubject={Hierarchical integral RKHSs and statistical learning theory},
  pdfkeywords={brownian kernel ladders, reproducing kernel Hilbert spaces, hierarchical function spaces, gaussian complexity, statistical learning theory}
}

\begin{document}

\title{Brownian Kernel Ladders}
\titlemark{Brownian Kernel Ladders}
\authormark{M. Mohammadigohari, G. Di Fatta, G. Nicosia, and P. M. Pardalos}

\emsauthor*{1}{
  \givenname{Mahdi}
  \surname{Mohammadigohari}
  \mrid{}
  \zblid{}
  \orcid{}}{M.\ Mohammadigohari}
\emsauthor{2}{
  \givenname{Giuseppe}
  \surname{Di Fatta}
  \mrid{}
  \zblid{}
  \orcid{}}{G.\ Di Fatta}
\emsauthor{3}{
  \givenname{Giuseppe}
  \surname{Nicosia}
  \mrid{}
  \zblid{}
  \orcid{}}{G.\ Nicosia}
\emsauthor{4}{
  \givenname{Panos M.}
  \surname{Pardalos}
  \mrid{}
  \zblid{}
  \orcid{}}{P. M.\ Pardalos}

\Emsaffil{1}{
  \department{Faculty of Engineering}
  \organisation{Free University of Bozen--Bolzano}
  \rorid{}
  \address{Via Bruno Buozzi 1}
  \zip{39100}
  \city{Bolzano}
  \country{Italy}
  \affemail{Mahdi.Mohammadigohari@gmail.com}}
\Emsaffil{2}{
  \department{Faculty of Engineering}
  \organisation{Free University of Bozen--Bolzano}
  \rorid{}
  \address{Via Bruno Buozzi 1}
  \zip{39100}
  \city{Bolzano}
  \country{Italy}
  \affemail{giuseppe.difatta@unibz.it}}
\Emsaffil{3}{
  \department{Department of Biomedical and Biotechnological Sciences}
  \organisation{University of Catania}
  \rorid{}
  \address{Via Santa Sofia 97}
  \zip{95123}
  \city{Catania}
  \country{Italy}
  \affemail{giuseppe.nicosia@unict.it}}
\Emsaffil{4}{
  \department{Center for Applied Optimization, Department of Industrial and Systems Engineering}
  \organisation{University of Florida}
  \rorid{}
  \address{}
  \zip{FL 32611}
  \city{Gainesville}
  \country{USA}
  \affemail{pardalos@ise.ufl.edu}}

\classification[68T05, 68Q32]{46E22}
\keywords{Brownian kernel ladders, reproducing kernel Hilbert spaces, hierarchical function spaces, Gaussian complexity, statistical learning theory}

\begin{abstract}
Constructing hierarchical function spaces that combine analytical tractability, genuinely increasing expressivity with depth, and explicit statistical control remains a central problem at the interface of kernel methods and learning theory. We introduce Brownian kernel ladders (BKLs), a recursive hierarchy of integral reproducing kernel Hilbert spaces generated from linear functionals by repeatedly integrating Brownian pullback kernels indexed by functions from the preceding layer. The nonnegative $1$-homogeneity of the Brownian kernel yields a kernel-preserving canonical spherical normalization of the layer measures and propagates square-root regularity through the hierarchy.

Allowing all canonical ladder measures to vary produces a full adaptive BKL envelope equipped with an infimal complexity functional. For this envelope, we establish depth-dependent H\"older and pointwise estimates, quasi-Banach structure, and nestedness. Under a geometric trace condition, every additional layer strictly enlarges the function class, with a separating function at positive uniform distance from each fixed-radius ball of the preceding level. We also prove existence of regularized empirical-risk minimizers for continuous losses that are uniformly bounded below. Under strict convexity, any population minimizers agree almost everywhere with respect to the input distribution, and agree pointwise when that distribution has full support.

For statistical estimation, we study one realized ladder and finite dictionaries of canonical ladders fixed independently of the estimation sample. For a dictionary of $M$ ladders, the Gaussian complexity of the union of radius-$r$ top-layer RKHS balls has the standard $n^{-1/2}$ dependence, no explicit ambient-dimension factor, and the explicit model-selection factor $1+\sqrt{2\ln M}$. We derive corresponding high-probability penalized oracle and excess-risk bounds for radius-constrained controlled learning; polynomial-size dictionaries retain a near-parametric rate. The resulting theory provides a two-level foundation for hierarchical kernel learning: an adaptive envelope for representational structure and controlled realized families for statistically accountable estimation.

\end{abstract}

\maketitle
\thispagestyle{plain}

\section{Introduction}\label{sec:intro}

Deep neural networks organize computation hierarchically and achieve remarkable empirical success across a broad range of applications \cite{lecun2015deep,goodfellow14generative}. From a function-space perspective, this motivates recursive constructions in which new hypothesis classes are generated layer-by-layer from preceding ones. Reproducing kernel Hilbert spaces (RKHSs) offer analytical transparency, flexible nonparametric approximation, and a mature statistical theory, but a single RKHS does not itself provide a depth-indexed hierarchy whose regularity and expressivity can be compared across levels. A central problem is therefore to construct recursive function spaces that remain mathematically explicit, become genuinely richer with depth, and permit a precise account of the statistical cost of choosing a representation.

In this work, we introduce \emph{Brownian kernel ladders} (BKLs), a hierarchy of recursive integral RKHSs driven by the Brownian kernel
\begin{align*}
\kB\left(x,x'\right)
=
\frac{|x|+|x'|-|x-x'|}{2},
\qquad
x,x'\in\R.
\end{align*}
The first layer consists of linear functionals. At every subsequent layer, functions from the preceding RKHS index Brownian pullback kernels, which are integrated against a probability measure to define the next kernel and hence the next RKHS. Depth is therefore encoded directly in the function-space construction rather than through a prescribed finite-dimensional network parameterization. The Brownian kernel is especially well suited to this recursion: its nonnegative $1$-homogeneity supports kernel-preserving normalization of the layer measures, while its square-root feature geometry propagates explicit H\"older regularity and diagonal bounds through the hierarchy.

The BKL construction naturally gives rise to two complementary mathematical objects. Allowing the complete sequence of canonical ladder measures to vary produces the \emph{full adaptive canonical envelope}; the complexity of a function is the infimum of its top-layer RKHS norm over all admissible ladder realizations. This envelope is the appropriate object for understanding the global geometry, expressivity, and variational structure of the representation family. Statistical estimation raises a different question: once a ladder, or a controlled collection of candidate ladders, has been specified independently of the estimation sample, how large is the resulting model class? This distinction is substantive rather than notational. A supremum over representations placed inside a Gaussian expectation describes unrestricted adaptive selection, whereas a fixed ladder or a finite dictionary incurs a separately identifiable selection cost. BKLs make this representation--estimation separation explicit within one recursive kernel framework.

For the full adaptive envelope, we obtain a structural theory across arbitrary fixed depth. Every depth-$L$ function satisfies a $2^{-(L-1)}$-H\"older estimate and a matching pointwise bound controlled by the infimal BKL complexity. The envelope equipped with this complexity is a quasi-Banach space. The hierarchy is nested, and under a geometric trace condition each additional layer strictly enlarges the function class; moreover, the separating witness remains at positive uniform distance from every fixed-radius ball of the preceding level. We also prove variational well-posedness of regularized empirical-risk minimization over the full envelope for continuous losses that are uniformly bounded below. If the loss is strictly convex in its prediction argument, any population minimizers agree $\nu_X$-almost everywhere, and they agree pointwise when $\nu_X$ has full support.

For statistical estimation, let $\BKLdict$ be a dictionary of $M$ canonical ladders fixed independently of the estimation sample, and let $\BLdictr$ be the union of their radius-$r$ top-layer RKHS balls. We prove
\begin{align*}
\G\left(\BLdictr\right)
\le
\frac{r\left(1+\sqrt{2\ln M}\right)}{\sqrt n}
\left(
\DE\left[
\max_{i\in[n]}\|X_i\|_2^2
\right]
\right)^{2^{-L}}.
\end{align*}
The estimate contains no explicit ambient-dimension factor. It has the standard $n^{-1/2}$ dependence for one realized ladder and charges only the logarithmic factor $1+\sqrt{2\ln M}$ for selection among $M$ candidates. Depth enters through the explicit input-radius factor rather than through a product of layerwise complexity terms. Building on this estimate, we establish existence of radius-constrained controlled estimators, a high-probability penalized oracle inequality, and an excess-risk bound. Polynomial-size dictionaries retain a near-parametric $\widetilde{\m O}(n^{-1/2})$ rate, while uniform statements over a sequence of depths require the corresponding dictionary sizes to remain controlled.

\medskip\noindent\emph{Relation to Hierarchical Kernel and Function-Space Models.}\ 
Kernel methods and deep learning have been connected through deep kernels, convolutional kernels, neural tangent kernels, and other hierarchical constructions \cite{cho2009kernel,wilson2016b,jacot2018neural,mairal2014convolutional,hashimoto2023deep}. Recursive and compositional transformations of kernels have also been studied directly \cite{huang2023hierarchical}. These approaches generally build depth through successive transformations of kernels or features. BKLs instead form a sequence of integral RKHSs in which the indexing variables at one layer are themselves functions from the preceding layer, and the same scalar Brownian kernel supplies the pullback at every step. This makes the effect of depth accessible through the geometry of one explicit base kernel.

Related function-space descriptions of deep models arise in reproducing kernel Banach spaces (RKBSs), chain RKBSs, Barron spaces, and recursive Banach spaces. Deep neural RKBS constructions \cite{bartolucci2024neural} use vector-valued Radon measures and integral representations of the generic form
\begin{align*}
f(x)
=
\int_{\Theta}
\rho(x,\theta)
\,\d\mu(\theta),
\end{align*}
then compose such Banach-space layers to obtain variational formulations and representer theorems for deep architectures. The chain RKBS framework of Heeringa, Spek, and Brune~\cite{heeringa2025deep} organizes depth through recursive kernel chains and establishes correspondences with finite neural networks on finite data sets. Classical Barron-space theory \cite{barron_universal_1993} represents infinitely wide two-layer networks by measures over ridge features, while the recursive Banach spaces of E and Wojtowytsch~\cite{e2020banach} extend the measure-based viewpoint to multilayer ReLU constructions. These theories provide powerful descriptions of neural representations, approximation, and variational structure. The present work addresses a different combination of questions: each realized BKL remains an RKHS, the union over canonical realizations forms a quasi-Banach adaptive envelope, the hierarchy can be proved to grow strictly with depth, and the cost of selecting among realized representations is kept explicit in the statistical analysis.

The shallow BKL kernel connects directly to Brownian kernel-enhanced random neural networks (BKERNNs) \cite{follain2024enhanced}. When $\operatorname{span}(\mX)=\R^d$, the first-layer RKHS is isometrically identified with $\R^d$; for $L=2$ and a measure $\hat\mu$ on its Euclidean unit sphere,
\begin{align*}
k\left[\S^{d-1},\hat\mu\right]\left(\b x,\b x'\right)
=
\int_{\S^{d-1}}
\kB\left(\bomega^\top\b x,\bomega^\top\b x'\right)
\,\d\hat\mu(\bomega),
\end{align*}
which is precisely the Brownian projection kernel used in BKERNNs. This identity is structural. Statistical comparisons additionally depend on whether the projection measure is fixed, selected from a controlled family, or optimized over an unrestricted adaptive class. Our Gaussian-complexity results concern the first two regimes and therefore do not identify their statistical cost with that of unrestricted measure adaptation.

BKLs are also closely related to neural Hilbert ladders (NHLs) \cite{chen2024neural}, which recursively construct integral RKHSs through activation-product kernels of the form
\begin{align*}
k^{(l+1)}(\b x,\b x')
=
\int_{\mH^{(l)}}
\sigma\left(u(\b x)\right)
\sigma\left(u(\b x')\right)
\,\d\mu^{(l)}(u).
\end{align*}
In BKLs, the corresponding layer kernel is
\begin{align*}
k^{(l)}(\b x,\b x')
=
\int_{\mS^{(l)}}
\kB\left(u(\b x),u(\b x')\right)
\,\d\mu^{(l)}(u).
\end{align*}
The Brownian pullback drives both the recursive H\"older structure used in the depth-separation theorem and the diagonal estimates used in the controlled-family complexity bound. More broadly, statistical analyses of recursive models often accumulate depth dependence through covering numbers, pseudo-dimension, norm propagation, or products of layerwise complexity terms \cite{bartlett2002rademacher,wei2019datadependent,anthony1999neural,bartlett2019nearlytight,neyshabur2015norm,bartlett2017spectrally,golowich2018size,zhang2023nearlytight}. The controlled BKL bound avoids such a layerwise product while retaining the explicit cost of choosing among candidate ladders.

\medskip\noindent\emph{Contributions.}\ 
Our main contributions are as follows.
\begin{enumerate}
\item We introduce Brownian kernel ladders, a recursive hierarchy of integral RKHSs generated from linear functionals through Brownian pullback kernels, and prove a kernel-preserving canonical spherical representation of the layer measures.

\item We characterize the full adaptive canonical BKL envelope through depth-dependent H\"older and pointwise bounds, quasi-Banach structure, nestedness, and strict depth growth with ballwise separation under a geometric trace condition.

\item We prove variational well-posedness of regularized empirical-risk minimization over the full adaptive envelope and give the precise almost-everywhere and full-support uniqueness conclusions for population minimizers.

\item We connect the adaptive envelope to realized ladders and finite controlled dictionaries, thereby isolating the order of representation selection and Gaussian averaging within the BKL formalism.

\item We establish Gaussian-complexity bounds for controlled radius classes with no explicit ambient-dimension factor and with the explicit model-selection factor $1+\sqrt{2\ln M}$. The case $M=1$ recovers the standard $n^{-1/2}$ upper-bound dependence for one realized ladder without a selection factor.

\item We derive high-probability penalized oracle and excess-risk bounds for radius-constrained controlled RERM. Polynomial-size dictionaries retain a near-parametric $\widetilde{\m O}(n^{-1/2})$ rate with an explicit representation-selection cost.
\end{enumerate}

The remainder of the paper is organized as follows. \Cref{sec:notations} introduces notation and integral-RKHS preliminaries. \Cref{sec:newNN-architecture} defines individual BKLs, the full adaptive canonical envelope, and controlled dictionaries of realized ladders. \Cref{sec:characterization} establishes the analytical hierarchy. \Cref{sec:stat-analyses} presents adaptive variational well-posedness and the controlled-family Gaussian-complexity, oracle, and excess-risk results. All proofs are deferred to the appendices.

%\section{Related Work}\label{sec:related}
%\input{./parts/related}

%\section{Preliminaries}\label{sec:Prelim}
%In this section we introduce our notations %(Section~\ref{sec:notations}).

Our main theoretical results are gathered in Table~\ref{caption:main-results}; their principal dependencies are shown in Fig.~\ref{fig:dependency_graph}.

\begin{table}
\centering
\begin{tabular}{lll} \toprule
Result & Content & Page\\ \midrule
\Cref{thm:charachteriz_funspace}&  analytical properties of \BK{} spaces & page~\pageref{thm:charachteriz_funspace}\\ 
\Cref{thm:existence_\BK{}}& adaptive RERM existence and population uniqueness & page~\pageref{thm:existence_\BK{}}\\
\Cref{thm:peeling_all_depth_\BK{}}& controlled-family Gaussian complexity & page~\pageref{thm:peeling_all_depth_\BK{}}\\
\Cref{thm:excess-risk-\BK{}}& controlled-family oracle and excess-risk bounds & page~\pageref{thm:excess-risk-\BK{}}\\
\bottomrule
\end{tabular}
\caption{Main theoretical results; for their relation, see Fig.~\ref{fig:dependency_graph}. For our auxiliary results, see Table~\ref{caption:auxiliary-results}.} 
\label{caption:main-results}
\end{table}

\section{Notation} \label{sec:notations}
Below we introduce a few notations: $\N$, $\Zp$, $\Rnn$, $[n]$, $\prod_{i\in I} A_i$, $A^n$, $\I_E$, $\m{O}(\cdot)$, $\tilde{\m{O}}(\cdot)$,  $\mP(\mU)$, $\E$, $\E_\P[X]$, $\P^n$, $\mN(0,1)$, $\delta_x$, $L^p(\mU, \mu)$, $\|\cdot\|_{L^{p}(\mU, \mu)}$, $\mP_1(\mS)$, $\|\mu\|_{\mS,1}$, $\mP_{1,1}(\mS)$,  $\|\b x\|_p$, $(\cdot)^\T$, $\B_r(\mZ)$, $\S_r(\mZ)$, $\S^{d-1}$,  $\mC$, $\left\|f\right\|_{\infty}$, $\mH_k$, $k_{\b x}$, $\Span(\cdot)$, $\overline{S}$, $\kB$, $\mH_{\oplus}$, $\langle \cdot, \cdot \rangle_{\mH_{\oplus}}$. 
 
The set of natural numbers is denoted by $\N\coloneqq\{0,1,2,\ldots\}$; $\Zp \coloneqq \{1,2,\ldots\}$ stands for positive integers; the set of non-negative reals is $\Rnn$. For $n\in \Zp$, let $[n] \coloneqq \{1,\ldots, n\}$. The Cartesian product of sets $(A_i)_{i\in I}$ is written as $\prod_{i\in I} A_i$; when the cardinality of the index set $I$ is finite ($n\coloneq |I|$) and $A:=A_1=\ldots=A_n$, we use the shorthand $A^n$. For a set $E$, its indicator function is $\I_{E}$:
$\I_{E}(x) = 1$ if $x\in E$ and $\I_{E}(x) = 0$ otherwise. For positive sequences $(a_n)_{n=1}^{\infty}$ and $(b_n)_{n=1}^{\infty}$, we write $a_n = \O(b_n)$ if there exist $C>0$ and $n_0\in \Zp$ such that for all $n\ge n_0$ one has $a_n \le Cb_n$; $\tilde{\O}(a_n)$ coincides with  $\O(a_n)$ apart from discarding polylogarithmic factors of $n$, in other words $\tilde{\O}(a_n)=\O(a_n \mathrm{polylog}(n))$. Let $\mU$ be a topological space enriched with the Borel sigma-field; we denote the space of probability measures on $\mU$ by $\mP(\mU)$. 
Expectation is denoted by $\E$, and $\E_\P[X] \ =\int_{\mX} x\d \P(x)$, where $X \sim \P \in \mP(\mX)$.
The $n$-fold product of a probability measure $\P\in \mP(\mU)$ is denoted by $\P^n$. The standard normal distribution on $\R$ with mean $0$ and variance $1$
is denoted by $\mN(0,1)$. The Dirac measure concentrated at point $x$ is denoted by $\delta_x$. For a probability measure $\mu\in \mP(\mU)$, $p \in [1,\infty)$, the Banach space of real-valued $L^p$-integrable functions on $\mU$ w.r.t.\ $\mu$ is denoted by $L^p(\mU, \mu)$, and for $g \in L^p(\mU, \mu)$, we write $\|g\|_{L^p(\mU, \mu)} \coloneqq \left[\int_{\mU} |g(u)|^p\d \mu(u)\right]^{1/p}$. Let $\mS$ be a measurable subset of a Hilbert space $\mH$. We denote the class of Borel probability measures on $\mS$ with finite first moment by $\mP_1(\mS)
\coloneqq
\left\{
\mu\in\mP(\mS): \|\mu\|_{\mS,1}<\infty
\right\}$, where
\begin{align}
\|\mu\|_{\mS,1} &\coloneqq \int_{\mS}\|u\|_{\mH}\d\mu(u).\label{eq:p-moment of mu over subset}
\end{align}
The set of all Borel probability measures on
$\mS$ with first moment equal to one is denoted by $\mP_{1,1}(\mS)
\coloneqq
\left\{
\mu\in\mP(\mS): \|\mu\|_{\mS,1} = 1
\right\}$. For a vector $\b x \in \R^d$ and $p\in [1,\infty)$, let $\|\b x\|_p \coloneqq \left(\sum_{i\in [d]} |x_i|^p\right)^{\frac{1}{p}}$; $\| \b x\|_\infty \coloneqq \max_{i \in [d]}|x_i|=\lim_{p\rightarrow \infty} \|\b x\|_{p}$. The transpose of a vector $\b x\in \R^d$ is written as $\b x^\T\in\R^{1\times d}$. A functional $\|\cdot\|_\mU : \mU \to [0, \infty)$ is referred to as quasi-norm on a vector space $\mU$ with a constant $K \geq 1$ if it satisfies the following 3 properties. (i) Positive definiteness: for all $f \in \mU$, $\|f\|_\mU > 0$ whenever $f \neq 0$. (ii) Absolute homogeneity: for all $f \in \mU$ and all $c \in \R$, $\|c f\|_\mU = |c| \|f\|_\mU$. (iii) Modified triangle inequality: for all $f, g \in \mU$,
\begin{align}
\|f + g\|_\mU \leq K (\|f\|_\mU + \|g\|_\mU). \label{eq:modified-triangle-ineq}
\end{align}
In other words, the quasi-norm is similar to a norm, but it relaxes the triangle inequality (where one would have $K=1$).\footnote{In \eqref{eq:modified-triangle-ineq}, $K$ is referred to as the modulus of concavity of $\mU$.} A vector space enriched with a quasi-norm is called quasi-normed space, and a quasi-Banach space is a quasi-normed space, where Cauchy sequences are convergent (meant in the sense of the quasi-norm of the space).\footnote{For instance, for $0<p\le1$ the space $\ell^p := \{x=(x_n)_{n\ge1} %\subset \R 
\in \R^{\Zp}: \|x\|_p \coloneq\left(\sum_{n=1}^\infty |x_n|^p\right)^{\frac{1}{p}} < \infty\}$ 
with quasi-norm  $\|\cdot\|_p$ is a quasi-Banach space with constant $K_p = 2^{\frac{1}{p}-1}$; for $p=1$, $\ell^p$ is a Banach space. \label{footnote:lp-as-quasi-B-space}} Let $\mZ$ be a quasi-Banach space. The ball and sphere in $\mZ$ with center zero and radius $r$ are defined by $\B_{r}(\mZ) = \{z \in \mZ \,:\, \|z\|_{\mZ} \leq r\}$ and $\S_{r}(\mZ) = \{z \in \mZ \,:\, \|z\|_{\mZ} = r\}$, respectively. We use the shorthand $\S^{d-1} \coloneqq\S_1\left(\left(\R^d,\|\cdot\|_2\right)\right)$.  A function $f :  \R^{2} \to \R$ is called non-negatively $1$-homogeneous if for all $x,y  \in \R$ and for all $a\in \Rnn$ it holds that $f(ax,ay) = a f(x, y)$. Throughout the paper, we assume that our input domain is a compact set $\mX \subset \R^d$, and it is fixed. Let $\mC=\mC(\mX)$ be the space of continuous real-valued functions on $\mX$ equipped with the supremum norm $\|f\|_{\infty}\coloneqq\sup_{\b x\in \mX}|f(\b x)|$ and the associated Borel sigma-algebra. 

A kernel is a symmetric function $k : \mX \times \mX \to \R$ such that for all $n \in \Zp$,  all $(\b x_1, \ldots, \b x_n) \in \mX^n$, and all $(\alpha_1, \ldots, \alpha_n) \in \R^n$, $\sum_{i,j \in [n]} \alpha_i k(\b x_i,\b x_j ) \alpha_j \ge 0$ holds. There is a one-to-one correspondence between kernels and reproducing kernel Hilbert spaces (RKHSs), where the  latter is a Hilbert space of functions $\mH_k \subseteq \R^{\mX}$ 
such that $k_\b x \coloneqq k(\cdot,\b x) \in \mH_k$ for all $\b x \in \mX$ and $h(\b x) = \langle h, k_\b x\rangle_{\mH_k}$
for all $(h, \b x) \in \mH_k \times \mX$; the first property describes the building blocks of the space, the second is called the reproducing property.\footnote{The function $k(\cdot,\b x)$ stands for $\b x' \in \mX \mapsto k(\b x',\b x) \in \R$ while keeping $\b x \in \mX$ fixed.} It is known that $\mH_k = \overline{\Span(k_\b x \,:\, x\in \mX)}$, where $\Span(\cdot)$ stands for the linear hull of its argument and $\overline{\cdot}$ denotes closure. The Brownian kernel is
\begin{align}
\kB(x,x') = \frac{|x| + |x'| - |x - x'|}{2} \quad (x,x'\in \R); \label{def:BK-kernel}
\end{align}
it is non-negatively 1-homogeneous.\footnote{Indeed, for any $x,x'\in \R$ and $a\in \Rnn$ one has $\kB(ax,ax') = \frac{|ax|+|ax'|-|ax-ax'|}{2} = |a| \frac{|x|+|x'|-|x-x'|}{2}=a \frac{|x|+|x'|-|x-x'|}{2}=a \kB(x,x')$.} Let us be given (i) a set $\mD$, (ii) an index set $\Omega$ which is a topological space enriched with a Borel probability measure $\mu\in \mP(\Omega)$, (iii) for each $\omega \in \Omega$ a kernel $k^{(\omega)}$: $\mD \times \mD \to \R$ with its associated RKHS  $\mH_{k^{(\omega)}}$,  and assume that
\begin{align} 
\int_{\Omega}k^{(\omega)}(x,x) \d \mu(\omega) &<\infty \quad \text{for all}\,\, x \in \mD. \label{eq:int-kernel-condition}
\end{align}
By denoting 
\begin{align}
\mH_{\oplus} &= \left\{ \left(f_{\omega}\right)_{\omega \in \Omega} \in \prod_{\omega \in \Omega} \mH_{k^{(\omega)}}\, :\,  \int_{\Omega} \|f_{\omega}\|_{\mH_{k^{(\omega)}}}^2 \d\mu(\omega) < \infty \right\}, \label{eq:Hoplus}
\end{align}
enriched with the inner product $\langle f, g \rangle_{\mH_{\oplus}} = \int_{\Omega} \langle f_{\omega}, g_{\omega} \rangle_{\mH_{k^{(\omega)}}} \d\mu(\omega)$, one can show \cite[Theorem 3.1]{Hotz2012RepresentationBI} that (i) $\mH_{\oplus}$ is a Hilbert space, (ii) 
\begin{align}
\mH_k &= \left\{f:\mD \to \R \,:\, f(x) = \int_{\Omega}f_{\omega}(x)\d \mu(\omega) \,\, \forall x \in \mD,  \t{ with } \left(f_{\omega}\right)_{\omega \in \Omega} \in    \mH_{\oplus} \right\}, \label{eq:Hk-elements}
\end{align}
is an RKHS (called integral RKHS) with kernel
%\begin{align}
$k(x,y) = \int_{\Omega} k^{(\omega)} (x, y) \d \mu (\omega)$ $(x,y\in \mD)$
%\end{align}
and (iii) the norm in $\mH_{k}$ is given by
\begin{align}
\|f\|_{\mH_{k}}^2 &= \inf_{g \in \mH_{\oplus} : f(x) = \int_{\Omega}g_{\omega}(x)\d \mu(\omega),\, \forall x \in \mD} \int_{\Omega} \|g_{\omega}\|_{\mH_{k^{(\omega)}}}^2 \d\mu(\omega). \label{eq:intRKHSnorm}
\end{align}

\section{Brownian Kernel Ladders and Their Complexity} \label{sec:newNN-architecture}

\begin{table}
\centering
\begin{tabular}{ccll} \toprule
Symbol & Name & Defined in\\ \midrule
$\LHmu$ & $L$-level \BK{} associated to $\sSmu$ and $\lmu$ & Def.~\ref{def:\BK{}def}\\ 
$\Hlmu$ & $l$-th layer of $\LHmu$ & Def.~\ref{def:\BK{}def}\\
$\LHmuh$ & $L$-level \BK{} associated to $\left(\SHlmu\right)_{l=1}^{L-1}$ and $\lmuh$ & \eqref{eq:H_{hat-mu}^{(l)}}\\
$\Hlhmul$ &  $l$-th layer of $\LHmuh$ &\eqref{eq:H_{hat-mu}^{(l)}}\\ \midrule
$\LHh$ & full family of $L$-level canonical \BK{}s &\eqref{def:unitsphere\BK{}}\\
$\CL$ & adaptive $L$-complexity & \eqref{\BK{}:unitsphereladder-complexity}\\
$\HL$ & adaptive canonical top-layer envelope & \eqref{def:\BK{}-unitsphereladder-space} \\ \midrule
$\BKLdict$ & controlled dictionary of canonical \BK{}s & Def.~\ref{def:controlled-BKL-family}\\
$\HLdict$ & controlled top-layer class & \eqref{eq:controlled-BKL-top-layer}\\
$\CLdict$ & controlled $L$-complexity & \eqref{eq:controlled-BKL-complexity}\\
$\BLdictr$ & controlled radius-$r$ class & \eqref{eq:controlled-BKL-ball}\\ \bottomrule
\end{tabular}
\caption{Notation for the BKL function spaces and complexity functionals.}
\label{caption:defs}
\end{table}

We introduce in \Cref{sec:\BK{}} our novel kernel-based hierarchical function class called an $L$-level Brownian kernel ladder ($\LHmu$), and then use canonical ladders to define both the full adaptive envelope $\HL$ and controlled families of realized ladders in \Cref{sec:complexity-meaures}. The full envelope is the object of our analytical and variational results, whereas the controlled families provide the statistical model classes used in the generalization analysis. The definitions of this section are summarized in \Cref{caption:defs} for the reader's convenience.

\subsection{Brownian Kernel Ladder} \label{sec:\BK{}}
Next we present our proposed kernel-based deep composition of functions. We start with a construction to generate a specific integral RKHS given
(i) a measurable subset $\mS$ of an RKHS $\mH \subseteq \R^{\mX}$ defined 
on the compact input domain $\mX\subset \R^d$, (ii) a Borel probability measure with finite first moment $\mu \in \mP_1(\mS)$, and (iii) the Brownian kernel $\kB$ [as defined in \eqref{def:BK-kernel}]. The $L$-level application of this procedure (initialized with the linear kernel) will give rise to an $L$-level \BK{} $\LHmu$. This is the design  we detail next.

Let the domain $\mX$, measurable subset $\mS$ of  RKHS $\mH$, probability measure $\mu$ and Brownian kernel $\kB$ as specified above, and let a function $k\left[\mS,\mu\right] \colon \mX \times \mX \to \R$ associated to the pair $(\mS,\mu)$ [in the tuplet $\left(\mX,\mS,\mu, \kB\right)$, $\mX$ and $\kB$ are kept fixed] be defined as 
\begin{align}
    k\left[\mS,\mu\right](\b x,\b x') = \int_{\mS} \underbrace{\kB\left(u(\b x),u\left(\b x'\right)\right)}_{=:\ku\left(\b x,\b x'\right)}\d \mu(u). \label{eq:BK_def}
\end{align}
By the choice $\left(\Omega, \omega, k^{(\omega)}(x,y), \mD, \mu\right) \leftarrow \left(\mS,u,\ku(\b x,\b y), \mX, \mu\right)$ in the integral kernel construction of Section~\ref{sec:notations}, one can show (\Cref{lem:BK}) that condition \eqref{eq:int-kernel-condition} reading in our case as 
\begin{align}
\int_{\mS}\ku(\b x, \b x) \d \mu(u) &<\infty,\,\,\t{for all}\,\,\b x\in \mX, \label{eq:integral-RKHS-condition}
\end{align}
holds, hence $k\left[\mS,\mu\right]$ is a well-defined integral kernel. In our paper, we will use this construction with $\mS = \mH$ and $\mS = \S_1(\mH)$.

Having specified the kernel $k\left[\mS,\mu\right]$, we continue with the definition of the proposed $L$-level \BK{} $\LHmu$ associated to a sequence of measurable subsets of RKHSs $\sSmu:= \left(\Slmu\right)_{l=1}^{L-1}$ and probability measures $\lmu  :=\left(\mu^{(l)}\right)_{l=1}^{L-1}$.

\begin{definition}[$L$-level Brownian kernel ladder]
\label{def:\BK{}def}
Let the level $L\ge 2$ be fixed.
Define the linear kernel
\begin{align}
k^{(0)}(\b x,\b x')
\coloneqq
\b x^\top\b x',
\qquad
\b x,\b x'\in\mX,
\end{align}
and set
\begin{align}
\Honemu
\coloneqq
\mH_{k^{(0)}}.
\end{align}
Equivalently, $\Honemu$ consists of the restrictions to $\mX$ of the linear functionals
$\b x\mapsto\bomega^\top\b x$ with $\bomega\in\R^d$, and it is equipped with the RKHS norm induced by $k^{(0)}$.
For each $l \in [L-1]$, consider a measurable subset $\Slmu \subseteq \Hlmu$ and 
let $\mu^{(l)} \in \mP_1\left(\Slmu\right)$ be a probability measure on $\Slmu$  with finite first moment. 
Form the sequence of integral RKHSs defined recursively as
\begin{align}
k^{(l)} \coloneqq k\left[\Slmu,\mu^{(l)}\right]
\quad \text{defined according to } \eqref{eq:BK_def},
\qquad 
\mH_{\mu}^{(l+1)} \coloneqq \mH_{k^{(l)}},
\quad l\in[L-1].
\end{align}
We refer to the resulting sequence 
\begin{align}
\LHmu \coloneqq \left(\Hlmu\right)_{l=1}^L
\end{align}
as the $L$-level Brownian kernel ladder (\BK) associated to $\sSmu$ and $\lmu$, 
with elements $\lf\coloneqq\left(f^{(l)}\right)_{l=1}^L \in \LHmu$, 
and we call $\Hlmu$ the $l$-th layer of $\LHmu$.\footnote{Notice that 
$\Hlmu$ depends on $\slSmu\coloneq \left(\Simu\right)_{i=1}^{l-1}$ and $\mu^{(1:l-1)}\coloneq \left(\mu^{(i)}\right)_{i=1}^{l-1}$; we use the shorthand $\Hlmu := \Hl_{\slSmu,\mu^{(1:l-1)}}$ to keep the notation light.}
\end{definition}

\begin{assumption}[normalization] \label{assumption:normalizing}
For all $l \in [L-1]$, $\Slmu = \Hlmu$ and $\mu^{(l)} \in \Psph\left(\Hlmu\right)$.
\end{assumption}

\begin{remark}~\label{rem:BKL_def}
\begin{enumerate}
        \item Design of $\LHmu$: 
            The definition of \BK{} can be interpreted as follows. Starting at the first level with linear functionals ($\Honemu$), at each level $l\in[L-1]$ one selects a measurable subset $\Slmu \subseteq \Hlmu$ and a probability measure on $\Slmu$ with finite first moment $\mu^{(l)}\in \mP_1\left(\Slmu\right)$. This subset-measure pair $\left(\Slmu,\mu^{(l)}\right)$ is used to construct the integral kernel $k^{(l)}$ via \eqref{eq:BK_def} by applying the substitution 
            \begin{align}
            (\mH,\mS,\mu)\ \leftarrow\ \left(\Hlmu,\,\Slmu,\,\mu^{(l)}\right),
            \end{align}
            which determines the next-level integral RKHS $\Hlpmu = \mH_{k^{(l)}}$.
    
      \item Choice of $\Honemu$: The first layer is the RKHS of the linear kernel $k^{(0)}(\b x,\b x')=\b x^\top\b x'$. This choice supplies the base case in the verification of \eqref{eq:integral-RKHS-condition} (\Cref{lem:BK}) and fixes the first-layer norm unambiguously. 
      
 \item Canonical form of \BK{} in case of \Cref{assumption:normalizing}: Supposing \Cref{assumption:normalizing}, one can show (Lemma~\ref{lem:measure_constraint_sphere})   that there exists a probability measure on the unit sphere of $\Hlmu$ with first moment equal to one
      $\hat\mu^{(l)}\in \Psph\left(\SHlmu\right)$ for $l \in [L-1]$
      such that for the associated integral kernel 
      \begin{align}
      \hat k^{(l)} = k\left[\SHlmu,\,\hat\mu^{(l)}\right], \label{eq:hat-k-l}
      \end{align}
      one has $\hat k^{(l)} = k^{(l)}$ for $l \in [L-1]$. Specifically this means that 
      \begin{align}
      \LHmu = \LHmuh \eqcolon \left(\Hlhmul\right)_{l=1}^L, \label{eq:H_{hat-mu}^{(l)}}
      \end{align}
      where $\Hlhmu=\mH_{\hat{k}^{(l)}}$ for $l \in [L-1]$ and $\mH_{\hat{\mu}}^{(1)}\coloneq \mH_{\mu}^{(1)}$. In other words, under \Cref{assumption:normalizing} one can assume without loss of generality that $\Slmu=\SHlmu$ for $l\in [L-1]$, while  preserving the associated kernels and integral RKHSs in \BK. 
      \label{rem:BKL_def_canonical_form}
      
      \item \emph{Connections to BKERNNs and NHLs.}
      The Brownian kernel-enhanced random neural network (BKERNN) construction of Follain and Bach~\cite{follain2024enhanced} is closely related to BKLs. The same is true of the neural Hilbert ladder (NHL) framework of Chen~\cite{chen2024neural}. In the specific case $L=2$, suppose that $\operatorname{span}(\mX)=\R^d$. Then the feature map
\begin{align}
\bomega
\longmapsto
\left(\b x\mapsto\bomega^\top\b x\right)
\end{align}
identifies $\Honemu$ isometrically with $\R^d$. Choosing the first-layer support to be the corresponding unit sphere, the integral kernel in the \BK{} construction [see \eqref{eq:BK_def}] takes the form
\begin{align}
k\left[\S^{d-1},\hat\mu\right]\left(\b x,\b x'\right)
&=
\int_{\S^{d-1}}
\kB\left(
\bomega^\top\b x,
\bomega^\top\b x'
\right)
\,\d\hat\mu\left(\bomega\right),
\end{align}
where $\hat\mu\in\Psph\left(\S^{d-1}\right)$. This is the same kernel as the one used in BKERNNs. $L$-level ($L\ge 2$) NHLs and BKLs share a similar hierarchical construction, with the former applying the integral kernel
\begin{align}
        k\left[\mH,\mu\right](\b x,\b x') &= \int_{\mH}\sigma(u(\b x))\sigma(u(\b x'))\d \mu(u), \label{eq:NHL}
\end{align}
instead of \eqref{eq:BK_def}, with $\sigma$ representing a Lipschitz activation function.
The function-space consequences of these connections depend on how the ladder measures are treated statistically. A ladder may be fixed in advance, selected from a controlled family, or optimized over the full adaptive canonical envelope. We distinguish these regimes explicitly in \Cref{sec:adaptive-controlled-BKL-models}; statistical comparisons are made only between classes with matching ladder-selection rules.\label{item:BKERNN-NHL-connections}
     
\end{enumerate}
\end{remark} 
Using the hierarchical structure of $\LHmu$, we now distinguish the full adaptive canonical envelope from controlled families of realized ladders. This separation retains the original measure-adaptive construction for the structural theory while making the statistical cost of ladder selection explicit.

\subsection{Adaptive Canonical \BK{} Envelope and Complexity} \label{sec:complexity-meaures}
We first define the full adaptive class $\HL$ and its infimal complexity $\CL$. Realized ladders and controlled finite dictionaries, which will serve as the statistical model classes, are introduced in \Cref{sec:adaptive-controlled-BKL-models}. The definitions below follow the order in \Cref{caption:defs}, while noting that the first four lines of the table were introduced in \Cref{sec:\BK{}}.

Let $\mX$, $L$, and $d$ be fixed, and let \Cref{assumption:normalizing} hold. A sequence
$\lmuh=\left(\hat\mu^{(l)}\right)_{l=1}^{L-1}$ is called \emph{recursively admissible} if, starting from
$\Honemuh\coloneqq\Honemu$, one has for every $l\in[L-1]$,
\begin{align}
\hat\mu^{(l)}
&\in
\mP_{1,1}\left(
\S_1\left(
\Hlmuh
\right)
\right),\\
\hat k^{(l)}
&\coloneqq
k\left[
\S_1\left(
\Hlmuh
\right),
\hat\mu^{(l)}
\right],\\
\Hlhmu
&\coloneqq
\mH_{\hat k^{(l)}}.
\end{align}
Varying over all recursively admissible sequences gives the family
\begin{align}
\LHh
\coloneqq
\left\{
\LHmuh:
\lmuh\text{ is recursively admissible}
\right\}.
\label{def:unitsphere\BK{}}
\end{align}
We call \(\LHh\) the family of $L$-level canonical BKLs.

\medskip
\noindent
\emph{Complexity of the top-layer elements of $\LHh$.}\ 
Assume now that $f$ lies in the top layer of some $\LHmuh$, that is, $f\in \HLmuh$ for some $\LHmuh \in \LHh$. We define the $L$-complexity of $f$ as the infimum over all such realizations:
\begin{align}
    \CL(f) \coloneqq \inf_{\LHmuh\in \LHh\,:\, f\in \HLmuh}\|f\|_{\HLmuh} \in [0,\infty).\label{\BK{}:unitsphereladder-complexity}
\end{align}
%where the finiteness of $\CL(f)$ follows from the fact that $f\in \HLmuh$.

\medskip
\noindent
\emph{Full adaptive top-layer envelope $\HL$.}\ 
Finally, we define the space $\HL$ as the collection of $L$-th layers of canonical \BK{}s $\LHmuh$ as follows
\begin{align}
\HL \coloneqq \left\{ f \in \HLmuh : \LHmuh \in \LHh,\,\,\CL(f) < \infty \right\} = \bigcup_{\substack{\LHmuh \in \LHh}} \HLmuh. \label{def:\BK{}-unitsphereladder-space}
\end{align}
We refer to $\HL$ as the \emph{full adaptive canonical \BK{} envelope}: the complete sequence of canonical ladder measures is allowed to vary in both \eqref{def:unitsphere\BK{}} and the infimum in \eqref{\BK{}:unitsphereladder-complexity}. The analytical and variational results below concern this full envelope. For the statistical results, we use the following realized and controlled subfamilies.

\subsection{Realized and Controlled Canonical \BK{} Models}
\label{sec:adaptive-controlled-BKL-models}

\begin{definition}[realized and controlled canonical \BK{} models]
\label{def:controlled-BKL-family}
Let $M\in\Zp$. For each $m\in[M]$, let
$\hat\mu_m^{(1:L-1)}=\left(\hat\mu_m^{(l)}\right)_{l=1}^{L-1}$ generate a canonical ladder
$\mH_{\hat\mu_m}^{(1:L)}\in\LHh$. We call
\begin{align}
\BKLdict
\coloneqq
\left\{
\mH_{\hat\mu_m}^{(1:L)}:m\in[M]
\right\}
\subseteq
\LHh
\label{eq:controlled-BKL-dictionary}
\end{align}
a controlled dictionary of canonical \BK{}s and define its top-layer class by
\begin{align}
\HLdict
\coloneqq
\bigcup_{m=1}^{M}
\mH_{\hat\mu_m}^{(L)}.
\label{eq:controlled-BKL-top-layer}
\end{align}
For $f\in\HLdict$, define the controlled $L$-complexity
\begin{align}
\CLdict(f)
\coloneqq
\min_{\substack{m\in[M]\\ f\in\mH_{\hat\mu_m}^{(L)}}}
\left\|f\right\|_{\mH_{\hat\mu_m}^{(L)}}.
\label{eq:controlled-BKL-complexity}
\end{align}
For $r>0$, the associated controlled radius-$r$ class is
\begin{align}
\BLdictr
&\coloneqq
\left\{
 f\in\HLdict:
 \CLdict(f)\le r
\right\}
\nonumber\\
&=
\bigcup_{m=1}^{M}
\B_r\left(\mH_{\hat\mu_m}^{(L)}\right).
\label{eq:controlled-BKL-ball}
\end{align}
When $M=1$, these definitions reduce to one fixed, realized canonical \BK{} and its top-layer RKHS ball.
\end{definition}

Since the dictionary is finite, the minimum in \eqref{eq:controlled-BKL-complexity} is attained. Unlike $\HL$, the class $\HLdict$ need not be a vector space, and $\CLdict$ is not asserted to be a quasi-norm; it is a model-selection complexity on a controlled statistical class.

\begin{prop}[relation between the adaptive and controlled models]
\label{prop:controlled-BKL-bridge}
\begin{align}
\mH_{\hat\mu_m}^{(L)}
\subseteq
\HLdict
\subseteq
\HL,
\qquad m\in[M].
\label{eq:controlled-BKL-inclusions}
\end{align}
Moreover,
\begin{align}
\CL(f)
&\le
\CLdict(f),
&& f\in\HLdict,
\label{eq:controlled-BKL-complexity-lower}\\
\CLdict(f)
&\le
\left\|f\right\|_{\mH_{\hat\mu_m}^{(L)}},
&& f\in\mH_{\hat\mu_m}^{(L)},\quad m\in[M].
\label{eq:controlled-BKL-complexity-upper}
\end{align}
Consequently, for every $r>0$ and $m\in[M]$,
\begin{align}
\B_r\left(\mH_{\hat\mu_m}^{(L)}\right)
\subseteq
\BLdictr
\subseteq
\left\{
 f\in\HL:
 \CL(f)\le r
\right\}.
\label{eq:controlled-BKL-ball-inclusions}
\end{align}
\end{prop}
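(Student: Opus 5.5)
The whole proposition reduces to unwinding the definitions \eqref{eq:controlled-BKL-dictionary}--\eqref{eq:controlled-BKL-ball}, \eqref{\BK{}:unitsphereladder-complexity} and \eqref{def:\BK{}-unitsphereladder-space}; no analytic input beyond them is needed. I will prove \eqref{eq:controlled-BKL-inclusions} first, then the two complexity inequalities, and finally deduce \eqref{eq:controlled-BKL-ball-inclusions} from these.

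\emph{Step 1: the inclusions \eqref{eq:controlled-BKL-inclusions}.} The first inclusion $\mH_{\hat\mu_m}^{(L)}\subseteq\HLdict$ holds because $\HLdict$ is defined in \eqref{eq:controlled-BKL-top-layer} as the union over $m\in[M]$ of exactly these spaces. For the second inclusion, Definition~\ref{def:controlled-BKL-family} requires each $\mH_{\hat\mu_m}^{(1:L)}$ to belong to $\LHh$, that is, each $\hat\mu_m^{(1:L-1)}$ is recursively admissible. Hence $\mH_{\hat\mu_m}^{(L)}$ is one of the sets in the union on the right of \eqref{def:\BK{}-unitsphereladder-space}, so it lies in $\HL$. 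Taking the union over $m$ gives $\HLdict\subseteq\HL$.

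\emph{Step 2: the complexity inequalities.} Fix $f\in\HLdict$. Because the index set is finite and nonempty, the minimum in \eqref{eq:controlled-BKL-complexity} is attained at some $m^\ast$ with $f\in\mH_{\hat\mu_{m^\ast}}^{(L)}$. The ladder $\mH_{\hat\mu_{m^\ast}}^{(1:L)}\in\LHh$ is then an admissible competitor in the infimum \eqref{\BK{}:unitsphereladder-complexity}. It follows that
\begin{align}
\CL(f)\le\left\|f\right\|_{\mH_{\hat\mu_{m^\ast}}^{(L)}}=\CLdict(f),
\end{align}
which is \eqref{eq:controlled-BKL-complexity-lower}. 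For \eqref{eq:controlled-BKL-complexity-upper}, any $m$ with $f\in\mH_{\hat\mu_m}^{(L)}$ is itself an index over which the minimum is taken, so the minimum is at most $\|f\|_{\mH_{\hat\mu_m}^{(L)}}$.

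\emph{Step 3: the ball inclusions \eqref{eq:controlled-BKL-ball-inclusions}.} Let $f\in\B_r\left(\mH_{\hat\mu_m}^{(L)}\right)$. Step 1 gives $f\in\HLdict$, and \eqref{eq:controlled-BKL-complexity-upper} gives $\CLdict(f)\le r$, so $f\in\BLdictr$. Now let $f\in\BLdictr$. Step 1 gives $f\in\HL$, and \eqref{eq:controlled-BKL-complexity-lower} gives $\CL(f)\le\CLdict(f)\le r$.

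\emph{Main obstacle.} The only point needing care is in Step 2: the infimum defining $\CL$ must range over a set that actually contains the dictionary ladder realizing $f$. This is exactly the requirement $\BKLdict\subseteq\LHh$, which is built into Definition~\ref{def:controlled-BKL-family}, so the obstacle is cleared by citing that definition.
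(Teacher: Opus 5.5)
Your proof is correct and follows the same route as the paper. Both arguments simply unwind the definitions, using $\BKLdict\subseteq\LHh$ for the second inclusion and to compare the infimum defining $\CL$ with the finite minimum defining $\CLdict$; the ball inclusions then follow immediately.
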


\begin{proof}
The first inclusion in \eqref{eq:controlled-BKL-inclusions} follows from the union in \eqref{eq:controlled-BKL-top-layer}. The second follows from $\BKLdict\subseteq\LHh$ and the definition of $\HL$ in \eqref{def:\BK{}-unitsphereladder-space}. Since $\CL$ takes the infimum over all canonical ladders whereas $\CLdict$ minimizes over the finite subfamily $\BKLdict$, \eqref{eq:controlled-BKL-complexity-lower} follows. Evaluating that minimum at any admissible index $m$ gives \eqref{eq:controlled-BKL-complexity-upper}. The ball inclusions are immediate consequences.
\end{proof}

\begin{remark}[order of ladder selection and Gaussian averaging]
\label{rem:ladder-selection-order}
In the statistical results, $\BKLdict$ is fixed independently of the estimation sample and of the auxiliary Gaussian or Rademacher variables. It may be deterministic, or it may be constructed from an independent pilot sample, in which case the statistical statements are applied conditionally on that pilot sample. For a fixed sample $\left(\b x_i\right)_{i=1}^{n}$, define
\begin{align}
Z_{\hat\mu}(\epsilon)
\coloneqq
\sup_{f\in\B_r\left(\mH_{\hat\mu}^{(L)}\right)}
\frac{1}{n}
\sum_{i=1}^{n}
\epsilon_i f(\b x_i).
\end{align}
For a fixed ladder, the relevant quantity is $\E_{\epsilon}Z_{\hat\mu}(\epsilon)$. For a controlled dictionary, it is $\E_{\epsilon}\left[\max_{m\in[M]}Z_{\hat\mu_m}(\epsilon)\right]$, which incurs a finite model-selection cost. By contrast, unrestricted adaptive selection places the supremum over all canonical ladders inside the Gaussian expectation. In particular,
\begin{align}
\sup_{\LHmuh\in\LHh}
\E_{\epsilon}Z_{\hat\mu}(\epsilon)
\qquad\text{and}\qquad
\E_{\epsilon}
\left[
\sup_{\LHmuh\in\LHh}
Z_{\hat\mu}(\epsilon)
\right]
\end{align}
are different quantities in general. This quantifier distinction allows the analytical theory to remain on the full envelope $\HL$, while the near-parametric statistical guarantees are formulated for realized or controlled families of ladders.
\end{remark}

We continue with analytical and variational results for the full adaptive envelope $\HL$, followed by statistical results for its controlled subfamilies.

\section{Results: Analytical, Variational, and Statistical Properties of \BK{} Models} \label{sec:results}
We first study the analytical properties of the full adaptive envelope $\HL$ in \Cref{sec:characterization}. We then establish variational well-posedness for regularization over that envelope and develop statistical guarantees for realized and controlled canonical \BK{} models in \Cref{sec:stat-analyses}.

\subsection{Analytical Properties of the Full Adaptive BKL Envelope} \label{sec:characterization}
In this subsection, we establish different analytical properties of $\HL$. 

\begin{theorem}[adaptive-envelope analytical properties]\label{thm:charachteriz_funspace}
Let Assumption \ref{assumption:normalizing} hold and let $L \ge 2$. Then the following statements hold.
\begin{enumerate}[(i)]
 \item Hölder property, pointwise evaluation: Each element $f \in \HL$ is $\frac{1}{2^{L-1}}$-H\"{o}lder continuous with constant $\CL(f)$: 
 \begin{align}\label{eq:HL-holder}
 \left|f\left(\b x\right) - f\left(\b x'\right)\right| &\leq \sqrt[\leftroot{-2} \uproot{0} 2^{L-1}]{\|\b x - \b x'\|_2}\CL(f) \t{ for all }\b x,\b x' \in \mX.
 \end{align}

 Moreover, a similar property holds for pointwise evaluation
  \begin{align}\label{eq:HL-pointwise}
|f(\b x)|
\le
\sqrt[\leftroot{-2}\uproot{0}2^{L-1}]{\|\b x\|_2}\CL(f),
\quad \forall \b x\in\mX .
\end{align}\label{thm:HLp-analytical}
 \item Quasi-Banach structure: The pair $\left(\HL,\CL\right)$ forms a quasi-Banach space. \label{thm:charachteriz_funspace-3}
 
 \item (Strict) monotonicity:
Let $\HL$ and $\HLplus$ be defined according to
\eqref{def:\BK{}-unitsphereladder-space}.
Then, for every $L\ge 2$, one has $\HL \subseteq \HLplus$.
Assume that there exist $\b x_0 \in \mX$, $\rho>0$,
constants $0<c_1\le C_1<\infty$, a Lipschitz map
$\gamma:[0,\rho]\to\mX$ with $\gamma(0)=\b x_0$, and a function
$u_1 \in \Honemu$ such that
\begin{align}
u_1(\b x_0) = 0,
\qquad
c_1 t \le u_1(\gamma(t)) \le C_1 t,
\qquad t\in[0,\rho].
\end{align}
Then there exists a function $f_\star \in \HLplus \setminus \HL$
such that for every $r>0$,
\begin{align}\label{eq:BLr-nonapprox}
\inf_{g \in \BL_r}
\left\| f_\star - g \right\|_\infty
> 0,
\qquad
\BL_r \coloneqq \B_r\left(\left(\HL,\CL\right)\right).
\end{align}\label{thm:charachteriz_funspace-5}
Consequently, whenever the stated trace condition holds,
$\HL \subsetneq \HLplus$.
\label{thm:charachteriz_funspace-4}
\end{enumerate}
\end{theorem}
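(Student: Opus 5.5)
The plan is to argue at the level of a single realized canonical ladder $\LHmuh$ and then pass to the envelope by taking the infimum in \eqref{\BK{}:unitsphereladder-complexity}.

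\textbf{Part (i).} Two metric facts drive the argument.
\begin{itemize}
\item Every RKHS satisfies $|f(\b x)-f(\b x')|\le \|f\|_{\mH_k}\, d_k(\b x,\b x')$, where $d_k(\b x,\b x')^2=k(\b x,\b x)-2k(\b x,\b x')+k(\b x',\b x')$. It also satisfies $|f(\b x)|\le\|f\|_{\mH_k}\sqrt{k(\b x,\b x)}$.
\item For the Brownian kernel, $\kB(a,a)-2\kB(a,b)+\kB(b,b)=|a-b|$ and $\kB(a,a)=|a|$.
\end{itemize}
Let $d_l$ denote the canonical metric of layer $l$. Since $\hat\mu^{(l)}$ lives on $\S_1(\Hlmuh)$, every $u$ in its support satisfies $|u(\b x)-u(\b x')|\le d_l(\b x,\b x')$. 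Integrating gives
\begin{align}
d_{l+1}(\b x,\b x')^2=\int |u(\b x)-u(\b x')|\,\d\hat\mu^{(l)}(u)\le d_l(\b x,\b x').
\end{align}
The base case is $d_1(\b x,\b x')=\|\b x-\b x'\|_2$ from $k^{(0)}$. Induction then yields $d_L\le \|\b x-\b x'\|_2^{2^{-(L-1)}}$.

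The same recursion applies on the diagonal: $\hat k^{(l)}(\b x,\b x)=\int|u(\b x)|\,\d\hat\mu^{(l)}\le \sqrt{k_{l}(\b x,\b x)}$, with base case $k^{(0)}(\b x,\b x)=\|\b x\|_2^2$. This gives $\sqrt{k_L(\b x,\b x)}\le \|\b x\|_2^{2^{-(L-1)}}$. Both bounds hold for every ladder containing $f$, so taking the infimum over ladders yields \eqref{eq:HL-holder} and \eqref{eq:HL-pointwise}.

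\textbf{Part (ii).} Homogeneity of $\CL$ is immediate, since the same ladder serves for $cf$. Positive definiteness follows from the pointwise bound together with the fact that $\CL(f)=0$ forces $f\equiv0$.

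For the quasi-triangle inequality, take $f\in \mH^{(L)}_{\hat\mu}$ and $g\in\mH^{(L)}_{\hat\nu}$ and build a merged ladder layer by layer.
\begin{itemize}
\item At layer $l$, form the mixture kernel $\tfrac12(\hat k_\mu^{(l)}+\hat k_\nu^{(l)})$. Its RKHS contains both layer spaces with norm inflation at most $\sqrt2$.
\item The previous measures are pushed forward onto the unit sphere of the merged space via $u\mapsto u/\|u\|$, reweighting by $\|u\|$. Nonnegative $1$-homogeneity of $\kB$, as in Lemma~\ref{lem:measure_constraint_sphere}, ensures this preserves the kernels up to the mixture factor.
\item At the top layer, $f+g$ lies in the sum space with squared norm at most $2(\|f\|^2+\|g\|^2)$, up to depth-dependent constants from the renormalization.
\end{itemize}
This yields a constant $K=K_L$.

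Completeness is handled as follows. Given a Cauchy sequence, pass to a subsequence with $\CL(f_{j+1}-f_j)\le 4^{-j}$. Realize the differences on a single ladder obtained as a countable mixture of their ladders with weights $2^{-j}$, renormalized as above. On that ladder the telescoping series converges in the Hilbert norm. By (i), the limit coincides with the pointwise limit.

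I expect the bookkeeping of this merging, namely keeping each merged measure in $\Psph$ of the sphere of the new space and controlling constants across depth, to be the main obstacle. I would first try to prove a lemma that mixtures of recursively admissible sequences, after renormalization, remain admissible and dominate each component's top-layer norm by an explicit $C_L$.

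\textbf{Part (iii).} For nestedness I would add one layer that reproduces the previous top space. The idea is to place a suitable measure on $\S_1(\HL)$ so that the new kernel dominates a multiple of $\hat k^{(L-1)}$. I would try combining the mixture trick of (ii) with $1$-homogeneity, but this step needs checking.

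For strictness, I would construct $f_\star$ by iterating a scalar power along the trace condition.
\begin{itemize}
\item The RKHS of $(s,s')\mapsto\kB(s,s')$ on $[0,T]$ contains $s\mapsto s^{\alpha}$ for every $\alpha>1/2$. Hence, starting from $u_1$ and repeatedly using a Dirac-type layer measure at the normalized previous function, one obtains $f_\star\in\HLplus$.
\item Along $\gamma$, this $f_\star$ satisfies $f_\star(\gamma(t))-f_\star(\b x_0)\ge c\,t^{\beta}$ with $\beta=\prod\alpha_l<2^{-(L-1)}$, by choosing the exponents $\alpha_l$ close to $1/2$.
\item Any $g\in\BL_r$ satisfies $|g(\gamma(t))-g(\b x_0)|\le r(\mathrm{Lip}\,\gamma\, t)^{2^{-(L-1)}}$ by (i).
\end{itemize}
Therefore
\begin{align}
2\|f_\star-g\|_\infty\ge c\,t^\beta-r'\,t^{2^{-(L-1)}}>0
\end{align}
for a fixed small $t$ independent of $g$. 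This gives \eqref{eq:BLr-nonapprox}, and in particular $f_\star\notin\HL$.
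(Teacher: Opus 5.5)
\textbf{The gap is in part (iii): nestedness $\HL\subseteq\HLplus$ is not actually proved.} This inclusion is part of the statement, and it is also needed for the final conclusion $\HL\subsetneq\HLplus$.

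The rest of your proposal follows the paper's route.
\begin{itemize}
\item Part (i): the recursion $d_{l+1}^2\le d_l$ from the Brownian identity and the unit-sphere support is the paper's argument.
\item Part (ii), quasi-triangle inequality: you amalgamate two ladders by $\tfrac12$-mixtures of pushed-forward layer measures, use Radon--Nikodym domination for the $\sqrt2$ embedding, and renormalize by first moments. You renormalize layer by layer, whereas the paper builds a noncanonical ladder and canonicalizes once at the end; both work.
\item Part (ii), completeness: a countable mixture with geometric weights gives a single ladder on which the telescoping series converges, as in the paper.
\item Part (iii), strictness: Dirac layers at the normalized previous function, composed with a cut-off power of exponent slightly above $\tfrac12$, contradict the H\"older bound from (i). Your inequality $2\|f_\star-g\|_\infty\ge|(f_\star-g)(\gamma(t))-(f_\star-g)(\b x_0)|$ is a tidy substitute for the paper's two-case argument.
\end{itemize}

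\emph{Why your nestedness route does not go through.} You propose one measure on $\S_1(\mH_{\hat\mu}^{(L)})$ whose Brownian kernel dominates a multiple of $\hat k^{(L-1)}$, so that the whole top layer of a given ladder is reproduced; you note yourself that this needs checking. Nothing in the proposal produces such a measure. The kernel $\hat k^{(L-1)}$ averages pullbacks $\kB(v(\b x),v(\b x'))$ over $v\in\S_1(\mH_{\hat\mu}^{(L-1)})$. The new kernel must average pullbacks by functions lying in $\mH_{\hat\mu}^{(L)}$. If $v$ is an atom of $\hat\mu^{(L-1)}$, then $\{h\circ v:h\in\mH_{\kB}\}\subseteq\mH_{\hat\mu}^{(L)}$ and $v$ can be lifted. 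For diffuse $\hat\mu^{(L-1)}$ this is unavailable, and nothing guarantees that the $v$'s, or reparametrizations of them, lie in $\mH_{\hat\mu}^{(L)}$. The mixture trick from (ii) only combines kernels of equal depth, so it does not help. In any case, this ladder-wise statement is stronger than what the theorem asserts.

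\emph{The fix.} The envelope inclusion only needs an $(L+1)$-level realization of each $f$ separately, and your strictness construction already supplies it.
\begin{itemize}
\item For $f\in\mH_{\hat\mu}^{(L)}\setminus\{0\}$, put $q=\|f\|_{\mH_{\hat\mu}^{(L)}}$ and $\tilde u=f/q$, and append $\hat\mu^{(L)}=\delta_{\tilde u}$. The new top kernel is $\kB(\tilde u(\b x),\tilde u(\b x'))$.
\item Take $\chi\in C_c^\infty(\R)$ equal to $1$ on $[-\|\tilde u\|_\infty,\|\tilde u\|_\infty]$ and set $h(t)=q\,t\chi(t)$. Then $h\in\mH_{\kB}$ by \Cref{lem:brownian-rkhs}, and $h\circ\tilde u=f$.
\item \Cref{lem:kernel_transformation} then places $f$ in the new top layer. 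The zero function is covered by the vector-space structure from (ii).
\end{itemize}
This is exactly the paper's proof.

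\emph{A smaller point in the strictness step.} Use $\eta(t)(t_+)^a$ with a compactly supported cutoff $\eta$, rather than $s^\alpha$ on $[0,T]$. The function $\tilde u_1$ takes negative values on $\mX$, and the pullback needs a function in the Brownian RKHS on the whole image $\tilde u_l(\mX)$.
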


\begin{remark}~\label{rem:analytical_properties_thm}
    \begin{enumerate}    
      \item Metric on $\HL$ [\Cref{thm:charachteriz_funspace}\ref{thm:charachteriz_funspace-3}]: The space $\left(\HL,\CL\right)$ is a quasi-norm space, hence in particular it is a linear space
        equipped with a quasi-norm $\CL$ satisfying
        \begin{align} \label{eq:quasiineq}
        \CL(f+g) \le K\left[\CL(f)+\CL(g)\right], \qquad f,g\in\HL, 
        \end{align} 
        with $K=K(L)=  \frac{2}{\sqrt[\leftroot{-2} \uproot{0} 2^{L}]{2}} \in \left[2^{\frac{3}{4}},2\right)$ where the value of $K$ is shown in the proof of \Cref{thm:charachteriz_funspace}\ref{thm:charachteriz_funspace-3}.\footnote{The function $L \in \{2,3,4,\ldots\} \mapsto K(L) \in \R$ is non-decreasing and converging to 2 as $L \to \infty$.} Therefore, by the Aoki-Rolewicz theorem \cite[Theorem~$1.3$]{ionescu2008metrization}
        there exists $\alpha=\alpha(K)\in(0,1]$  and a quasi-norm $\left(\CL\right)^*$ on $\HL$ equivalent to $\CL$ for which the metric 
        \begin{align} 
        \dL(f,g) \coloneqq \left[\left(\CL\right)^*(f-g)\right]^{\alpha}, \qquad f,g\in\HL
        \end{align}
        induces the same topology as the quasi-norm $\CL$. This metric allows one to work with (classical) Cauchy sequences.
      \item Expressivity of $\HL$ [\Cref{thm:charachteriz_funspace}\ref{thm:charachteriz_funspace-4}]: The expressivity of BKL increases as a function of the depth $L$. \eqref{eq:BLr-nonapprox} shows that one can find some element $f_\star$ in $\HLplus$ but not in $\HL$, whose distance from arbitrary ball of $\HL$ is positive.
      \item Assumptions of strict increase [\Cref{thm:charachteriz_funspace}\ref{thm:charachteriz_funspace-4}]: A simple sufficient condition is the presence of a \emph{transverse line segment}. More precisely, by \Cref{lem:line-segment-implies-trace}, the assumptions hold whenever there exist $\b x_0,\b v\in\R^d$ and $\rho>0$ such that
      \begin{align}
      \left\{\b x_0+t\b v:t\in[0,\rho]\right\}
      \subseteq
      \m X,
      \qquad
      \b v\notin\operatorname{span}\left\{\b x_0\right\}.
      \end{align}
      In particular, every non-degenerate line segment in $\m X$ having the origin as one endpoint satisfies this condition.
    \end{enumerate}
\end{remark} 

\subsection{Variational and Statistical Properties of \BK{} Models}
\label{sec:stat-analyses}
In this section we first study regularized empirical risk minimization over the full adaptive class $\HL$, and then turn to Gaussian-complexity and excess-risk guarantees for the controlled classes introduced in \Cref{sec:adaptive-controlled-BKL-models}. Let $\m{D}_{XY,n}=\{(x_i,y_i)\}_{i=1}^n\sim\nu_{XY}^n$, where $\nu_{XY}\in\mP(\mX\times\mY)$ is the joint distribution and $\nu_X$ is its input marginal.\footnote{The notation $\nu_X$ will also be used in the Gaussian-complexity analysis of \Cref{sec:gauss-complex}.} For a predictor $f:\mX\to\R$ and a Borel-measurable loss $\ell:\R\times\mY\to\R$, we use the convention that the prediction is the first argument of the loss. The expected risk is
\begin{align}
\mR(f)
&\coloneqq
\int_{\mX\times\mY}
\ell\left(f(x),y\right)
\,\d\nu_{XY}(x,y),
\label{eq:expected-risk}
\end{align}
and the empirical risk is
\begin{align}
\mR_n(f)
&\coloneqq
\int_{\mX\times\mY}
\ell\left(f(x),y\right)
\,\d\nu_{XY,n}(x,y)
=
\frac1n\sum_{i\in[n]}
\ell\left(f(x_i),y_i\right),
\label{eq:empirical-risk}
\end{align}
where $\nu_{XY,n}=n^{-1}\sum_{i=1}^n\delta_{(x_i,y_i)}$. More generally, regularized empirical risk minimization takes the form
\begin{align}
\hat f_\lambda
&\in
\argmin_{f\in\F}
\left\{
\mR_n(f)+\lambda\C(f)
\right\},
\qquad
\lambda>0.
\label{eq:f-hat-lambda}
\end{align}
Here $\F$ is the hypothesis class, $\C$ is its regularizer, and $\lambda$ controls the tradeoff between empirical fit and regularity. The performance difference from the optimal population risk is quantified by the excess risk
\begin{align}
\mE(f) &= \mR(f)-\inf_{g \in \F} \mR(g); \label{eq:def:excess-risk}
\end{align} 
the excess risk is non-negative by definition, and it is equal to zero for any minimizer ($f^* \in \argmin_{f \in \F} \mR(f)$; if it exists). For the adaptive variational result below, we choose a compact input space $\mX\subset \R^d$, the hypothesis class $\F = \HL$, and the regularizer $\C=\CL$, and hence work with
\begin{align}
\hat{f}_\lambda &\in \argmin_{f \in \HL} J_n(f), & J_n(f) & := \m{R}_n(f) + \lambda \CL(f), \quad\lambda > 0,\label{eq:fhatlambda}\\
f^* &\in \argmin_{f \in \HL} \mR(f). \label{eq:f*-def}
\end{align}
We first establish existence of the adaptive RERM solution in \eqref{eq:fhatlambda} and identify the precise uniqueness statement for population minimizers. The controlled-family statistical guarantees are developed subsequently.

\subsubsection{Existence of Solutions} \label{sec:RERM-solution}
The next result establishes variational well-posedness of the fully adaptive RERM problem and gives the correct almost-everywhere uniqueness statement for population minimizers.

\begin{theorem}[adaptive RERM well-posedness]
\label{thm:existence_\BK{}}
Let Assumption~\ref{assumption:normalizing} hold, let $\mX\subset\R^d$ be compact, and let $\mY\subset\R$.
\begin{enumerate}[(i)]
\item\label{thm:existence_\BK{}-1} \emph{Existence of $\hat f_\lambda$.}
Let $\mD_{XY,n}=\{(\b x_i,y_i)\}_{i=1}^n\subset\mX\times\mY$ be fixed, let $\lambda>0$, and let
\begin{align}
\ell:\R\times\mY\longrightarrow\R
\end{align}
be Borel measurable and such that $\ell(\cdot,y)$ is continuous for every $y\in\mY$. Assume that the loss is bounded from below uniformly in both arguments: there exists $b_\ell\in\R$ such that
\begin{align}
\ell(u,y)
\ge
b_\ell,
\qquad
u\in\R,
\quad
 y\in\mY.
\label{eq:loss-global-lower-bound}
\end{align}
Then the objective $J_n$ in \eqref{eq:fhatlambda} attains its minimum on $\HL$; equivalently,
\begin{align}
\argmin_{f\in\HL}J_n(f)
\neq
\varnothing.
\end{align}
\item\label{thm:existence_\BK{}-2} \emph{Uniqueness of population minimizers.}
Assume that $\ell$ is Borel measurable, satisfies \eqref{eq:loss-global-lower-bound}, and that $\ell(\cdot,y)$ is continuous and strictly convex for every $y\in\mY$. Assume also that $\mR(0)<\infty$. Then any two minimizers $f^*,g^*\in\argmin_{f\in\HL}\mR(f)$ satisfy
\begin{align}
f^*
=
 g^*
\qquad
\nu_X\text{-almost everywhere}.
\label{eq:population-minimizer-ae-uniqueness}
\end{align}
If, moreover,
\begin{align}
\operatorname{supp}(\nu_X)
=
\mX,
\end{align}
then $f^*=g^*$ pointwise on $\mX$; hence the population minimizer, whenever it exists, is unique as an element of $\mC(\mX)$.
\end{enumerate}
\end{theorem}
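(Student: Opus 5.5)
The proof splits into the two parts of the statement. Part (i) is proved by a direct-method argument that reduces everything to the finitely many sample points. Part (ii) is a midpoint convexity argument that uses the vector-space structure from \Cref{thm:charachteriz_funspace}\ref{thm:charachteriz_funspace-3} and the continuity from \Cref{thm:charachteriz_funspace}\ref{thm:HLp-analytical}.

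\emph{Part (i).} Since $0\in\HL$ with $\CL(0)=0$, we have $m:=\inf_{\HL}J_n\le J_n(0)<\infty$, and $m\ge b_\ell>-\infty$. Take a minimizing sequence $(f_k)$. The lower bound \eqref{eq:loss-global-lower-bound} gives $\lambda\CL(f_k)\le J_n(f_k)-b_\ell$, so $\CL(f_k)\le r:=(J_n(0)+1-b_\ell)/\lambda$ for all large $k$. Since $J_n(f)$ depends on $f$ only through $\b v(f):=(f(\b x_1),\dots,f(\b x_n))\in\R^n$, it suffices to prove the following closedness statement.

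If $\b v(f_k)\to\b v$ and $\CL(f_k)\to c:=\liminf_k\CL(f_k)$ (after passing to a subsequence), then some $f\in\HL$ satisfies $\b v(f)=\b v$ and $\CL(f)\le c$.

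The vectors $\b v(f_k)$ are bounded by \eqref{eq:HL-pointwise}, so a convergent subsequence exists. Granting the closedness statement, continuity of $\ell(\cdot,y_i)$ gives $J_n(f)\le\lim_k\mR_n(f_k)+\lambda c=m$, so $f$ is a minimizer.

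To prove closedness, we fix $\varepsilon>0$ and pick canonical ladders $\hat\mu_k$ realizing $\|f_k\|_{\mH^{(L)}_{\hat\mu_k}}\le\CL(f_k)+\varepsilon$. For one RKHS, the set of value vectors of its radius-$s$ ball is the ellipsoid $\{K^{1/2}\b a:\|\b a\|_2\le s\}$, where $K$ is the Gram matrix at the points $\b x_1,\dots,\b x_n$. So we must show that the set $\m K_L$ of level-$L$ Gram matrices reachable by canonical ladders is compact, and that the map $(K,\b v)\mapsto\b v^\T K^{+}\b v$ (taking the value $+\infty$ off the range of $K$) is lower semicontinuous along the relevant sequence.

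We prove compactness of $\m K_l$ by induction on $l$. The key observation is that the level-$(l+1)$ Gram matrix equals $\int\kB(v_i,v_j)\,\d\nu(\b v)$, where $\nu$ is the push-forward of $\hat\mu^{(l)}$ under $u\mapsto\b v(u)$. This $\nu$ is supported on the compact ellipsoid determined by the previous Gram matrix $K^{(l)}\in\m K_l$. Prokhorov's theorem then yields convergent subsequences of $(K^{(l)}_k,\nu_k)$, and $\kB$ is continuous, so limits of Gram matrices are of the same integral form.

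Conversely, every such pair $(K,\nu)$ must be lifted back to an admissible canonical ladder. For each $\b v$ in the ellipsoid we take the minimal-norm element of $\mH^{(l)}$ interpolating $\b v$, choose it measurably, and renormalize it onto the unit sphere using the nonnegative $1$-homogeneity of $\kB$, exactly as in \Cref{lem:measure_constraint_sphere}. Only the Gram matrix at the sample points matters, so the kernel off the sample may be arbitrary.

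\emph{Main obstacle.} I expect the main obstacle to be this lifting step together with the lower semicontinuity of $\b v^\T K^{+}\b v$ when the limit Gram matrix is rank-deficient. For the latter, I would use the identity $\b v^\T K^{+}\b v=\sup_{\b a}\bigl(2\b a^\T\b v-\b a^\T K\b a\bigr)$. It expresses this quantity as a supremum of functions continuous in $(K,\b v)$, which gives lower semicontinuity directly. Letting $\varepsilon\to0$ completes the closedness statement.

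\emph{Part (ii).} By \Cref{thm:charachteriz_funspace}\ref{thm:charachteriz_funspace-3}, $\HL$ is a linear space, so $h:=(f^*+g^*)/2\in\HL$. We also have $b_\ell\le\mR(f^*)=\mR(g^*)\le\mR(0)<\infty$. Both $f^*$ and $g^*$ are continuous by \eqref{eq:HL-holder}, so $E:=\{\b x:f^*(\b x)\ne g^*(\b x)\}$ is open and in particular Borel. Strict convexity gives
\begin{align}
\ell(h(\b x),y)\le\tfrac12\ell(f^*(\b x),y)+\tfrac12\ell(g^*(\b x),y),
\end{align}
with strict inequality for every $\b x\in E$ and every $y\in\mY$. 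All integrands are bounded below by $b_\ell$ and have finite integrals, so we may integrate against $\nu_{XY}$. If $\nu_X(E)>0$, this yields $\mR(h)<\mR(f^*)$, contradicting minimality. Hence $\nu_X(E)=0$, which is \eqref{eq:population-minimizer-ae-uniqueness}.

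If moreover $\operatorname{supp}(\nu_X)=\mX$, then $E$ is a relatively open $\nu_X$-null set. Every nonempty relatively open subset of the support has positive measure, so $E=\varnothing$ and $f^*=g^*$ pointwise on $\mX$.
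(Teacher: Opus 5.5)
Your part (ii) is essentially the paper's argument. The paper uses $tf^*+(1-t)g^*$ for arbitrary $t\in(0,1)$ instead of the midpoint. Your remark that $E$ is open, hence Borel, makes explicit a measurability point the paper leaves implicit.

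For part (i) you take a genuinely different route. The paper works in $\mC(\mX)$. \Cref{lem:HL_precompact} (Arzel\`a--Ascoli via the H\"older bound) gives a uniformly convergent subsequence. \Cref{lem:lsc-CL} then proves that $\CL$ is lower semicontinuous under uniform convergence. That proof needs Prokhorov on compact subsets of $\mC(\mX)$, uniform convergence of the layer kernels, a Portmanteau argument placing each limit measure on the unit ball of the limiting layer RKHS, Lusin--Souslin, and a final canonicalization.

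You instead exploit that $J_n$ sees only the $n$ sample values. Everything then reduces to push-forward measures on compact ellipsoids in $\R^n$ and $n\times n$ Gram matrices. Your formula $\b v^\top K^{+}\b v=\sup_{\b a}(2\b a^\top\b v-\b a^\top K\b a)$ is the finite-dimensional counterpart of Step~2 of \Cref{lem:lsc-CL}. It also shows that limit measures stay inside the limit ellipsoid, because $\{(K,\b v):\b v^\top K^{+}\b v\le 1\}$ is closed. This avoids all infinite-dimensional measure theory. The price is that your minimizer is a minimal-norm interpolant in a limiting ladder rather than the uniform limit of the minimizing sequence. You also do not obtain the paper's closedness of $\BL_r$ in $\mC(\mX)$.

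The lifting step, however, fails as written. For $\b v$ in the ellipsoid, the minimal-norm interpolant satisfies $\|u_{\b v}\|^2=\b v^\top K^{+}\b v\le 1$. So the lifted measure has first moment $m_l\le 1$, and in general $m_l<1$, for two reasons.
\begin{itemize}
\item The push-forward of a unit-sphere measure fills the whole ellipsoid as soon as the layer RKHS contains nonzero functions vanishing at the sample.
\item Even when no such functions exist (e.g.\ a finite $\mX$ covered by the sample), a rank drop in the limit collapses the ellipsoid boundaries onto full lower-dimensional ellipsoids. Limit measures can then charge interior points.
\end{itemize}
\Cref{lem:measure_constraint_sphere} requires unit first moment. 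Radially renormalizing a measure with first moment $m_l<1$ produces the kernel $K^{(l)}/m_l$, not $K^{(l)}$, and this rescaling propagates to every later ellipsoid. Hence this construction does not prove your claimed compactness of the canonical Gram sets $\mathcal{K}_l$. Padding by a function vanishing at the sample would preserve the Gram matrix, but such functions need not exist.

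The repair is to drop canonicity during the lifting.
\begin{itemize}
\item Lift each level by minimal-norm interpolation only. This gives a non-canonical ladder with supports in closed unit balls, first moments $m_l\le 1$, and sample Gram matrices equal to the limits exactly.
\item Canonicalize once, at the end, via \Cref{lem:measures_inequality}\ref{lem:finite-moment-canonicalization}. This multiplies top-layer norms by $q_L=\prod_{l=1}^{L-1}m_l^{2^{-(L-l)}}\le 1$. Equivalently, the top Gram matrix becomes $K^{(L-1)}/q_L^2$, which can only decrease $\b v^\top K^{+}\b v$.
\item If some $m_l=0$, then $\nu_l=\delta_0$ and all later Gram matrices vanish. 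Your variational formula then forces $\b v=0$, and $f=0$ works.
\end{itemize}
This is precisely Step~6 of \Cref{lem:lsc-CL}. With it, you need compactness only of the chains of push-forward measures $(\nu_1,\dots,\nu_{L-1})$, each supported in the ellipsoid of the preceding limit Gram matrix, not of the sets $\mathcal{K}_l$.
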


\begin{remark}
\label{rem:nonuniqueness}
The lower-bound assumption \eqref{eq:loss-global-lower-bound} is satisfied with $b_\ell=0$ by the squared, absolute, Huber, and $\varepsilon$-insensitive losses considered later. Since $\CL$ is only a quasi-norm, the empirical minimizer in \eqref{eq:fhatlambda} need not be unique. By contrast, strict convexity yields uniqueness of population predictions up to $\nu_X$-almost-everywhere equality, and full support of $\nu_X$ upgrades this conclusion to pointwise uniqueness because every element of $\HL$ is continuous.
\end{remark}

\subsubsection{Gaussian Complexity}\label{sec:gauss-complex}
We now bound the Gaussian complexity of the controlled radius-$r$ class $\BLdictr$ from \eqref{eq:controlled-BKL-ball}. The dictionary $\BKLdict$ is fixed independently of the estimation sample and of the auxiliary Gaussian variables, as specified in \Cref{rem:ladder-selection-order}.

\begin{definition}[(empirical) Gaussian complexity]
\label{def:gaussian-complexity}
Let $\F$ be a real-valued function class over $\mX$ and $\bm{\varepsilon}=(\varepsilon_i)_{i=1}^n\sim\mN^n(0,1)$. Then, for a fixed dataset $\m D_n=\{\b x_i\}_{i=1}^{n}\subset\mX$, the empirical Gaussian complexity of $\F$ is defined by
\begin{align}
\eG(\F)
\coloneqq
\eE\left[
\sup_{f\in\F}
\frac{1}{n}
\sum_{i=1}^{n}
\varepsilon_i f(\b x_i)
\right].
\label{eq:empiricalGn-bound}
\end{align}
Its expectation with respect to $(X_i)_{i=1}^{n}\sim\nu_X^n$,
\begin{align}
\G(\F)
\coloneqq
\DE\left[\eG(\F)\right]
=
\eDE\left[
\sup_{f\in\F}
\frac{1}{n}
\sum_{i=1}^{n}
\varepsilon_i f(X_i)
\right],
\label{eq:Gn-bound}
\end{align}
is called the Gaussian complexity of $\F$.
\end{definition}
For $M\in\Zp$, set
\begin{align}
\kappa_M
\coloneqq
1+\sqrt{2\ln M}.
\label{eq:controlled-dictionary-factor}
\end{align}

\begin{theorem}[controlled-family Gaussian complexity]
\label{thm:peeling_all_depth_\BK{}}
Let Assumption~\ref{assumption:normalizing} hold. Let
\begin{align}
\BKLdict
=
\left\{
\mH_{\hat\mu_m}^{(1:L)}:m\in[M]
\right\}
\end{align}
be a controlled dictionary from \Cref{def:controlled-BKL-family}, fixed independently of the estimation sample, where $M\in\Zp$. Let $r>0$ and $n\in\Zp$. Then, for every fixed sample $\{\b x_i\}_{i=1}^{n}\subset\mX$,
\begin{align}
\eG\left(\BLdictr\right)
\le
\frac{r\kappa_M}{\sqrt n}
\left(
\max_{i\in[n]}
\left\|\b x_i\right\|_2^2
\right)^{2^{-L}}.
\label{eq:controlled-empirical-gaussian-bound}
\end{align}
Consequently,
\begin{align}
\G\left(\BLdictr\right)
\le
\frac{r\kappa_M}{\sqrt n}
\left(
\DE\left[
\max_{i\in[n]}
\left\|X_i\right\|_2^2
\right]
\right)^{2^{-L}}.
\label{eq:gaussian_contraction_\BK{}}
\end{align}
In particular, the bound contains no explicit factor depending on the ambient dimension $d$.
\end{theorem}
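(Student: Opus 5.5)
The plan is to reduce to a diagonal estimate for the top-layer kernel of each realized ladder. We then treat the maximum over the $M$ dictionary members by Gaussian concentration, and finally average over the sample with Jensen's inequality.

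\emph{Step 1 (diagonal bound).} Fix one canonical ladder $\mH_{\hat\mu}^{(1:L)}$ with kernels $\hat k^{(l)}$, and write $\hat k^{(0)}\coloneqq k^{(0)}$. We show by induction on $l\in\{0,\dots,L-1\}$ that
\begin{align}
\hat k^{(l)}(\b x,\b x)\le \left(\|\b x\|_2^2\right)^{2^{-l}},\qquad \b x\in\mX.
\end{align}
\begin{itemize}
\item Base case: $k^{(0)}(\b x,\b x)=\|\b x\|_2^2$.
\item Inductive step: by \eqref{eq:BK_def}, $\kB(a,a)=|a|$. For $u\in\S_1(\mH_{\hat\mu}^{(l)})$, the reproducing property and Cauchy--Schwarz give $|u(\b x)|\le \|u\|\sqrt{\hat k^{(l-1)}(\b x,\b x)}=\sqrt{\hat k^{(l-1)}(\b x,\b x)}$.
\item Since $\hat\mu^{(l)}$ is a probability measure, integrating gives $\hat k^{(l)}(\b x,\b x)\le \sqrt{\hat k^{(l-1)}(\b x,\b x)}$, which closes the induction.
\end{itemize}
The top layer $\mH_{\hat\mu}^{(L)}$ has kernel $\hat k^{(L-1)}$. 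Setting $D\coloneqq\max_i \hat k^{(L-1)}(\b x_i,\b x_i)$, we get $\sqrt D\le(\max_i\|\b x_i\|_2^2)^{2^{-L}}$, uniformly over all dictionary members.

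\emph{Step 2 (one ladder).} For ladder $m$ with Gram matrix $K_m=(\hat k_m^{(L-1)}(\b x_i,\b x_j))_{i,j}$, the reproducing property gives
\begin{align}
Z_m(\bm\varepsilon)\coloneqq\sup_{f\in\B_r(\mH_{\hat\mu_m}^{(L)})}\frac1n\sum_i\varepsilon_i f(\b x_i)=\frac rn\sqrt{\bm\varepsilon^\top K_m\bm\varepsilon}.
\end{align}
By Jensen, $\E Z_m\le \frac rn\sqrt{\operatorname{tr}K_m}\le r\sqrt{D/n}$. For $M=1$ this already yields \eqref{eq:controlled-empirical-gaussian-bound}, since $\kappa_1=1$.

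\emph{Step 3 (maximum over the dictionary).} This step is the main obstacle, since we want exactly the factor $\kappa_M=1+\sqrt{2\ln M}$ rather than a cruder union bound.
\begin{itemize}
\item Lipschitz property: $\bm\varepsilon\mapsto Z_m(\bm\varepsilon)=\frac rn\|K_m^{1/2}\bm\varepsilon\|_2$ is Lipschitz with constant $\frac rn\sqrt{\lambda_{\max}(K_m)}\le \frac rn\sqrt{\operatorname{tr}K_m}\le r\sqrt{D/n}\eqqcolon\sigma$.
\item Concentration: by the Gaussian concentration inequality for Lipschitz functions (Tsirelson--Ibragimov--Sudakov), each $Z_m-\E Z_m$ is $\sigma$-sub-Gaussian.
\item Maximal inequality: the standard bound for $M$ sub-Gaussian variables gives $\E\max_m(Z_m-\E Z_m)\le\sigma\sqrt{2\ln M}$.
\end{itemize}
Combining these, $\E\max_m Z_m\le\max_m\E Z_m+\sigma\sqrt{2\ln M}\le r\sqrt{D/n}\,\kappa_M$. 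By \eqref{eq:controlled-BKL-ball}, $\sup_{f\in\BLdictr}=\max_m\sup_{\B_r(\mH_{\hat\mu_m}^{(L)})}$, so the left-hand side equals $\eG(\BLdictr)$. Inserting the bound on $\sqrt D$ from Step 1 proves \eqref{eq:controlled-empirical-gaussian-bound}. This step uses that the dictionary is fixed independently of $\bm\varepsilon$ and of the sample, as stipulated in \Cref{rem:ladder-selection-order}.

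\emph{Step 4 (expectation over the sample).} Take the expectation over $X_i\sim\nu_X$. The map $t\mapsto t^{2^{-L}}$ is concave on $[0,\infty)$, so Jensen's inequality moves the expectation inside the power and yields \eqref{eq:gaussian_contraction_\BK{}}. Measurability of the maximum is clear because it is a continuous function of the $X_i$. No constant in the argument depends on $d$, which gives the final claim.
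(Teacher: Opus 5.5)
Your proposal is correct and follows the paper's proof essentially step for step. It uses the same RKHS duality identity $Z_m(\bm\varepsilon)=\frac{r}{n}\sqrt{\bm\varepsilon^\top \b K_m\bm\varepsilon}$ and the same diagonal bound on the top-layer kernel (you rederive it inline instead of citing \Cref{lem:kernel_boud_induction}), followed by Jensen for each mean, the common Lipschitz constant via $\lambda_{\max}(\b K_m)\le\operatorname{tr}(\b K_m)$, Gaussian concentration with the sub-Gaussian maximal inequality for the $\sqrt{2\ln M}$ term, and a final Jensen step over the sample.
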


\begin{corollary}[one fixed realized canonical \BK{}]
\label{cor:fixed-ladder-gaussian-complexity}
Let $\mH_{\hat\mu}^{(1:L)}\in\LHh$ be fixed independently of the sample. Let $n\in\Zp$, $r>0$, and fix $\{\b x_i\}_{i=1}^{n}\subset\mX$. Then
\begin{align}
\eG\left(
\B_r\left(\mH_{\hat\mu}^{(L)}\right)
\right)
&\le
\frac{r}{\sqrt n}
\left(
\max_{i\in[n]}
\left\|\b x_i\right\|_2^2
\right)^{2^{-L}},
\label{eq:fixed-ladder-empirical-gaussian-bound}\\
\G\left(
\B_r\left(\mH_{\hat\mu}^{(L)}\right)
\right)
&\le
\frac{r}{\sqrt n}
\left(
\DE\left[
\max_{i\in[n]}
\left\|X_i\right\|_2^2
\right]
\right)^{2^{-L}}.
\label{eq:fixed-ladder-expected-gaussian-bound}
\end{align}
\end{corollary}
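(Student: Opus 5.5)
The corollary is the special case $M=1$ of \Cref{thm:peeling_all_depth_\BK{}}, and I would derive it that way. First I would take the one-element dictionary $\BKLdict=\{\mH_{\hat\mu}^{(1:L)}\}$. It is admissible in \Cref{def:controlled-BKL-family}, since $\mH_{\hat\mu}^{(1:L)}\in\LHh$ is fixed independently of the sample. By \eqref{eq:controlled-BKL-ball} with $M=1$, the controlled radius-$r$ class $\BLdictr$ is exactly $\B_r(\mH_{\hat\mu}^{(L)})$. This also follows from \Cref{prop:controlled-BKL-bridge}: \eqref{eq:controlled-BKL-ball-inclusions} gives one inclusion, and the other holds because the minimum in \eqref{eq:controlled-BKL-complexity} runs over a single index.

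Next I would evaluate the selection factor: $\kappa_1=1+\sqrt{2\ln 1}=1$. Substituting into \eqref{eq:controlled-empirical-gaussian-bound} gives \eqref{eq:fixed-ladder-empirical-gaussian-bound} for every fixed sample. Substituting into \eqref{eq:gaussian_contraction_\BK{}} gives \eqref{eq:fixed-ladder-expected-gaussian-bound}. No other work is needed once the theorem is assumed.

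If one wanted a self-contained argument instead, it would go as follows. By the reproducing property and Cauchy--Schwarz, the supremum over the RKHS ball equals $\frac{r}{n}\|\sum_i\varepsilon_i \hat k^{(L-1)}_{\b x_i}\|$. Jensen's inequality then bounds its Gaussian expectation by $\frac{r}{n}\bigl(\sum_i \hat k^{(L-1)}(\b x_i,\b x_i)\bigr)^{1/2}$. The main step is a diagonal estimate,
\begin{align}
\hat k^{(l)}(\b x,\b x)\le \|\b x\|_2^{2^{1-l}},
\end{align}
proved by induction on $l$. The point is that $\kB(u(\b x),u(\b x))=|u(\b x)|$, and for $u$ on the unit sphere of the previous layer, $|u(\b x)|\le \hat k^{(l-1)}(\b x,\b x)^{1/2}$. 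Each layer therefore takes a square root; the base case is $\hat k^{(0)}(\b x,\b x)=\|\b x\|_2^2$, and $\hat\mu^{(l)}$ is a probability measure.

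The exponent bookkeeping is the only delicate point. Since the paper's $\hat k^{(l)}$ defines $\mH^{(l+1)}$, the top layer $\mH^{(L)}$ has kernel $\hat k^{(L-1)}$, whose diagonal is $\le(\|\b x\|_2^2)^{2^{-(L-1)}}$. Taking the square root from Jensen gives $(\|\b x\|_2^2)^{2^{-L}}$, which matches the stated exponent. Finally, bound each diagonal term by the maximum over $i\in[n]$. For the expected version, use that $t\mapsto t^{2^{-L}}$ is concave and apply Jensen over $\nu_X^n$.
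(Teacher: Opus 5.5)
Your proposal is correct and matches the paper's proof: the paper also obtains the corollary by applying \Cref{thm:peeling_all_depth_\BK{}} with $M=1$, using $\kappa_1=1$ and $\BLdictr=\B_r\left(\mH_{\hat\mu}^{(L)}\right)$. Your self-contained sketch follows the single-ladder part of that theorem's proof (RKHS duality, Jensen with the trace, and the diagonal bound of \Cref{lem:kernel_boud_induction}), and your exponents are tracked correctly.
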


\begin{proof}
Apply \Cref{thm:peeling_all_depth_\BK{}} with $M=1$, for which $\kappa_1=1$ and $\BLdictr=\B_r\left(\mH_{\hat\mu}^{(L)}\right)$.
\end{proof}

\begin{remark}~\label{rem:Gaussian_complexity_thm}
\begin{enumerate}
\item \emph{Cost of ladder selection.}
For one fixed ladder, \Cref{cor:fixed-ladder-gaussian-complexity} has the standard $n^{-1/2}$ upper-bound dependence and no logarithmic model-selection term. For a dictionary of $M$ ladders, adaptation costs only the factor $\kappa_M=1+\sqrt{2\ln M}$. In particular, if $M\le n^q$ for a constant $q>0$, then
\begin{align}
\G\left(\BLdictr\right)
=
\widetilde{\m O}\left(n^{-1/2}\right).
\end{align}

\item \emph{Dependence on depth and input radius.}
Let
\begin{align}
B
\coloneqq
\sup_{\b x\in\mX}
\left\|\b x\right\|_2^2
<\infty.
\label{eq:B-def}
\end{align}
Since $L\ge2$,
\begin{align}
B^{2^{-L}}
\le
\max\left\{1,B^{1/4}\right\}.
\end{align}
Therefore,
\begin{align}
\G\left(\BLdictr\right)
\le
\frac{r\kappa_M}{\sqrt n}
\max\left\{1,B^{1/4}\right\}.
\label{eq:Gaussian-complexity-uniform}
\end{align}
If $\mX$ is normalized so that $B\le1$, the radius factor in \eqref{eq:Gaussian-complexity-uniform} is at most one. Uniformity over a sequence of depths additionally requires the corresponding dictionary sizes $M=M_L$ to be controlled.

\item \emph{Scope of the theorem.}
The bounds above concern one realized ladder or a finite controlled dictionary fixed before the estimation sample. They do not assert an $n^{-1/2}$ bound for the unrestricted adaptive ball
\begin{align}
\left\{
 f\in\HL:\CL(f)\le r
\right\},
\end{align}
for which the supremum over all ladder measures lies inside the Gaussian expectation. This is precisely the distinction recorded in \Cref{rem:ladder-selection-order}.
\end{enumerate}
\end{remark}

\subsubsection{Controlled-Family Oracle and Excess-Risk Bounds}
\label{sec:excess-risk-guarantees}
The Gaussian-complexity estimate in \Cref{thm:peeling_all_depth_\BK{}} is a radius-dependent statement for the controlled class $\BLdictr$. Accordingly, we formulate the statistical estimator on the same fixed radius class. For $r>0$ and $\lambda\ge0$, define the radius-constrained controlled RERM estimator by
\begin{align}
\hat f_{\lambda,r,\BKLdict}
\in
\argmin_{f\in\BLdictr}
\left\{
\mR_n(f)
+
\lambda\CLdict(f)
\right\}.
\label{eq:controlled-RERM}
\end{align}
The case $\lambda=0$ is empirical risk minimization over the controlled radius-$r$ class, while $\lambda>0$ additionally favors smaller realized-ladder norms inside that class. We measure performance relative to the best predictor in the same statistical model by
\begin{align}
\mE_{\BKLdict,r}(f)
\coloneqq
\mR(f)
-
\inf_{g\in\BLdictr}
\mR(g).
\label{eq:controlled-excess-risk}
\end{align}
No population-risk minimizer is assumed to exist.

\begin{theorem}[controlled-family excess risk]
\label{thm:excess-risk-\BK{}}
Let Assumption~\ref{assumption:normalizing} hold. Let
\begin{align}
\BKLdict
=
\left\{
\mH_{\hat\mu_m}^{(1:L)}:m\in[M]
\right\}
\end{align}
be a controlled dictionary from \Cref{def:controlled-BKL-family}, fixed independently of the estimation sample, where $M\in\Zp$. Let $n\in\Zp$, $r>0$, and $\lambda\ge0$. Let $B$ be defined in \eqref{eq:B-def}, let $A\ge0$ satisfy
\begin{align}
\mY
\subseteq
[-A,A],
\qquad
rB^{2^{-L}}
\le
A,
\label{eq:controlled-loss-range}
\end{align}
and let $\ell:\R\times\mY\to\Rnn$ satisfy, for some constants $M_\ell,L_\ell<\infty$,
\begin{align}
0
\le
\ell(u,y)
&\le
M_\ell,
\label{eq:controlled-loss-boundedness}\\
\left|
\ell(u_1,y)-\ell(u_2,y)
\right|
&\le
L_\ell
\left|u_1-u_2\right|,
\label{eq:controlled-loss-lipschitz}
\end{align}
for all $y\in\mY$ and all $u,u_1,u_2\in[-A,A]$. Then the solution set in \eqref{eq:controlled-RERM} is nonempty.

Moreover, there exists a universal constant $C>0$ such that, for every $\delta\in(0,1)$, with probability at least $1-\delta$ over the estimation sample,
\begin{align}
\mR\left(\hat f_{\lambda,r,\BKLdict}\right)
-
\inf_{f\in\BLdictr}
\left\{
\mR(f)+\lambda\CLdict(f)
\right\}
\le
r\Lambda_{n,M,L,\delta},
\label{eq:controlled-penalized-oracle}
\end{align}
where
\begin{align}
\Lambda_{n,M,L,\delta}
\coloneqq
C L_\ell B^{2^{-L}}
\left[
\frac{\kappa_M}{\sqrt n}
+
\sqrt{
\frac{\ln(1/\delta)}{n}
}
\right].
\label{eq:controlled-statistical-scale}
\end{align}
Consequently,
\begin{align}
\mE_{\BKLdict,r}
\left(\hat f_{\lambda,r,\BKLdict}\right)
\le
r
\left(
\Lambda_{n,M,L,\delta}
+
\lambda
\right).
\label{eq:exessriskbound}
\end{align}
In particular, if $0\le\lambda\le\Lambda_{n,M,L,\delta}$, then
\begin{align}
\mE_{\BKLdict,r}
\left(\hat f_{\lambda,r,\BKLdict}\right)
\le
2r\Lambda_{n,M,L,\delta},
\label{eq:controlled-near-parametric-excess}
\end{align}
while the choice $\lambda=0$ improves the factor $2$ in \eqref{eq:controlled-near-parametric-excess} to $1$.
\end{theorem}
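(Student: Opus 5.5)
The plan has three parts: existence of the estimator, a uniform deviation bound, and the standard oracle chain.

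\emph{Existence.} Since the dictionary is finite, it suffices to show, for each fixed $m\in[M]$, that the problem restricted to $\B_r(\mH_{\hat\mu_m}^{(L)})$ has a minimizer. The overall minimum over $\BLdictr$ is then the smallest of these $M$ values. One subtlety: $\CLdict(f)\le\|f\|_{\mH_{\hat\mu_m}^{(L)}}$, with equality for some index. The cleanest route is to minimize, over pairs $(m,f)$ with $f\in\B_r(\mH_{\hat\mu_m}^{(L)})$, the quantity $\mR_n(f)+\lambda\|f\|_{\mH_{\hat\mu_m}^{(L)}}$. Its infimum equals the infimum of the original objective, since $\CLdict$ is a minimum over $m$. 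A minimizing pair gives a minimizer of the original problem.

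For fixed $m$, the ball $\B_r$ of an RKHS is weakly compact. Evaluation at each $\b x_i$ is weakly continuous, and the norm is weakly lower semicontinuous. Continuity of $\ell(\cdot,y_i)$ on $[-A,A]$ then gives a minimizer. By \eqref{eq:HL-pointwise} and \eqref{eq:controlled-BKL-complexity-lower}, every $f\in\BLdictr$ satisfies $|f(\b x)|\le rB^{2^{-L}}\le A$, so predictions stay in the range where the loss hypotheses apply. For this I use $\|\b x\|_2^{2^{-(L-1)}}=(\|\b x\|_2^2)^{2^{-L}}$.

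\emph{Uniform deviation.} I will bound $\sup_{f\in\BLdictr}|\mR(f)-\mR_n(f)|$ with probability $1-\delta$. Symmetrization gives twice the Rademacher complexity of the loss class $\{(\b x,y)\mapsto\ell(f(\b x),y)\}$. Since predictions lie in $[-A,A]$ and $\ell(\cdot,y)$ is $L_\ell$-Lipschitz there, the Ledoux--Talagrand contraction (in its vector form, because the Lipschitz functions $\ell(\cdot,y_i)$ differ across $i$) reduces this to $L_\ell$ times the Rademacher complexity of $\BLdictr$. That in turn is at most $\sqrt{\pi/2}$ times the Gaussian complexity. \Cref{thm:peeling_all_depth_\BK{}} then yields at most $r\kappa_M B^{2^{-L}}/\sqrt n$ up to constants, using $\E\max_i\|X_i\|_2^2\le B$.

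For concentration, McDiarmid's inequality would naively give a term $M_\ell\sqrt{\ln(1/\delta)/n}$. To get the stated scale $L_\ell rB^{2^{-L}}$ instead, I center the loss: set $\tilde\ell(u,y)=\ell(u,y)-\ell(0,y)$. The difference $\mR(f)-\mR_n(f)$ equals the corresponding centered difference plus a term $\E\ell(0,Y)-\frac1n\sum_i\ell(0,y_i)$ that does not depend on $f$. For $f\in\BLdictr$ the centered loss is bounded by $L_\ell rB^{2^{-L}}$, so McDiarmid gives the claimed deviation for the centered part. The $f$-independent term cancels in the oracle comparison: I apply the deviation only to the difference $(\mR-\mR_n)(\hat f)-(\mR-\mR_n)(g)$. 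Equivalently, I control $\sup_{f,g}$ of that difference, whose bounded differences are at most $2L_\ell rB^{2^{-L}}/n$. This is the main obstacle: obtaining the $r L_\ell B^{2^{-L}}$ scale rather than $M_\ell$ in the $\sqrt{\ln(1/\delta)/n}$ term.

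\emph{Oracle chain.} For any $g\in\BLdictr$, optimality of $\hat f$ gives $\mR_n(\hat f)+\lambda\CLdict(\hat f)\le\mR_n(g)+\lambda\CLdict(g)$. Adding the uniform deviation bound yields
\[
\mR(\hat f)\le\mR(g)+\lambda\CLdict(g)+r\Lambda_{n,M,L,\delta},
\]
after dropping $\lambda\CLdict(\hat f)\ge0$ and absorbing constants into $C$. Taking the infimum over $g$ gives \eqref{eq:controlled-penalized-oracle}. Since $\CLdict(g)\le r$ on the class, \eqref{eq:exessriskbound} follows. The final two claims are then immediate, the case $\lambda=0$ giving the factor $1$.
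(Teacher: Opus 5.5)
Your proposal is correct and follows essentially the same route as the paper: existence by weak compactness of each RKHS ball and minimization over pairs $(m,f)$; symmetrization with the centered loss $\ell(u,y)-\ell(0,y)$, contraction, and Gaussian--Rademacher comparison feeding into the controlled-family Gaussian-complexity theorem; McDiarmid applied to the pairwise deviation $\sup_{f,g}\{(\mR-\mR_n)(f)-(\mR-\mR_n)(g)\}$, so that the confidence term scales with $L_\ell rB^{2^{-L}}$ instead of $M_\ell$; and the standard oracle chain. One harmless slip is that the bounded-difference constant for the pairwise supremum is $4L_\ell rB^{2^{-L}}/n$; your value $2L_\ell rB^{2^{-L}}/n$ is correct only for the single centered supremum $\sup_f|\cdot|$, and the factor of $2$ is absorbed into the universal constant $C$.
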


\begin{remark}~\label{rem:excess-risk-\BK{}}
\begin{enumerate}
\item \emph{Near-parametric rate and model-selection cost.}
For a fixed depth and radius, \eqref{eq:controlled-near-parametric-excess} contains no explicit ambient-dimension factor and has the rate
\begin{align}
\m O\left(
 rL_\ell B^{2^{-L}}
 \sqrt{\frac{1+\ln M+\ln(1/\delta)}{n}}
\right).
\end{align}
Thus a polynomial-size dictionary retains the near-parametric $\widetilde{\m O}(n^{-1/2})$ rate. The dependence on the number of candidate ladders is logarithmic through $\kappa_M=1+\sqrt{2\ln M}$.

\item \emph{One realized ladder.}
When $M=1$, one has $\kappa_1=1$, and \Cref{thm:excess-risk-\BK{}} gives the corresponding fixed-ladder oracle and excess-risk guarantees without a model-selection factor.

\item \emph{Why the radius is explicit.}
Theorem~\ref{thm:existence_\BK{}} establishes variational well-posedness for the unrestricted adaptive RERM problem on $\HL$. The present theorem addresses a different question: statistical estimation over a realized or controlled model whose ladder-selection cost is explicit. The radius constraint matches the class controlled by \Cref{thm:peeling_all_depth_\BK{}} and prevents an uncharged supremum over arbitrary ladder measures and arbitrary RKHS norms.

\item \emph{Independent pilot construction.}
If $\BKLdict$ is constructed from an independent pilot sample, the theorem applies conditionally on that pilot sample and hence also unconditionally after averaging over the pilot stage.
\end{enumerate}
\end{remark}

\begin{table}
\centering
\scriptsize
\caption{Valid constants $M_\ell$ and $L_\ell$ for common losses when predictions and outputs lie in $[-A,A]$; see \Cref{lem:loss-bounds} with $M\leftarrow A$.}
\label{tab:loss-examples-final}
\renewcommand{\arraystretch}{1.1}
\begin{tabular}{l l c c}
\hline
Loss & $\ell(u,y)$ & \textbf{$M_\ell$} & \textbf{$L_\ell$} \\
\hline
squared & $(u-y)^2$ & $4A^2$ & $4A$ \\

absolute & $|u-y|$ & $2A$ & $1$ \\

Huber ($\delta>0$) & $\begin{cases}0.5(u-y)^2,&\t{ if }|u-y|\le\delta,\\ \delta|u-y|-0.5\delta^2,&\t{ if }|u-y|>\delta\end{cases}$ & $\begin{cases}2A^2,&\t{ if }\delta\ge 2A,\\ 2\delta A-0.5\delta^2,&\t{ if }\delta<2A\end{cases}$ & $\delta$ \\

$\epsilon$–insensitive ($\epsilon>0$) & $\max\{0,|u-y|-\epsilon\}$ & $\max\{0,2A-\epsilon\}$ & $1$ \\
\hline
\end{tabular}
\end{table}

% ===========================
\begin{figure}[H]
\centering
\scriptsize

\begin{tikzpicture}
\def\delta{1}
\begin{axis}[
  width=0.43\linewidth,     % smaller → allows two figs in one line
  height=0.32\linewidth,    % proportional scaling
  xlabel={$u$}, ylabel={$\ell(u,0)$},
  xmin=-2, xmax=2,
  ymin=0, ymax=4,
  restrict y to domain=0:4, 
  grid=major,
  grid style={gray!15},
  legend style={
    at={(0.5,1.05)},  
    anchor=south,
    legend columns=-1,
    column sep=6pt,
    font=\scriptsize,
    draw=none,
    fill=none
  },
  ticklabel style={font=\tiny},
]

% -------------------------
% Squared loss
% -------------------------
\addplot[black, thick, domain=-2:2, samples=400]
  {(x)^2};

% -------------------------
% Huber loss
% -------------------------
\addplot[blue!70!black, thick, dotted, domain=-2:2, samples=800]
  {
    (abs(x) <= \delta) * (0.5*(x)^2)
  + (abs(x) > \delta) * (\delta*(abs(x) - 0.5*\delta))
  };

% -------------------------
% Transition markers ±δ
% ---------------
\addplot[gray!50, dashed, semithick] coordinates {(-1,0) (-1,4)};
\addplot[gray!50, dashed, semithick] coordinates {(1,0) (1,4)};

\legend{Squared, Huber}

\end{axis}
\end{tikzpicture}

\caption{
Illustration of the squared loss and the Huber loss ($\delta=1$) for $y=0$. 
The Huber loss is quadratic on $[-\delta,\delta]$ and linear outside of this region. 
At the transition points $\pm\delta$, the two pieces join smoothly: both the function 
value and the first derivative coincide, so the Huber loss is continuously differentiable 
at these knots. Dashed lines indicate the knot locations. 
See Fig.~\ref{fig:losses-abs-eps} for the corresponding absolute and 
$\epsilon$–insensitive losses.
}
\label{fig:losses-sq-huber}
\end{figure}
% ===========================
% Figure B: Absolute + ε–insensitive
% ===========================
% =====================
% Figure B: Absolute + ε–insensitive (rescaled to match Figure A)
% ===========
\begin{figure}[H]
\centering
\scriptsize

\begin{tikzpicture}
\def\eps{1.0}
\begin{axis}[
  width=0.43\linewidth,     
  height=0.32\linewidth,
  xlabel={$u$}, ylabel={$\ell(u,0)$},
  xmin=-2, xmax=2,
  ymin=0, ymax=2,
  restrict y to domain=0:2,
  grid=major,
  grid style={gray!15},
  legend style={
    at={(0.5,1.05)},
    anchor=south,
    legend columns=-1,
    column sep=6pt,
    font=\scriptsize,
    draw=none,
    fill=none
  },
  ticklabel style={font=\tiny},
]

% -------------------------
% Absolute loss
% -------------------------
\addplot[red, very thick, dashed, domain=-2:2, samples=300]
  {abs(x)};

% -------------------------
% ε–insensitive loss
% -------------------------
\addplot[green!40!black, thick, dashdotdotted, domain=-2:2, samples=300]
  {
    (abs(x) <= \eps) * 0
  + (abs(x) > \eps) * (abs(x) - \eps)
  };

% -------------------------
% Region markers ±ε
% -------------------------
\addplot[gray!60, dashed, very thin] coordinates {(-1,0) (-1,2)};
\addplot[gray!60, dashed, very thin] coordinates {(1,0) (1,2)};

% -------------------------
% Legend
% -------------------------
\legend{Absolute, $\varepsilon$–insensitive}

\end{axis}
\end{tikzpicture}
\caption{
Illustration of the absolute and the $\varepsilon$–insensitive loss ($\varepsilon=1$). 
The $\varepsilon$–insensitive loss is identically zero on $[-\varepsilon,\varepsilon]$, it is $u\mapsto u-\varepsilon$ on $[\varepsilon,\infty)$, and it equals $u\mapsto -u-\varepsilon$ on $(-\infty,-\varepsilon]$. See Fig.~\ref{fig:losses-sq-huber} for an illustration of the squared and the Huber loss.
}
\label{fig:losses-abs-eps}
\end{figure}

\section{Conclusion} \label{sec:conclu}
Brownian kernel ladders provide a recursive function-space construction in which depth is encoded by successive Brownian integral RKHSs rather than by a prescribed finite-dimensional architecture. Starting from linear functionals, the hierarchy repeatedly integrates Brownian pullback kernels over functions from the preceding layer. The nonnegative $1$-homogeneity of the Brownian kernel permits a kernel-preserving canonical spherical normalization and makes the regularity inherited across depth explicit.

The resulting theory has two complementary levels. At the representational level, varying all canonical ladder measures produces a full adaptive envelope equipped with an infimal complexity. We proved recursive H\"older and pointwise estimates, quasi-Banach structure, nestedness, and, under a geometric trace condition, strict growth with ballwise separation. We also established existence of regularized empirical-risk minimizers on this adaptive envelope and the precise uniqueness conclusions for population predictions under strict convexity.

At the estimation level, one realized ladder or a finite dictionary fixed independently of the estimation sample defines a controlled statistical model. Its Gaussian complexity has no explicit ambient-dimension factor and has the standard $n^{-1/2}$ upper-bound dependence, with the factor $1+\sqrt{2\ln M}$ measuring selection among $M$ candidate ladders. The associated radius-constrained estimators satisfy high-probability penalized oracle and excess-risk bounds, and polynomial-size dictionaries retain a near-parametric rate. Thus the paper does not treat representation choice as statistically free: it separates the geometry of the full adaptive hierarchy from the explicit cost of estimating within, or selecting among, realized representations.

Taken together, these results establish BKLs as a mathematically explicit framework for studying hierarchical kernel representations at both the structural and statistical levels. They also point to several next questions. On the analytical side, universality, approximation rates, and depth-dependent representation efficiency remain to be determined. On the statistical side, complexity penalties, entropy conditions, and data-dependent procedures may extend the controlled-dictionary theory to richer or infinite ladder families. On the computational side, finite-dimensional approximations, sparse dictionaries, independent pilot-sample construction, and scalable optimization are natural routes toward practical BKL learning.

\appendix

\begin{figure}[!t]
\centering
\resizebox{\textwidth}{!}{%
  \input{FIG/BKL_dependency_graph_exact_shape_tikz.tex}%
}
\caption{Clickable dependency graph of the paper. The footer of the graph
summarizes the meaning of solid and dashed arrows, auxiliary boxes, and scope
notes. In the electronic PDF, clicking a card jumps to the corresponding
statement or supporting result.}
\label{fig:dependency_graph}
\end{figure}

\section{Additional Notation}\label{app:Additional_notations}
The additional notation used in the appendix includes
$|S|$, $|x|$, $\circ$, $\mu\ll\nu$, $T_\#\mu$, $\|\mu\|_{TV}$,
$\im(f)$, $\mm{Rad}$, $\supp(f)$, and $C_c^\infty(\R)$.

For a finite set $S$, $|S|$ denotes its cardinality, while $|x|$ denotes the
absolute value of $x\in\R$. The composition of functions $f$ and $g$ is denoted
by $f\circ g$. Let $\mu,\nu\in\mP(\mU)$. We write $\mu\ll\nu$ when $\mu$ is
absolutely continuous with respect to $\nu$. If $(\mX,\F_\mX)$ and
$(\mY,\F_\mY)$ are measurable spaces, $\mu\in\mP(\mX)$, and
$T:\mX\to\mY$ is measurable, then the pushforward measure $T_\#\mu$ is defined
by
\begin{align}
\left(T_\#\mu\right)(B)
\coloneqq
\mu\left(T^{-1}(B)\right),
\qquad
B\in\F_\mY.
\end{align}
For every $q\in L^1(\mY,T_\#\mu)$,
\begin{align}
\int_\mY q(y)\d(T_\#\mu)(y)
=
\int_\mX q(T(x))\d\mu(x).
\end{align}
For a finite signed measure $\mu$ on $(\mH,\mA)$, its total variation norm is
\begin{align}
\|\mu\|_{TV}
\coloneqq
\sup\left\{
\sum_{i=1}^{\infty}|\mu(A_i)|:
(A_i)_{i=1}^{\infty}\text{ is a measurable partition of }\mH
\right\}.
\end{align}
The phrase ``convergence in total variation'' means convergence in this norm.
For a function $f:\mX\to\mY$, its image is
\begin{align}
\im(f)
\coloneqq
\left\{f(x):x\in\mX\right\}.
\end{align}

The Rademacher distribution on $\{-1,+1\}$ is denoted by $\mm{Rad}$. A family $\F\subseteq\mC(\mX)$ is equicontinuous if, for every
$\varepsilon>0$, there exists $\delta>0$ such that
\begin{align}
\|\b x-\b y\|_2<\delta
\quad\Longrightarrow\quad
|f(\b x)-f(\b y)|<\varepsilon
\end{align}
for every $f\in\F$ and every $\b x,\b y\in\mX$. A functional
$J:\mC(\mX)\to[-\infty,+\infty]$ is lower semicontinuous with respect to
$\|\cdot\|_\infty$ if
\begin{align}
f_n\to f\text{ in }\|\cdot\|_\infty
\quad\Longrightarrow\quad
J(f)
\le
\liminf_{n\to\infty}J(f_n).
\end{align}
Finally,
\begin{align}
\supp(f)
\coloneqq
\overline{\left\{x\in\R:f(x)\neq0\right\}},
\end{align}
and $C_c^\infty(\R)$ denotes the space of smooth real-valued functions on
$\R$ with compact support.

\Cref{app} is dedicated to the proofs of our main results using internal and external lemmas presented in \Cref{app:auxi_lems} and  \Cref{app:Ext_Stats}, respectively. Our auxiliary theoretical results are summarized in Table~\ref{caption:auxiliary-results}. The principal dependencies among the active results are summarized in \Cref{fig:dependency_graph}.

\section{Proofs}\label{app}
This section includes the proofs of our main  results presented in \Cref{sec:results}:  \Cref{thm:charachteriz_funspace}, \Cref{thm:existence_\BK{}}, \Cref{thm:peeling_all_depth_\BK{}} and \Cref{thm:excess-risk-\BK{}}. 

\subsection{Proof of \texorpdfstring{\Cref{thm:charachteriz_funspace}\ref{thm:HLp-analytical}}{Theorem 1(i)}}

\begin{noheadproof}
Let $f \in \HL$ be arbitrary. By definition of $\HL$, there exists a canonical ladder $\LHmuh \in \LHh$ such that $f \in \HLmuh$. By canonical representation, under Assumption~\ref{assumption:normalizing}, for each $l \in [L-1]$ there exists $\hat\mu^{(l)} \in \Psph\left(\SHlmu\right)$ such that
\begin{align}
\hat k^{(l)} = k\left[\SHlmu,\hat\mu^{(l)}\right],
\qquad
\Hlpmuh = \mH_{\hat k^{(l)}}.
\end{align}
Hence, without loss of generality, we work with this canonical representation. Applying Lemma~\ref{lem:RKHSfunctions_regularity} at the top layer $L$, one has
\begin{align}\label{eq:bd_1}
|f(\b x)|
&\stackrel{(a)}{\le}
\|f\|_{\HLmuh}
\left[\hat k^{(L-1)}(\b x,\b x)\right]^{1/2}
\stackrel{(b)}{\le}
\|f\|_{\HLmuh}
\sqrt[\leftroot{-2}\uproot{0}2^{L-1}]{\|\b x\|_2},
\quad \forall \b x \in \mX,
\end{align}
where (a) follows from \eqref{eq:|f(x)|-bound} and (b) uses Lemma~\ref{lem:kernel_boud_induction}. Similarly,
\begin{align}\label{eq:bd_2}
|f(\b x) - f(\b x')|
&\stackrel{(a)}{\le}
\|f\|_{\HLmuh}
\, d_{\hat k^{(L-1)}}(\b x,\b x')
\stackrel{(b)}{\le}
\|f\|_{\HLmuh}
\sqrt[\leftroot{-2}\uproot{0}2^{L-1}]{\|\b x - \b x'\|_2},
\end{align}
for all $\b x,\b x' \in \mX$, where (a) follows from \eqref{eq:|f(x)-f(x)|-bound} and (b) uses Lemma~\ref{lem:kernel_boud_induction}. By the definition of $\CL(f)$,
\begin{align}
\CL(f)
=
\inf_{\LHmuh \in \LHh \,:\, f \in \HLmuh}
\|f\|_{\HLmuh}.
\end{align}
Taking the infimum over all admissible representations in the bounds of \eqref{eq:bd_1} and \eqref{eq:bd_2} yields
\begin{align}
|f(\b x)|
&\le
\CL(f)
\sqrt[\leftroot{-2}\uproot{0}2^{L-1}]{\|\b x\|_2},\\
|f(\b x) - f(\b x')|
&\le
\CL(f)
\sqrt[\leftroot{-2}\uproot{0}2^{L-1}]{\|\b x - \b x'\|_2},
\end{align}
which proves \Cref{thm:charachteriz_funspace}\ref{thm:HLp-analytical}.
\end{noheadproof}
\hfill \BlackBox

\subsection{Proof of \texorpdfstring{\Cref{thm:charachteriz_funspace}\ref{thm:charachteriz_funspace-3}}{Theorem 1(ii)}}
\begin{noheadproof}
Define
\begin{align}
\kappa_L
\coloneqq
\frac{2}{\sqrt[\leftroot{-2}\uproot{0}2^L]{2}}
=
2^{1-2^{-L}}.
\end{align}
We prove successively that $\HL$ is a vector space, that $\CL$ is a quasi-norm with modulus of concavity at most $\kappa_L$, and that the resulting quasi-normed space is complete.

\medskip
\noindent
\textbf{Step 1} (vector-space structure):

\smallskip
\noindent
\emph{Scalar multiplication.}
Let $f\in\HL$ and $c\in\R$. The case $c=0$ is immediate because the zero function belongs to every top-layer RKHS. Suppose that $c\neq0$. For every canonical ladder $\LHmuh\in\LHh$ satisfying $f\in\HLmuh$, linearity of the RKHS gives
\begin{align}
cf\in\HLmuh,
\qquad
\|cf\|_{\HLmuh}
=
|c|\,\|f\|_{\HLmuh}.
\end{align}
Taking the infimum over all canonical realizations of $f$ yields
\begin{align}
\CL(cf)
\le
|c|\,\CL(f).
\label{eq:abs-homog-forward-repaired}
\end{align}
Applying \eqref{eq:abs-homog-forward-repaired} to $c^{-1}(cf)$ gives the reverse inequality. Consequently,
\begin{align}
\CL(cf)
=
|c|\,\CL(f),
\qquad
f\in\HL,
\quad
c\in\R.
\label{eq:absolute-homogeneity}
\end{align}
In particular, $\HL$ is closed under scalar multiplication.

\smallskip
\noindent
\emph{Addition.}
Let $f_b,f_c\in\HL$. Choose canonical ladders $\LH_{\hat\mu_b},\LH_{\hat\mu_c}\in\LHh$ such that
\begin{align}
f_b\in\mH_{\hat\mu_b}^{(L)},
\qquad
f_c\in\mH_{\hat\mu_c}^{(L)}.
\end{align}
By \Cref{lem:measures_inequality}\ref{lem:two-ladder-amalgamation}, there exists a canonical ladder $\LH_{\hat\mu_a}\in\LHh$ for which
\begin{align}
\mH_{\hat\mu_b}^{(L)}
\cup
\mH_{\hat\mu_c}^{(L)}
\subseteq
\mH_{\hat\mu_a}^{(L)}.
\end{align}
Therefore $f_b+f_c\in\mH_{\hat\mu_a}^{(L)}\subseteq\HL$, and $\HL$ is a vector space.

\medskip
\noindent
\textbf{Step 2} (quasi-norm properties):

\smallskip
\noindent
\emph{Positive definiteness.}
Let $f\in\HL$ satisfy $\CL(f)=0$. For every $\varepsilon>0$, the definition of the infimum provides a canonical ladder $\LHmuh\in\LHh$ such that
\begin{align}
f\in\HLmuh,
\qquad
\|f\|_{\HLmuh}<\varepsilon.
\end{align}
Using the pointwise estimate \eqref{eq:HL-pointwise}, for every $\b x\in\mX$ one obtains
\begin{align}
|f(\b x)|
\le
\varepsilon
\sqrt[\leftroot{-2}\uproot{0}2^{L-1}]{\|\b x\|_2}.
\end{align}
Letting $\varepsilon\downarrow0$ gives $f(\b x)=0$ for every $\b x\in\mX$. Hence $f=0$.

\smallskip
\noindent
\emph{Absolute homogeneity.}
This is exactly \eqref{eq:absolute-homogeneity}.

\smallskip
\noindent
\emph{Quasi-triangle inequality.}
Fix $f_b,f_c\in\HL$ and $\varepsilon>0$. Choose a canonical ladder $\LH_{\hat\mu_b}\in\LHh$ for $f_b$ and a canonical ladder $\LH_{\hat\mu_c}\in\LHh$ for $f_c$ such that
\begin{align}
\|f_b\|_{\mH_{\hat\mu_b}^{(L)}}
&\le
\CL(f_b)+\varepsilon,
&
\|f_c\|_{\mH_{\hat\mu_c}^{(L)}}
&\le
\CL(f_c)+\varepsilon.
\label{eq:epsilon-realizations-repaired}
\end{align}
By \Cref{lem:measures_inequality}\ref{lem:two-ladder-amalgamation}, there exists a canonical ladder $\LH_{\hat\mu_a}\in\LHh$ such that
\begin{align}
\|f_b+f_c\|_{\mH_{\hat\mu_a}^{(L)}}
\le
\kappa_L
\left(
\|f_b\|_{\mH_{\hat\mu_b}^{(L)}}
+
\|f_c\|_{\mH_{\hat\mu_c}^{(L)}}
\right).
\end{align}
Since $\CL$ is the infimum over all canonical realizations, \eqref{eq:epsilon-realizations-repaired} implies
\begin{align}
\CL(f_b+f_c)
&\le
\kappa_L
\left(
\CL(f_b)+\CL(f_c)+2\varepsilon
\right).
\end{align}
Letting $\varepsilon\downarrow0$ yields
\begin{align}
\CL(f_b+f_c)
\le
\kappa_L
\left(
\CL(f_b)+\CL(f_c)
\right).
\label{eq:quasi-triangle-repaired}
\end{align}
Thus $\CL$ is a quasi-norm on $\HL$ with modulus of concavity at most $\kappa_L<2$.

\medskip
\noindent
\textbf{Step 3} (completeness):
Let $(f_n)_{n\ge1}$ be a $\CL$-Cauchy sequence. By \eqref{eq:HL-pointwise} and compactness of $\mX$,
\begin{align}
\|f_n-f_m\|_\infty
\le
R_X^{2^{-(L-1)}}\CL(f_n-f_m),
\qquad
R_X\coloneqq\sup_{\b x\in\mX}\|\b x\|_2<\infty.
\end{align}
Hence $(f_n)_{n\ge1}$ is Cauchy in $\left(C(\mX),\|\cdot\|_\infty\right)$ and converges uniformly to some $f_\infty\in C(\mX)$. Choose a strictly increasing subsequence $(f_{n_j})_{j\ge0}$ such that
\begin{align}
\CL\left(f_{n_{j+1}}-f_{n_j}\right)
\le
2^{-2(j+2)L},
\qquad
j\ge0.
\label{eq:rapid-subsequence-repaired}
\end{align}
Set
\begin{align}
h_0\coloneqq f_{n_0},
\qquad
h_j\coloneqq f_{n_j}-f_{n_{j-1}},
\quad j\ge1.
\end{align}
For $h_0$, choose any canonical realization with finite top-layer norm. For each $j\ge1$, choose a canonical ladder $\LH_{\hat\mu_j}\in\LHh$ such that
\begin{align}
h_j\in\mH_{\hat\mu_j}^{(L)},
\qquad
\|h_j\|_{\mH_{\hat\mu_j}^{(L)}}
\le
\CL(h_j)+2^{-2(j+1)L}
\le
2^{1-2(j+1)L}.
\label{eq:rapid-realizations-repaired}
\end{align}
The final inequality follows from \eqref{eq:rapid-subsequence-repaired} with index $j-1$. Apply \Cref{lemma:TVconvergence} to the sequence of canonical ladders realizing $(h_j)_{j\ge0}$. There exist a canonical ladder $\LH_{\hat\mu_*}\in\LHh$ and a constant $A_L<\infty$, depending only on $L$, such that
\begin{align}
\|h_j\|_{\mH_{\hat\mu_*}^{(L)}}
\le
A_L
2^{(j+1)(L-1)/2}
\|h_j\|_{\mH_{\hat\mu_j}^{(L)}},
\qquad
j\ge0.
\label{eq:countable-embedding-used}
\end{align}
The term $j=0$ has finite norm. For $j\ge1$, combining \eqref{eq:rapid-realizations-repaired} and \eqref{eq:countable-embedding-used} gives
\begin{align}
\|h_j\|_{\mH_{\hat\mu_*}^{(L)}}
&\le
2A_L
2^{(j+1)(L-1)/2}
2^{-2(j+1)L}\\
&=
2A_L
2^{-(j+1)(3L+1)/2}.
\end{align}
The series on the right is summable. Therefore,
\begin{align}
\sum_{j=0}^{\infty}
\|h_j\|_{\mH_{\hat\mu_*}^{(L)}}
<\infty.
\end{align}
Since $\mH_{\hat\mu_*}^{(L)}$ is a Hilbert space, the series $\sum_{j=0}^{\infty}h_j$ converges in $\mH_{\hat\mu_*}^{(L)}$ to some $h$. Its $m$-th partial sum equals $f_{n_m}$. RKHS-norm convergence implies pointwise convergence, whereas $f_{n_m}\to f_\infty$ uniformly. Hence $h=f_\infty$ as functions on $\mX$, so
\begin{align}
f_\infty\in\mH_{\hat\mu_*}^{(L)}\subseteq\HL.
\end{align}
Moreover,
\begin{align}
\CL\left(f_{n_m}-f_\infty\right)
&\le
\left\|f_{n_m}-f_\infty\right\|_{\mH_{\hat\mu_*}^{(L)}}\\
&\le
\sum_{j=m+1}^{\infty}
\|h_j\|_{\mH_{\hat\mu_*}^{(L)}}
\longrightarrow
0.
\end{align}
Because $(f_n)_{n\ge1}$ is $\CL$-Cauchy and the subsequence $(f_{n_m})_{m\ge0}$ converges to $f_\infty$ in $\CL$, the full sequence also converges to $f_\infty$ in $\CL$. Thus $\left(\HL,\CL\right)$ is complete.
\end{noheadproof}
\hfill \BlackBox

\subsection{Proof of \texorpdfstring{\Cref{thm:charachteriz_funspace}\ref{thm:charachteriz_funspace-4}}{Theorem 1(iii)}}
\begin{noheadproof}
\begin{itemize}
\item Inclusion $\HL\subseteq\HLplus$:
By \Cref{thm:charachteriz_funspace}\ref{thm:charachteriz_funspace-3}
applied at depth $L+1$, $\HLplus$ is a vector space and therefore contains the zero
function. It remains to consider a nonzero $f\in\HL$.
By the definition of the adaptive envelope, there exists a canonical $L$-level ladder
$\LHmuh\in\LHh$ such that
\begin{align}
\LHmuh\in\LHh,
\qquad
f\in\HLmuh.
\label{eq:eps-realization}
\end{align}
Set
\begin{align}
q_f
\coloneqq
\left\|f\right\|_{\HLmuh}
>0,
\qquad
\widetilde u_f
\coloneqq
\frac{f}{q_f}
\in
\S_1\left(\HLmuh\right),
\end{align}
and append the canonical measure
\begin{align}
\hat\mu^{(L)}
\coloneqq
\delta_{\widetilde u_f}
\in
\Psph\left(\S_1\left(\HLmuh\right)\right).
\end{align}
This keeps the first $L$ layers unchanged and defines a canonical $(L+1)$-level ladder.
Since $\widetilde u_f\in\mC(\mX)$ and $\mX$ is compact,
\begin{align}
R_f
\coloneqq
\sup_{\b x\in\mX}
\left|\widetilde u_f(\b x)\right|
<\infty.
\label{eq:R-def}
\end{align}
Choose $\chi_f\in C_c^\infty(\R)$ satisfying
$\chi_f(t)=1$ for every $|t|\le R_f$, and define
\begin{align}
\phi_f(t)
\coloneqq
 t\chi_f(t),
\qquad
h_f
\coloneqq
q_f\phi_f.
\label{eq:h-def}
\end{align}
The function $h_f$ is absolutely continuous, satisfies $h_f(0)=0$, and has a
compactly supported derivative in $L^2(\R)$. Hence
$h_f\in\mH_{\kB}$ by \Cref{lem:brownian-rkhs}. Let $Z_f\coloneqq\widetilde u_f(\mX)$. The restriction of $h_f$ to $Z_f$ belongs to
the RKHS of the Brownian kernel restricted to $Z_f\times Z_f$. Applying
\Cref{lem:kernel_transformation} to the surjection
$\widetilde u_f:\mX\to Z_f$ shows that
$h_f\circ\widetilde u_f$ belongs to the last-layer pullback RKHS of the appended
ladder. Moreover, for every $\b x\in\mX$,
\begin{align}
h_f\left(\widetilde u_f(\b x)\right)
&\stackrel{(a)}{=}
q_f\phi_f\left(\widetilde u_f(\b x)\right)
\stackrel{(b)}{=}
q_f\widetilde u_f(\b x)
\stackrel{(c)}{=}
f(\b x),
\qquad
f\in\mH_{\hat\mu}^{(L+1)}.
\label{eq:IRKHS-last-layer}
\end{align}
Here (a) uses \eqref{eq:h-def}, (b) follows from
$\widetilde u_f(\mX)\subseteq[-R_f,R_f]$ and the definition of $\phi_f$, and
(c) uses the definition of $\widetilde u_f$. Thus $f\in\HLplus$, proving
$\HL\subseteq\HLplus$.

\item Strict inclusion $\HL\subsetneq\HLplus$:
Assume the trace condition in the statement.

\medskip
\noindent
\textbf{Step 1} (choice of the Brownian nonlinearity):
Let
\begin{align}
\alpha_L
\coloneqq
2^{-(L-1)}.
\end{align}
Since
\begin{align}
2^{-(L-1)/L}
=
2^{-1+1/L}
>
\frac12,
\end{align}
the interval below is nonempty. Fix
\begin{align}
a
\in
\left(
\frac12,
2^{-(L-1)/L}
\right),
\end{align}
and choose $\eta\in C_c^\infty(\R)$ such that
\begin{align}
0\le\eta\le1,
\qquad
\eta(t)=1
\quad\text{for every }|t|\le\frac12.
\end{align}
Define
\begin{align}
t_+
\coloneqq
\max\{t,0\},
\qquad
g_a(t)
\coloneqq
\eta(t)(t_+)^a.
\end{align}
Then $g_a(0)=0$, $g_a$ is absolutely continuous, and for almost every $t\in\R$,
\begin{align}
g_a'(t)
=
\eta'(t)(t_+)^a
+
a\eta(t)t^{a-1}\I_{(0,\infty)}(t).
\end{align}
The first term belongs to $L^2(\R)$ because it is bounded and compactly supported.
For the second term, the only possible singularity is at zero, and
\begin{align}
\int_0^1 t^{2a-2}\d t
=
\frac{1}{2a-1}
<\infty
\end{align}
because $a>\frac12$. The compact support of $\eta$ controls the remaining part.
Consequently $g_a'\in L^2(\R)$, and
\begin{align}
g_a\in\mH_{\kB}
\end{align}
by \Cref{lem:brownian-rkhs}.

\medskip
\noindent
\textbf{Step 2} (recursive trace and local power bounds):
Starting from the function $u_1$ in the theorem statement, define pointwise
\begin{align}
u_{l+1}
\coloneqq
g_a\circ u_l,
\qquad
l=1,\ldots,L.
\end{align}
We prove that there exist $\delta>0$ and constants $c_l,C_l>0$ such that
\begin{align}
c_l t^{a^{l-1}}
\le
u_l(\gamma(t))
\le
C_l t^{a^{l-1}},
\qquad
0\le t\le\delta,
\qquad
l=1,\ldots,L+1.
\label{eq:um-bounds-trace}
\end{align}
For $l=1$, the claim holds on $[0,\delta_1]$ with
$\delta_1\coloneqq\rho$ and the constants $c_1,C_1$ from the assumption.
Assume that, for some $l\le L$, the claimed bounds hold on
$[0,\delta_l]$. Since $C_l t^{a^{l-1}}\to0$ as $t\downarrow0$, choose
$\delta_{l+1}\in(0,\delta_l]$ such that
\begin{align}
C_l\delta_{l+1}^{a^{l-1}}
\le
\frac12.
\end{align}
Then, for every $t\in[0,\delta_{l+1}]$,
\begin{align}
0
\le
u_l(\gamma(t))
\le
\frac12,
\end{align}
so $\eta\left(u_l(\gamma(t))\right)=1$ and
\begin{align}
u_{l+1}(\gamma(t))
&\stackrel{(a)}{=}
g_a\left(u_l(\gamma(t))\right)
\stackrel{(b)}{=}
u_l(\gamma(t))^a.
\end{align}
Here (a) is the recursive definition and (b) uses the definition of $g_a$ on
$[0,\frac12]$. Raising the induction bounds to the positive power $a$ gives
\begin{align}
c_l^a t^{a^l}
\le
u_{l+1}(\gamma(t))
\le
C_l^a t^{a^l}.
\end{align}
Thus the induction continues with
$c_{l+1}=c_l^a$ and $C_{l+1}=C_l^a$.
After the finitely many steps, set $\delta\coloneqq\delta_{L+1}$.
Since $\delta_{L+1}\le\delta_l$ for every $l$, all the preceding bounds hold
simultaneously on $[0,\delta]$. This proves \eqref{eq:um-bounds-trace}. In particular,
\begin{align}
u_{L+1}(\gamma(t))
\ge
c_{L+1}t^{a^L},
\qquad
0\le t\le\delta.
\label{eq:lower-final-trace}
\end{align}
Moreover, $u_1(\b x_0)=0$ and $g_a(0)=0$ imply recursively that
\begin{align}
u_l(\b x_0)=0,
\qquad
l=1,\ldots,L+1.
\label{eq:zero-trace}
\end{align}
For every $l\in[L]$, the lower bound in \eqref{eq:um-bounds-trace} is strictly
positive when $t\in(0,\delta]$, so $u_l$ is not the zero function.

\medskip
\noindent
\textbf{Step 3} (construction of a canonical $(L+1)$-level ladder):
Set $\mH_{\hat\mu}^{(1)}\coloneqq\Honemu$, so that
$u_1\in\mH_{\hat\mu}^{(1)}$. Suppose recursively that the first $l-1$ canonical
measures have been chosen and
$u_l\in\mH_{\hat\mu}^{(l)}$ for some $l\in[L]$. Define
\begin{align}
q_l
\coloneqq
\left\|u_l\right\|_{\mH_{\hat\mu}^{(l)}}
>0,
\qquad
\widetilde u_l
\coloneqq
\frac{u_l}{q_l}
\in
\S_1\left(\mH_{\hat\mu}^{(l)}\right),
\end{align}
and choose
\begin{align}
\hat\mu^{(l)}
\coloneqq
\delta_{\widetilde u_l}
\in
\Psph\left(
\S_1\left(\mH_{\hat\mu}^{(l)}\right)
\right).
\end{align}
The resulting next-layer kernel is
\begin{align}
\hat k^{(l)}(\b x,\b x')
=
\kB\left(
\widetilde u_l(\b x),
\widetilde u_l(\b x')
\right).
\end{align}
Define
\begin{align}
h_l(s)
\coloneqq
g_a(q_l s).
\end{align}
Then $h_l(0)=0$, and a change of variables gives
\begin{align}
\int_{\R}|h_l'(s)|^2\d s
&\stackrel{(a)}{=}
\int_{\R}q_l^2\left|g_a'(q_l s)\right|^2\d s
\stackrel{(b)}{=}
q_l\int_{\R}|g_a'(t)|^2\d t
<\infty.
\end{align}
Here (a) follows from the chain rule and (b) uses $t=q_ls$.
Thus $h_l\in\mH_{\kB}$ by \Cref{lem:brownian-rkhs}.
Restricting $h_l$ to
$Z_l\coloneqq\widetilde u_l(\mX)$ and applying
\Cref{lem:kernel_transformation} to
$\widetilde u_l:\mX\to Z_l$ yields
\begin{align}
u_{l+1}(\b x)
&\stackrel{(a)}{=}
g_a\left(u_l(\b x)\right)
\stackrel{(b)}{=}
g_a\left(q_l\widetilde u_l(\b x)\right)
\stackrel{(c)}{=}
h_l\left(\widetilde u_l(\b x)\right)
\in
\mH_{\hat\mu}^{(l+1)}.
\end{align}
Here (a) is the recursive definition, (b) uses
$u_l=q_l\widetilde u_l$, and (c) uses the definition of $h_l$.
Induction over $l=1,\ldots,L$ constructs a canonical $(L+1)$-level ladder and gives
\begin{align}
u_{L+1}
\in
\mH_{\hat\mu}^{(L+1)}
\subseteq
\HLplus.
\label{eq:uLplus-trace}
\end{align}

\medskip
\noindent
\textbf{Step 4} (Hölder contradiction):
Suppose, toward a contradiction, that $u_{L+1}\in\HL$. Let $L_\gamma>0$ be a
Lipschitz constant of $\gamma$. Applying \eqref{eq:HL-holder} and
\eqref{eq:zero-trace}, for every $t\in[0,\delta]$ we obtain
\begin{align}
u_{L+1}(\gamma(t))
&\stackrel{(a)}{=}
\left|
u_{L+1}(\gamma(t))-u_{L+1}(\b x_0)
\right|\\
&\stackrel{(b)}{\le}
\CL\left(u_{L+1}\right)
\left\|\gamma(t)-\gamma(0)\right\|_2^{\alpha_L}
\stackrel{(c)}{\le}
\CL\left(u_{L+1}\right)L_\gamma^{\alpha_L}t^{\alpha_L}.
\end{align}
Here (a) uses \eqref{eq:zero-trace}, (b) applies the Hölder estimate of
\Cref{thm:charachteriz_funspace}\ref{thm:HLp-analytical}, and (c) uses the
Lipschitz property of $\gamma$. Combining this upper bound with
\eqref{eq:lower-final-trace} yields
\begin{align}
c_{L+1}t^{a^L}
\le
\CL\left(u_{L+1}\right)L_\gamma^{\alpha_L}t^{\alpha_L}.
\end{align}
By the choice of $a$,
\begin{align}
a^L
<
\alpha_L,
\end{align}
and therefore
\begin{align}
t^{a^L-\alpha_L}
\longrightarrow
\infty
\qquad
(t\downarrow0),
\end{align}
which contradicts the preceding inequality. Hence
\begin{align}
u_{L+1}
\notin
\HL.
\label{eq:notin-trace}
\end{align}
Together with \eqref{eq:uLplus-trace}, this proves
$u_{L+1}\in\HLplus\setminus\HL$ and hence strict inclusion.

\medskip
\noindent
\textbf{Step 5} (ballwise separation):
Set
\begin{align}
f_\star
\coloneqq
u_{L+1},
\qquad
\beta_L
\coloneqq
a^L,
\qquad
\rho_\star
\coloneqq
\delta,
\qquad
c_\star
\coloneqq
c_{L+1},
\qquad
K_\gamma
\coloneqq
L_\gamma^{\alpha_L}.
\end{align}
Then
\begin{align}
f_\star(\b x_0)=0,
\qquad
f_\star(\gamma(t))
\ge
c_\star t^{\beta_L},
\qquad
0\le t\le\rho_\star,
\end{align}
and $\beta_L<\alpha_L$.
Fix $r>0$ and define
\begin{align}
t_r
\coloneqq
\min\left\{
\rho_\star,
\left(
\frac{c_\star}{4rK_\gamma}
\right)^{\frac{1}{\alpha_L-\beta_L}}
\right\}
>0.
\end{align}
Since $t_r$ does not exceed either term in this minimum,
\begin{align}
rK_\gamma t_r^{\alpha_L}
\le
\frac{c_\star}{4}t_r^{\beta_L}.
\label{eq:choose-tr}
\end{align}
Let $g\in\BL_r$ be arbitrary. By \eqref{eq:HL-holder},
\begin{align}
\left|g(\gamma(t_r))-g(\b x_0)\right|
&\stackrel{(a)}{\le}
\CL(g)\left\|\gamma(t_r)-\gamma(0)\right\|_2^{\alpha_L}\\
&\stackrel{(b)}{\le}
rK_\gamma t_r^{\alpha_L}
\stackrel{(c)}{\le}
\frac{c_\star}{4}t_r^{\beta_L}.
\label{eq:g-holder-ball}
\end{align}
Here (a) applies \Cref{thm:charachteriz_funspace}\ref{thm:HLp-analytical},
(b) uses $\CL(g)\le r$ and the Lipschitz property of $\gamma$, and (c) uses
\eqref{eq:choose-tr}. If

\begin{align}
|g(\b x_0)|
\ge
\frac{c_\star}{4}t_r^{\beta_L},
\end{align}
then, since $f_\star(\b x_0)=0$,
\begin{align}
\left\|f_\star-g\right\|_\infty
\ge
|f_\star(\b x_0)-g(\b x_0)|
\ge
\frac{c_\star}{4}t_r^{\beta_L}.
\end{align}
Otherwise,
\begin{align}
|g(\b x_0)|
<
\frac{c_\star}{4}t_r^{\beta_L},
\end{align}
and \eqref{eq:g-holder-ball} gives
\begin{align}
|g(\gamma(t_r))|
&\le
|g(\b x_0)|
+
|g(\gamma(t_r))-g(\b x_0)|
<
\frac{c_\star}{2}t_r^{\beta_L}.
\end{align}
Since $f_\star(\gamma(t_r))\ge c_\star t_r^{\beta_L}$,
\begin{align}
\left\|f_\star-g\right\|_\infty
&\ge
\left|f_\star(\gamma(t_r))-g(\gamma(t_r))\right|\\
&\ge
c_\star t_r^{\beta_L}
-
|g(\gamma(t_r))|
>
\frac{c_\star}{2}t_r^{\beta_L}.
\end{align}
Thus in both cases,
\begin{align}
\left\|f_\star-g\right\|_\infty
\ge
\frac{c_\star}{4}t_r^{\beta_L}.
\end{align}
Taking the infimum over $g\in\BL_r$ yields
\begin{align}
\inf_{g\in\BL_r}
\left\|f_\star-g\right\|_\infty
\ge
\frac{c_\star}{4}t_r^{\beta_L}
>0.
\end{align}
This proves \eqref{eq:BLr-nonapprox}.
\end{itemize}
\end{noheadproof}\hfill \BlackBox\\

\subsection{Proof of \texorpdfstring{\Cref{thm:existence_\BK{}}\ref{thm:existence_\BK{}-1}}{Theorem 2(i)}}
\label{thm-existence_\BK{}:proof-1}
\begin{noheadproof}
Set
\begin{align}
j_*
\coloneqq
\inf_{f\in\HL}J_n(f).
\end{align}
Since $0\in\HL$, $\CL(0)=0$, and the loss is real-valued, one has
\begin{align}
j_*
\le
J_n(0)
=
\frac1n\sum_{i=1}^n\ell(0,y_i)
<
\infty.
\end{align}
On the other hand, \eqref{eq:loss-global-lower-bound} and the nonnegativity of the quasi-norm give, for every $f\in\HL$,
\begin{align}
J_n(f)
&\stackrel{(a)}{=}
\frac1n\sum_{i=1}^n\ell\left(f(\b x_i),y_i\right)
+
\lambda\CL(f)\\
&\stackrel{(b)}{\ge}
b_\ell
+
\lambda\CL(f)
\stackrel{(c)}{\ge}
b_\ell.
\end{align}
Here (a) uses the definitions of $J_n$ and $\mR_n$, (b) applies \eqref{eq:loss-global-lower-bound} to every summand, and (c) uses $\CL(f)\ge0$. Consequently,
\begin{align}
-\infty
<
b_\ell
\le
j_*
<
\infty.
\end{align}
Choose a minimizing sequence $(f_m)_{m\ge1}\subset\HL$ satisfying
\begin{align}
J_n(f_m)
\le
j_*+\frac1m,
\qquad
m\ge1.
\label{eq:RERM-near-minimizing-sequence}
\end{align}
Combining the preceding lower bound with \eqref{eq:RERM-near-minimizing-sequence} yields
\begin{align}
\lambda\CL(f_m)
&\stackrel{(a)}{\le}
J_n(f_m)-b_\ell
\stackrel{(b)}{\le}
j_*+1-b_\ell,
\qquad
m\ge1,
\end{align}
where (a) follows from $J_n(f_m)\ge b_\ell+\lambda\CL(f_m)$ and (b) uses $m^{-1}\le1$. Therefore,
\begin{align}
\CL(f_m)
\le
r_*
\coloneqq
\frac{j_*+1-b_\ell}{\lambda}
<
\infty,
\qquad
m\ge1.
\label{eq:RERM-coercive}
\end{align}
Thus $(f_m)_{m\ge1}\subset\BL_{r_*}$.\par\noindent By \Cref{lem:HL_precompact}, the ball $\BL_{r_*}$ is relatively compact in $\left(\mC(\mX),\|\cdot\|_\infty\right)$. Hence there exist a subsequence, not relabeled, and a function $f\in\mC(\mX)$ such that
\begin{align}
f_m
\longrightarrow
f
\qquad
\text{uniformly on }\mX.
\label{eq:RERM-uniform-subsequence}
\end{align}
By the lower semicontinuity established in \Cref{lem:lsc-CL},
\begin{align}
\CL(f)
\le
\liminf_{m\to\infty}\CL(f_m)
\le
r_*,
\label{eq:RERM-limit-in-HL}
\end{align}
so $f\in\HL$. For each $i\in[n]$, \eqref{eq:RERM-uniform-subsequence} gives $f_m(\b x_i)\to f(\b x_i)$. Since $\ell(\cdot,y_i)$ is continuous,
\begin{align}
\ell\left(f_m(\b x_i),y_i\right)
\longrightarrow
\ell\left(f(\b x_i),y_i\right).
\end{align}
Because the empirical risk is a finite average, it follows that
\begin{align}
\mR_n(f_m)
\longrightarrow
\mR_n(f).
\label{eq:RERM-risk-continuity}
\end{align}
Using \eqref{eq:RERM-limit-in-HL} and \eqref{eq:RERM-risk-continuity},
\begin{align}
J_n(f)
&\stackrel{(a)}{=}
\mR_n(f)+\lambda\CL(f)\\
&\stackrel{(b)}{\le}
\lim_{m\to\infty}\mR_n(f_m)
+
\lambda\liminf_{m\to\infty}\CL(f_m)\\
&\stackrel{(c)}{\le}
\liminf_{m\to\infty}
\left(
\mR_n(f_m)+\lambda\CL(f_m)
\right)\\
&\stackrel{(d)}{=}
j_*.
\end{align}
Equality (a) is the definition of $J_n$. Inequality (b) uses \eqref{eq:RERM-limit-in-HL} and \eqref{eq:RERM-risk-continuity}. Inequality (c) uses the elementary fact that if $a_m\to a$, then $a+\liminf_m b_m\le\liminf_m(a_m+b_m)$. Equality (d) follows from \eqref{eq:RERM-near-minimizing-sequence}. Since $j_*$ is the infimum of $J_n$ over $\HL$, the reverse inequality $j_*\le J_n(f)$ holds automatically. Hence
\begin{align}
J_n(f)
=
j_*,
\end{align}
and $f$ is a minimizer of \eqref{eq:fhatlambda}.
\end{noheadproof}\hfill\BlackBox

\subsection{Proof of \texorpdfstring{\Cref{thm:existence_\BK{}}\ref{thm:existence_\BK{}-2}}{Theorem 2(ii)}}
\label{thm-existence_\BK{}:proof}
\begin{noheadproof}
Let $f^*,g^*\in\argmin_{f\in\HL}\mR(f)$ and fix $t\in(0,1)$. Since $\HL$ is a vector space by \Cref{thm:charachteriz_funspace}\ref{thm:charachteriz_funspace-3},
\begin{align}
f_t
\coloneqq
tf^*+(1-t)g^*
\in
\HL.
\end{align}
The assumption $\mR(0)<\infty$ implies that the minimum risk is finite, and therefore
\begin{align}
\mR(f^*)
=
\mR(g^*)
=
\inf_{h\in\HL}\mR(h)
<
\infty.
\end{align}
Define the shifted loss
\begin{align}
\widetilde\ell(u,y)
\coloneqq
\ell(u,y)-b_\ell.
\end{align}
By \eqref{eq:loss-global-lower-bound}, $\widetilde\ell\ge0$, and strict convexity of $\ell(\cdot,y)$ is equivalent to strict convexity of $\widetilde\ell(\cdot,y)$. Convexity gives, for every $(x,y)\in\mX\times\mY$,
\begin{align}
\widetilde\ell\left(f_t(x),y\right)
\le
t\widetilde\ell\left(f^*(x),y\right)
+
(1-t)\widetilde\ell\left(g^*(x),y\right).
\label{eq:population-convex-loss-pointwise}
\end{align}
The right-hand side of \eqref{eq:population-convex-loss-pointwise} is integrable because $f^*$ and $g^*$ have finite risk. Integrating and adding back $b_\ell$ yields
\begin{align}
\mR(f_t)
\le
t\mR(f^*)+(1-t)\mR(g^*)
=
\inf_{h\in\HL}\mR(h).
\end{align}
By the definition of the infimum, equality must hold throughout, so $f_t$ is also a population minimizer. Set
\begin{align}
A
\coloneqq
\left\{
 x\in\mX:
 f^*(x)\neq g^*(x)
\right\}.
\end{align}
Suppose, for contradiction, that $\nu_X(A)>0$. Define
\begin{align}
\Delta_t(x,y)
&\coloneqq
t\widetilde\ell\left(f^*(x),y\right)
+
(1-t)\widetilde\ell\left(g^*(x),y\right)\\
&\hspace{1.3cm}
-
\widetilde\ell\left(f_t(x),y\right).
\end{align}
By convexity, $\Delta_t\ge0$ everywhere. By strict convexity,
\begin{align}
\Delta_t(x,y)
>
0,
\qquad
x\in A,
\quad
 y\in\mY.
\end{align}
Moreover,
\begin{align}
0
\le
\Delta_t(x,y)
\le
t\widetilde\ell\left(f^*(x),y\right)
+
(1-t)\widetilde\ell\left(g^*(x),y\right),
\end{align}
so $\Delta_t$ is integrable. Since
\begin{align}
\nu_{XY}\left(A\times\mY\right)
=
\nu_X(A)
>
0,
\end{align}
a nonnegative measurable function that is strictly positive on $A\times\mY$ has strictly positive integral. Hence
\begin{align}
\int_{\mX\times\mY}
\Delta_t(x,y)
\,\d\nu_{XY}(x,y)
>
0.
\end{align}
Equivalently,
\begin{align}
\mR(f_t)
<
t\mR(f^*)+(1-t)\mR(g^*),
\end{align}
which contradicts minimality. Therefore,
\begin{align}
\nu_X(A)
=
0,
\end{align}
and \eqref{eq:population-minimizer-ae-uniqueness} follows. Finally, assume $\operatorname{supp}(\nu_X)=\mX$. If $f^*(x_0)\neq g^*(x_0)$ at some $x_0\in\mX$, continuity of $f^*-g^*$ gives an open neighborhood $U$ of $x_0$ in $\mX$ on which $f^*\neq g^*$. Since $x_0\in\operatorname{supp}(\nu_X)$, one has $\nu_X(U)>0$, contradicting the almost-everywhere equality already proved. Thus $f^*=g^*$ pointwise on $\mX$.
\end{noheadproof}\hfill\BlackBox

\subsection{Proof of \texorpdfstring{\Cref{thm:peeling_all_depth_\BK{}}}{Theorem 3}}
\label{thm:peeling_all_depth_\BK{}:proof}
\begin{noheadproof}
Fix a sample $\{\b x_i\}_{i=1}^{n}\subset\mX$ and abbreviate
\begin{align}
\rho_{L,n}
\coloneqq
\left(
\max_{i\in[n]}
\left\|\b x_i\right\|_2^2
\right)^{2^{-L}}.
\end{align}
For each $m\in[M]$, let
\begin{align}
k_m
\coloneqq
\hat k_m^{(L-1)}
\end{align}
be the kernel of the top-layer RKHS $\mH_{\hat\mu_m}^{(L)}$, and let $\b K_m\in\R^{n\times n}$ be its Gram matrix,
\begin{align}
(\b K_m)_{ij}
\coloneqq
k_m(\b x_i,\b x_j),
\qquad
 i,j\in[n].
\end{align}
For $\bm\varepsilon\in\R^n$, define
\begin{align}
Z_m(\bm\varepsilon)
\coloneqq
\sup_{f\in\B_r\left(\mH_{\hat\mu_m}^{(L)}\right)}
\frac{1}{n}
\sum_{i=1}^{n}
\varepsilon_i f(\b x_i).
\end{align}
Since $\BLdictr$ is the union of these $M$ RKHS balls, one has
\begin{align}
\eG\left(\BLdictr\right)
\stackrel{(a)}{=}
\eE\left[
\max_{m\in[M]}
Z_m(\bm\varepsilon)
\right],
\label{eq:controlled-union-max}
\end{align}
where (a) follows from \eqref{eq:controlled-BKL-ball} and the fact that the supremum of a function over a finite union is the maximum of the individual suprema. We first study one fixed $m\in[M]$. By the reproducing property and Hilbert-space duality,
\begin{align}
Z_m(\bm\varepsilon)
&\stackrel{(a)}{=}
\sup_{\left\|f\right\|_{\mH_{\hat\mu_m}^{(L)}}\le r}
\left\langle
f,
\frac{1}{n}
\sum_{i=1}^{n}
\varepsilon_i
k_m(\cdot,\b x_i)
\right\rangle_{\mH_{\hat\mu_m}^{(L)}}
\nonumber\\
&\stackrel{(b)}{=}
\frac{r}{n}
\left\|
\sum_{i=1}^{n}
\varepsilon_i
k_m(\cdot,\b x_i)
\right\|_{\mH_{\hat\mu_m}^{(L)}}
\nonumber\\
&\stackrel{(c)}{=}
\frac{r}{n}
\sqrt{
\bm\varepsilon^{\top}
\b K_m
\bm\varepsilon
}.
\label{eq:controlled-rkhs-duality}
\end{align}
Here (a) uses
\begin{align}
f(\b x_i)
=
\left\langle
f,k_m(\cdot,\b x_i)
\right\rangle_{\mH_{\hat\mu_m}^{(L)}},
\end{align}
(b) evaluates the support function of the radius-$r$ Hilbert ball, and (c) expands the squared RKHS norm and applies the reproducing property once more. By \eqref{eq:nested:point-eval} in \Cref{lem:kernel_boud_induction}, for every $i\in[n]$,
\begin{align}
\sqrt{k_m(\b x_i,\b x_i)}
&\stackrel{(a)}{\le}
\left\|\b x_i\right\|_2^{2^{-(L-1)}}
\stackrel{(b)}{=}
\left(
\left\|\b x_i\right\|_2^2
\right)^{2^{-L}}
\stackrel{(c)}{\le}
\rho_{L,n}.
\label{eq:controlled-diagonal-bound}
\end{align}
Here (a) applies the nested diagonal estimate at layer $l=L-1$, (b) rewrites the exponent, and (c) uses the definition of $\rho_{L,n}$. Squaring \eqref{eq:controlled-diagonal-bound} and summing over $i$ gives
\begin{align}
\operatorname{tr}(\b K_m)
=
\sum_{i=1}^{n}
k_m(\b x_i,\b x_i)
\le
n\rho_{L,n}^2.
\label{eq:controlled-trace-bound}
\end{align}
Consequently, Jensen's inequality and the identity
$\eE\left[\bm\varepsilon^{\top}\b K_m\bm\varepsilon\right]=\operatorname{tr}(\b K_m)$ yield
\begin{align}
\eE\left[Z_m(\bm\varepsilon)\right]
&\stackrel{(a)}{\le}
\frac{r}{n}
\sqrt{
\eE\left[
\bm\varepsilon^{\top}
\b K_m
\bm\varepsilon
\right]
}
\stackrel{(b)}{=}
\frac{r}{n}
\sqrt{
\operatorname{tr}(\b K_m)
}
\stackrel{(c)}{\le}
\frac{r\rho_{L,n}}{\sqrt n}.
\label{eq:controlled-single-mean}
\end{align}
Here (a) applies Jensen's inequality to the concave square-root function in \eqref{eq:controlled-rkhs-duality}, (b) uses that the coordinates of $\bm\varepsilon$ are independent standard Gaussian variables, and (c) applies \eqref{eq:controlled-trace-bound}. We next verify a common Gaussian Lipschitz constant. For arbitrary
$\bm\varepsilon,\bm\varepsilon'\in\R^n$, the triangle inequality for the seminorm induced by the positive-semidefinite matrix $\b K_m$ gives
\begin{align}
\left|
Z_m(\bm\varepsilon)
-
Z_m(\bm\varepsilon')
\right|
&\stackrel{(a)}{\le}
\frac{r}{n}
\sqrt{
(\bm\varepsilon-\bm\varepsilon')^{\top}
\b K_m
(\bm\varepsilon-\bm\varepsilon')
}
\nonumber\\
&\stackrel{(b)}{\le}
\frac{r}{n}
\sqrt{\lambda_{\max}(\b K_m)}
\left\|\bm\varepsilon-\bm\varepsilon'\right\|_2
\nonumber\\
&\stackrel{(c)}{\le}
\frac{r\rho_{L,n}}{\sqrt n}
\left\|\bm\varepsilon-\bm\varepsilon'\right\|_2.
\label{eq:controlled-lipschitz-bound}
\end{align}
Here (a) uses \eqref{eq:controlled-rkhs-duality}, (b) uses the spectral bound for a positive-semidefinite matrix, and (c) uses
\begin{align}
\lambda_{\max}(\b K_m)
\le
\operatorname{tr}(\b K_m)
\le
n\rho_{L,n}^2.
\end{align}
Set
\begin{align}
\sigma
\coloneqq
\frac{r\rho_{L,n}}{\sqrt n}.
\end{align}
If $\sigma=0$, then \eqref{eq:controlled-trace-bound} gives $\b K_m=0$ for every $m\in[M]$, and consequently \eqref{eq:controlled-rkhs-duality} gives $Z_m\equiv0$. In this case \eqref{eq:controlled-empirical-gaussian-bound} is immediate. Hence, for the remainder of the proof, assume that $\sigma>0$. By the Gaussian concentration inequality for Lipschitz functions of a standard Gaussian vector \cite[Theorem~2.26]{wainwright2019highdimensional}, \eqref{eq:controlled-lipschitz-bound} implies that, for every $t\in\R$,
\begin{align}
\eE\left[
\exp\left(
 t\left(
 Z_m(\bm\varepsilon)
-
\eE[Z_m(\bm\varepsilon)]
 \right)
\right)
\right]
\le
\exp\left(
\frac{t^2\sigma^2}{2}
\right).
\label{eq:controlled-subgaussian-mgf}
\end{align}
If $M=1$, then
\begin{align}
\eE\left[
\max_{m\in[M]}
\left(
Z_m-\eE Z_m
\right)
\right]
=0.
\end{align}
Assume now that $M\ge2$. For every $t>0$,
\begin{align}
\MoveEqLeft
\eE\left[
\max_{m\in[M]}
\left(
Z_m-\eE Z_m
\right)
\right]
\nonumber\\
&\stackrel{(a)}{\le}
\frac{1}{t}
\eE\left[
\ln\left(
\sum_{m=1}^{M}
\exp\left(
 t\left(
 Z_m-\eE Z_m
 \right)
\right)
\right)
\right]
\nonumber\\
&\stackrel{(b)}{\le}
\frac{1}{t}
\ln\left(
\sum_{m=1}^{M}
\eE\left[
\exp\left(
 t\left(
 Z_m-\eE Z_m
 \right)
\right)
\right]
\right)
\nonumber\\
&\stackrel{(c)}{\le}
\frac{\ln M}{t}
+
\frac{t\sigma^2}{2}.
\label{eq:controlled-log-sum-exp}
\end{align}
Here (a) uses the log-sum-exp upper bound for a finite maximum, (b) applies Jensen's inequality to the concave logarithm, and (c) applies \eqref{eq:controlled-subgaussian-mgf}. Choosing
\begin{align}
t
=
\frac{\sqrt{2\ln M}}{\sigma}
\end{align}
in \eqref{eq:controlled-log-sum-exp} gives
\begin{align}
\eE\left[
\max_{m\in[M]}
\left(
Z_m-\eE Z_m
\right)
\right]
\le
\sigma\sqrt{2\ln M}.
\label{eq:controlled-centered-maximum}
\end{align}
The same inequality is valid for $M=1$ because both sides are then zero. Combining
\eqref{eq:controlled-union-max}, \eqref{eq:controlled-single-mean}, and \eqref{eq:controlled-centered-maximum}, we obtain
\begin{align}
\eG\left(\BLdictr\right)
&\stackrel{(a)}{\le}
\max_{m\in[M]}
\eE\left[Z_m\right]
+
\eE\left[
\max_{m\in[M]}
\left(
Z_m-\eE Z_m
\right)
\right]
\nonumber\\
&\stackrel{(b)}{\le}
\frac{r\rho_{L,n}}{\sqrt n}
+
\frac{r\rho_{L,n}}{\sqrt n}
\sqrt{2\ln M}
\nonumber\\
&\stackrel{(c)}{=}
\frac{r\rho_{L,n}}{\sqrt n}
\left(1+\sqrt{2\ln M}\right).
\end{align}
Here (a) adds and subtracts the individual means and uses
$\max_m(a_m+b_m)\le\max_m a_m+\max_m b_m$, (b) applies \eqref{eq:controlled-single-mean} and \eqref{eq:controlled-centered-maximum}, and (c) factors the common term. This is \eqref{eq:controlled-empirical-gaussian-bound}. Finally, take expectation with respect to $(X_i)_{i=1}^{n}\sim\nu_X^n$. By \eqref{eq:controlled-empirical-gaussian-bound},
\begin{align}
\G\left(\BLdictr\right)
&\stackrel{(a)}{\le}
\frac{r\kappa_M}{\sqrt n}
\DE\left[
\left(
\max_{i\in[n]}
\left\|X_i\right\|_2^2
\right)^{2^{-L}}
\right]
\nonumber\\
&\stackrel{(b)}{\le}
\frac{r\kappa_M}{\sqrt n}
\left(
\DE\left[
\max_{i\in[n]}
\left\|X_i\right\|_2^2
\right]
\right)^{2^{-L}}.
\end{align}
Here (a) uses the definition of $\rho_{L,n}$ and (b) applies Jensen's inequality because $2^{-L}\in(0,1]$ and the map $s\mapsto s^{2^{-L}}$ is concave on $[0,\infty)$. This proves \eqref{eq:gaussian_contraction_\BK{}}.
\end{noheadproof}\hfill\BlackBox\\

\subsection{Proof of \texorpdfstring{\Cref{thm:excess-risk-\BK{}}}{Theorem 4}}
\label{thm-excess-risk-\BK{}:proof}

\begin{noheadproof}
Set
\begin{align}
\rho_L
\coloneqq
B^{2^{-L}}.
\end{align}
We first record the deterministic envelope of the controlled radius-$r$ class. Let $f\in\BLdictr$. By \eqref{eq:controlled-BKL-ball}, there exists $m\in[M]$ such that
\begin{align}
f\in\mH_{\hat\mu_m}^{(L)},
\qquad
\left\|f\right\|_{\mH_{\hat\mu_m}^{(L)}}
\le
r.
\end{align}
Writing $k_m\coloneqq\hat k_m^{(L-1)}$ for the top-layer kernel of that ladder, \Cref{lem:RKHSfunctions_regularity} and \Cref{lem:kernel_boud_induction} yield, for every $\b x\in\mX$,
\begin{align}
\left|f(\b x)\right|
&\le
\left\|f\right\|_{\mH_{\hat\mu_m}^{(L)}}
\sqrt{k_m(\b x,\b x)}
\nonumber\\
&\le
r
\left\|\b x\right\|_2^{2^{-(L-1)}}
\le
rB^{2^{-L}}
=
r\rho_L.
\label{eq:controlled-class-envelope}
\end{align}
Thus \eqref{eq:controlled-loss-range} guarantees that every prediction occurring below lies in the interval on which \eqref{eq:controlled-loss-boundedness}--\eqref{eq:controlled-loss-lipschitz} hold.

\medskip
\noindent
\textbf{Step 1} (existence of the controlled RERM solution):
For each $m\in[M]$, consider
\begin{align}
\inf_{\substack{f\in\mH_{\hat\mu_m}^{(L)}\\
\left\|f\right\|_{\mH_{\hat\mu_m}^{(L)}}\le r}}
\left\{
\mR_n(f)
+
\lambda
\left\|f\right\|_{\mH_{\hat\mu_m}^{(L)}}
\right\}.
\end{align}
The closed radius-$r$ ball of the Hilbert space $\mH_{\hat\mu_m}^{(L)}$ is weakly compact. Point evaluation is a bounded linear functional and is therefore weakly continuous. Since the sample is finite and $\ell(\cdot,y_i)$ is continuous on the relevant prediction interval, $\mR_n$ is weakly continuous on this ball, while the RKHS norm is weakly lower semicontinuous. Hence the displayed problem attains its minimum for every $m$. Taking the minimum over the finite set $[M]$ produces a minimizing pair $(\hat m,\hat f)$. By the attainment of the finite minimum in \eqref{eq:controlled-BKL-complexity}, minimizing over such pairs is equivalent to minimizing
\begin{align}
\mR_n(f)+\lambda\CLdict(f)
\end{align}
over $f\in\BLdictr$. Therefore $\hat f$ is a solution of \eqref{eq:controlled-RERM}, proving nonemptiness of its solution set.

\medskip
\noindent
\textbf{Step 2} (uniform pairwise deviation on the controlled ball):
For a sample
\begin{align}
s
=
\left\{
(\b x_i,y_i)
\right\}_{i=1}^{n}
\in
(\mX\times\mY)^n,
\end{align}
write
\begin{align}
\mR_{n,s}(f)
\coloneqq
\frac1n
\sum_{i=1}^{n}
\ell\left(f(\b x_i),y_i\right),
\end{align}
and define
\begin{align}
D_s(f)
&\coloneqq
\mR(f)-\mR_{n,s}(f),\\
\Phi_r(s)
&\coloneqq
\sup_{f,g\in\BLdictr}
\left\{
D_s(f)-D_s(g)
\right\}.
\end{align}
Equivalently,
\begin{align}
\Phi_r(s)
=
\sup_{f\in\BLdictr}D_s(f)
-
\inf_{g\in\BLdictr}D_s(g).
\end{align}
Let $S=\{(X_i,Y_i)\}_{i=1}^{n}\sim\nu_{XY}^{n}$ and let $\bm\sigma=(\sigma_i)_{i=1}^{n}\sim\mm{Rad}^{n}$ be independent. Standard symmetrization \cite[Lemma~26.2]{shalev2014understanding} gives
\begin{align}
\E\left[\Phi_r(S)\right]
&\le
2\,
\E
\left[
\sup_{f,g\in\BLdictr}
\frac1n
\sum_{i=1}^{n}
\sigma_i
\left(
\ell(f(X_i),Y_i)
-
\ell(g(X_i),Y_i)
\right)
\right].
\label{eq:controlled-pairwise-symmetrization}
\end{align}
For fixed data, define
\begin{align}
\varphi_i(u)
\coloneqq
\ell(u,Y_i)-\ell(0,Y_i).
\end{align}
Then $\varphi_i(0)=0$ and $\varphi_i$ is $L_\ell$-Lipschitz on the relevant interval. The constant terms cancel in the pairwise difference, and
\begin{align}
&\sup_{f,g\in\BLdictr}
\frac1n
\sum_{i=1}^{n}
\sigma_i
\left(
\ell(f(X_i),Y_i)-\ell(g(X_i),Y_i)
\right)
\\
&\qquad\le
2
\sup_{f\in\BLdictr}
\left|
\frac1n
\sum_{i=1}^{n}
\sigma_i\varphi_i(f(X_i))
\right|.
\end{align}
The contraction inequality \cite[Eq.~(4.20)]{Ledoux1991}, followed by the Gaussian--Rademacher comparison inequality \cite[Exercise~(5.5)]{wainwright2019highdimensional}, therefore implies
\begin{align}
\E\left[\Phi_r(S)\right]
&\le
C_1L_\ell
\G\left(\BLdictr\right)
\nonumber\\
&\le
C_1L_\ell
\frac{r\kappa_M}{\sqrt n}
\left(
\E
\left[
\max_{i\in[n]}
\left\|X_i\right\|_2^2
\right]
\right)^{2^{-L}}
\nonumber\\
&\le
C_1L_\ell
\frac{r\kappa_M\rho_L}{\sqrt n},
\label{eq:controlled-expected-pairwise-deviation}
\end{align}
where $C_1>0$ is universal, the second inequality uses \Cref{thm:peeling_all_depth_\BK{}}, and the last one uses $\|X_i\|_2^2\le B$. We also used that $\BLdictr$ is symmetric, since it is a union of centered RKHS balls; hence the absolute Gaussian supremum coincides with the Gaussian complexity in \eqref{eq:Gn-bound}. We next apply bounded differences. Let $s^{(i)}$ be obtained from $s$ by replacing $(\b x_i,y_i)$ with $(\b x_i',y_i')$. The inequality
\begin{align}
\left|
\sup_u A(u)-\sup_u B(u)
\right|
\le
\sup_u
\left|A(u)-B(u)\right|
\end{align}
and the definition of $\Phi_r$ imply
\begin{align}
\left|
\Phi_r(s)-\Phi_r(s^{(i)})
\right|
&\le
\frac1n
\sup_{f,g\in\BLdictr}
\left|
\ell(f(\b x_i),y_i)-\ell(g(\b x_i),y_i)
\right|
\\
&\quad+
\frac1n
\sup_{f,g\in\BLdictr}
\left|
\ell(f(\b x_i'),y_i')-\ell(g(\b x_i'),y_i')
\right|.
\end{align}
By \eqref{eq:controlled-loss-lipschitz} and \eqref{eq:controlled-class-envelope},
\begin{align}
\left|
\ell(f(\b x),y)-\ell(g(\b x),y)
\right|
&\le
L_\ell
\left|f(\b x)-g(\b x)\right|
\\
&\le
2L_\ell r\rho_L.
\end{align}
Consequently,
\begin{align}
\left|
\Phi_r(s)-\Phi_r(s^{(i)})
\right|
\le
\frac{4L_\ell r\rho_L}{n}.
\label{eq:controlled-pairwise-bounded-difference}
\end{align}
McDiarmid's inequality \cite{mcdiarmid1989method}, together with \eqref{eq:controlled-expected-pairwise-deviation}, now gives a universal constant $C_2>0$ such that, with probability at least $1-\delta$,
\begin{align}
\Phi_r(S)
&\le
C_2L_\ell r\rho_L
\left[
\frac{\kappa_M}{\sqrt n}
+
\sqrt{
\frac{\ln(1/\delta)}{n}
}
\right]
\nonumber\\
&\le
r\Lambda_{n,M,L,\delta},
\label{eq:controlled-pairwise-high-probability}
\end{align}
after choosing the universal constant $C$ in \eqref{eq:controlled-statistical-scale} sufficiently large.

\medskip
\noindent
\textbf{Step 3} (oracle and excess-risk inequalities):
Let $\hat f_{\lambda,r,\BKLdict}$ be any solution of \eqref{eq:controlled-RERM}, and let $f\in\BLdictr$ be arbitrary. By optimality,
\begin{align}
\mR_n\left(\hat f_{\lambda,r,\BKLdict}\right)
+
\lambda\CLdict\left(\hat f_{\lambda,r,\BKLdict}\right)
\le
\mR_n(f)
+
\lambda\CLdict(f).
\label{eq:controlled-RERM-optimality}
\end{align}
Therefore,
\begin{align}
\mR\left(\hat f_{\lambda,r,\BKLdict}\right)-\mR(f)
&=
D_S\left(\hat f_{\lambda,r,\BKLdict}\right)-D_S(f)
\\
&\quad+
\mR_n\left(\hat f_{\lambda,r,\BKLdict}\right)-\mR_n(f)
\\
&\le
\Phi_r(S)
+
\lambda\CLdict(f)
-
\lambda\CLdict\left(\hat f_{\lambda,r,\BKLdict}\right)
\\
&\le
r\Lambda_{n,M,L,\delta}
+
\lambda\CLdict(f),
\end{align}
on the event in \eqref{eq:controlled-pairwise-high-probability}. Rearranging and taking the infimum over $f\in\BLdictr$ proves \eqref{eq:controlled-penalized-oracle}. Finally, every $f\in\BLdictr$ satisfies $\CLdict(f)\le r$. Choose an $\eta$-optimal sequence for $\inf_{f\in\BLdictr}\mR(f)$. Then
\begin{align}
\inf_{f\in\BLdictr}
\left\{
\mR(f)+\lambda\CLdict(f)
\right\}
\le
\inf_{f\in\BLdictr}
\mR(f)
+
\lambda r.
\end{align}
Combining this inequality with \eqref{eq:controlled-penalized-oracle} proves \eqref{eq:exessriskbound}. The special cases \eqref{eq:controlled-near-parametric-excess} and $\lambda=0$ follow immediately.
\end{noheadproof}\hfill \BlackBox\\

\section{Internal Lemmas}\label{app:auxi_lems}
\setcounter{lemma}{0}
\renewcommand{\thelemma}{C\arabic{lemma}}

\begin{table}
\centering
\small
\begin{tabular}{@{}p{0.18\textwidth}p{0.61\textwidth}p{0.14\textwidth}@{}}
\toprule
Result & Content & Page\\
\midrule
\Cref{lem:RKHSfunctions_regularity}& pointwise evaluation, Lipschitz continuity & page~\pageref{lem:RKHSfunctions_regularity}\\
\Cref{lem:kernel_boud_induction}& nested nested pointwise evaluation and Lipschitz continuity & page~\pageref{lem:kernel_boud_induction}\\
\Cref{lem:BK}& well-defined Brownian integral kernels & page~\pageref{lem:BK}\\
\Cref{lem:kernel_transformation}& kernel pullback and restriction & page~\pageref{lem:kernel_transformation}\\
\Cref{lem:measure_constraint_sphere}& measure support on the sphere and kernel preservation & page~\pageref{lem:measure_constraint_sphere}\\
\Cref{lem:measures_inequality}& two-ladder amalgamation & page~\pageref{lem:measures_inequality}\\
\Cref{lemma:TVconvergence}& countable canonical amalgamation & page~\pageref{lemma:TVconvergence}\\
\Cref{lem:line-segment-implies-trace}& transverse-segment sufficient condition for strict depth & page~\pageref{lem:line-segment-implies-trace}\\
\Cref{lem:brownian-rkhs}& Brownian RKHS characterization & page~\pageref{lem:brownian-rkhs}\\
\Cref{lem:lsc-CL}& closed adaptive balls and lower semicontinuity of $\CL$ & page~\pageref{lem:lsc-CL}\\
\Cref{lem:HL_precompact}& relative compactness of adaptive quasi-norm balls & page~\pageref{lem:HL_precompact}\\
\Cref{lem:loss-bounds}& boundedness and common loss bounds & page~\pageref{lem:loss-bounds}\\
\bottomrule
\end{tabular}
\caption{Auxiliary results in the dependency chain; see Fig.~\ref{fig:dependency_graph}. For the main results, see Table~\ref{caption:main-results}.}
\label{caption:auxiliary-results}
\end{table}

In this section, we present our own lemmas used in \Cref{app}.

\begin{lemma}[pointwise evaluation, Lipschitz continuity]\label{lem:RKHSfunctions_regularity}
Let $\mH_k$ be an RKHS on $\mX$ with kernel function $k$. Then, for each $f \in \mH_k$  
 \begin{align}
     |f(\b x)| \leq \|f\|_{\mH_k}[k(\b x,\b x)]^{1/2}, \quad \forall \b x \in \mX, \label{eq:|f(x)|-bound}\\
          |f(\b x) - f(\b x')| \leq \|f\|_{\mH_k}d_{k}(\b x,\b x'), \quad \forall \b x,\b x' \in \mX,\label{eq:|f(x)-f(x)|-bound}
 \end{align}
where   
\begin{align}
d_{k}(\b x,\b x') &\coloneqq \left[k(\b x,\b x) -2k(\b x,\b x') + k(\b x',\b x')\right]^{1/2}. \label{eq:def:d_k}
\end{align}
\end{lemma}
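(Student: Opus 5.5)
Both inequalities follow from the reproducing property together with the Cauchy--Schwarz inequality in $\mH_k$. I would first fix $f\in\mH_k$ and $\b x\in\mX$. By the reproducing property, $f(\b x)=\langle f,k_{\b x}\rangle_{\mH_k}$, so Cauchy--Schwarz gives
\begin{align}
|f(\b x)|\le\|f\|_{\mH_k}\,\|k_{\b x}\|_{\mH_k}.
\end{align}
Applying the reproducing property to $k_{\b x}$ itself yields $\|k_{\b x}\|_{\mH_k}^2=\langle k_{\b x},k_{\b x}\rangle_{\mH_k}=k(\b x,\b x)$. This gives \eqref{eq:|f(x)|-bound}.

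For \eqref{eq:|f(x)-f(x)|-bound}, I would use linearity of the inner product to write
\begin{align}
f(\b x)-f(\b x')=\langle f,k_{\b x}-k_{\b x'}\rangle_{\mH_k}.
\end{align}
Cauchy--Schwarz then bounds the absolute value by $\|f\|_{\mH_k}\|k_{\b x}-k_{\b x'}\|_{\mH_k}$. Expanding the squared norm with the reproducing property and the symmetry of $k$ gives
\begin{align}
\|k_{\b x}-k_{\b x'}\|_{\mH_k}^2=k(\b x,\b x)-2k(\b x,\b x')+k(\b x',\b x'),
\end{align}
which is $d_k(\b x,\b x')^2$ as defined in \eqref{eq:def:d_k}.

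I do not expect any step to be a real obstacle. The only point to check is that $d_k$ is well defined, that is, that the radicand is nonnegative. This is automatic, because the radicand equals a squared Hilbert norm; equivalently, it is the positive-semidefiniteness condition for $k$ applied with the coefficient vector $(1,-1)$ at the points $(\b x,\b x')$.
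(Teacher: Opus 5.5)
Your proof is correct and follows the paper's argument: the paper proves the Lipschitz bound exactly as you do, via the reproducing property, Cauchy--Schwarz, and expansion of $\|k(\b x,\cdot)-k(\b x',\cdot)\|_{\mH_k}^2$ using symmetry of $k$. For the pointwise bound the paper only cites Steinwart--Christmann (Lemma~4.23), while you prove it directly by the same reproducing-property and Cauchy--Schwarz step; your remark that the radicand of $d_k$ is nonnegative is a small addition the paper leaves implicit.
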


\begin{proof}~

\begin{itemize}
    \item \tb{Part-1 (pointwise evaluation)}: \eqref{eq:|f(x)|-bound} is a known result \cite[Lemma~4.23]{steinwart08support}.
    \item \tb{Part-2 (Lipschitz continuity)}: One has
        \begin{align}
        \left|f(\b x) - f(\b x')\right| & \stackrel{(a)}{=} \left|\langle f, k(\b x, \cdot) - k(\b x', \cdot) \rangle_{\mH_k}\right| \stackrel{(b)}{\le} \|f\|_{\mH_k} \|k(\b x, \cdot) - k(\b x', \cdot)\|_{\mH_k}\\
        &\stackrel{(c)}{=} \|f\|_{\mH_k} \underbrace{\sqrt{k(\b x, \b x) - 2k(\b x, \b x') + k(\b x', \b x')}}_{=d_k(\b x,\b x')}.
        \end{align}
        (a) follows from the reproducing property and the linearity of the inner product, (b) is implied by the CBS inequality, (c) comes from the fact that in a Hilbert space the norm is induced by the inner product, the linearity of the inner product, the reproducing property, the fact that $k$ is a symmetric function, and the definition of $d_k$. 
 \end{itemize}
\end{proof}

\begin{lemma}[nested pointwise evaluation and Lipschitz continuity]\label{lem:kernel_boud_induction}
Let $L\ge 2$ and $\mX \subset \R^d$ compact. Suppose that $\sSmu$ and $\lmu$ (in \Cref{def:\BK{}def}) satisfy  Assumption~\ref{assumption:normalizing}; let the corresponding canonical sets and measures (according to \Cref{rem:BKL_def}) be $\sSmuh \coloneq \left(S_{\hat{\mu}}^{(l)}\right)_{l=1}^{L-1} \coloneq \left(\SHlmu\right)_{l=1}^{L-1}$ and $\lmuh\coloneq \left(\hat{\mu}^{(l)}\right)_{l=1}^{L-1}\in \times_{l=1}^{L-1}\mP_{1,1}\left(\SHlmu\right)$, respectively. Let 
 \begin{align}
 \hat k^{(l)} &= k\left[\SHlmu,\hat \mu^{(l)}\right],\\
 d_{\hat k^{(l)}}(\b x,\b x') &\coloneq  \left[\hat{k}^{(l)}(\b x,\b x) -2\hat{k}^{(l)}(\b x,\b x') + \hat{k}^{(l)}(\b x',\b x')\right]^{1/2},
 \end{align}
 with $l \in [L-1]$ and $\b x, \b x' \in \mX$. Let $k^{(0)}(\b x,\b x') = \b x^\T \b x'$. Then, 
 \begin{align}
     \left[\hat k^{(l)}(\b x,\b x)\right]^{1/2} &\leq 
     \sqrt[\leftroot{-2} \uproot{0} 2^{l}]{ \|\b x\|_2}, \quad \forall \b x \in \mX, \label{eq:nested:point-eval}\\
    d_{\hat k^{(l)}}(\b x,\b x') &\leq  \sqrt[\leftroot{-2} \uproot{0} 2^{l}]{ \|\b x-\b x'\|_2}, \quad \forall \b x,\b x' \in \mX, \label{eq:nested:Lipschitz}
 \end{align}
 where $l\in \{0,1,\ldots, L-1\}$.
\end{lemma}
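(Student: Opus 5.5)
Induction on $l$, with the case $l=0$ (the linear kernel) as the base. For $l=0$ the diagonal is $k^{(0)}(\b x,\b x)=\|\b x\|_2^2$, so $[k^{(0)}(\b x,\b x)]^{1/2}=\|\b x\|_2$, which is \eqref{eq:nested:point-eval} with exponent $2^0=1$. Likewise $d_{k^{(0)}}(\b x,\b x')^2=\|\b x-\b x'\|_2^2$, which gives \eqref{eq:nested:Lipschitz}. For the step from $l-1$ to $l$, the layer-$l$ functions $u\in\S_1(\Hlmuh)$ live in the RKHS with kernel $\hat k^{(l-1)}$, interpreting $\hat k^{(0)}:=k^{(0)}$. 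Since $\|u\|=1$, \Cref{lem:RKHSfunctions_regularity} gives
\begin{align}
|u(\b x)|\le [\hat k^{(l-1)}(\b x,\b x)]^{1/2},
\qquad
|u(\b x)-u(\b x')|\le d_{\hat k^{(l-1)}}(\b x,\b x').
\end{align}

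Next I use two elementary identities of the Brownian kernel. On the diagonal, $\kB(s,s)=|s|$. For the induced squared distance,
\begin{align}
\kB(s,s)-2\kB(s,s')+\kB(s',s')=|s|+|s'|-(|s|+|s'|-|s-s'|)=|s-s'|.
\end{align}
Plugging in $s=u(\b x)$ and $s'=u(\b x')$ and integrating against $\hat\mu^{(l)}$ yields
\begin{align}
\hat k^{(l)}(\b x,\b x)=\int|u(\b x)|\,\d\hat\mu^{(l)}(u)\le[\hat k^{(l-1)}(\b x,\b x)]^{1/2},
\end{align}
and
\begin{align}
d_{\hat k^{(l)}}(\b x,\b x')^2=\int|u(\b x)-u(\b x')|\,\d\hat\mu^{(l)}(u)\le d_{\hat k^{(l-1)}}(\b x,\b x').
\end{align}
Both bounds use that $\hat\mu^{(l)}$ is a probability measure supported on the unit sphere. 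I then apply the inductive hypotheses $[\hat k^{(l-1)}(\b x,\b x)]^{1/2}\le\|\b x\|_2^{2^{-(l-1)}}$ and $d_{\hat k^{(l-1)}}\le\|\b x-\b x'\|_2^{2^{-(l-1)}}$, and take square roots. This gives $[\hat k^{(l)}(\b x,\b x)]^{1/2}\le\|\b x\|_2^{2^{-l}}$ and the analogous bound for $d_{\hat k^{(l)}}$, which closes the induction.

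The only delicate points are bookkeeping. First, the layer indexing: $\hat k^{(l)}$ integrates over the unit sphere of $\mH^{(l)}_{\hat\mu}$, whose kernel is $\hat k^{(l-1)}$, and at $l=1$ this is the linear kernel with $\mH^{(1)}_{\hat\mu}=\Honemu$. Second, well-definedness and measurability of the integrals, which I take from \Cref{lem:BK} together with the integrability of $|u(\b x)|$ that the bound above provides. Third, the fact that the canonical representation of \Cref{lem:measure_constraint_sphere} lets one work with measures on unit spheres. The main obstacle is making sure that the unit-norm support, rather than merely the first-moment-one condition, is what is used. On the sphere $\|u\|=1$ exactly, so the first-moment condition is automatic and no Jensen-type loss occurs.
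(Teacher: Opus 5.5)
Your proof is correct and follows the paper's argument essentially step for step: induction from the linear base case $l=0$, the Brownian identities $\kB(s,s)=|s|$ and $\kB(s,s)-2\kB(s,s')+\kB(s',s')=|s-s'|$, the evaluation and difference bounds of \Cref{lem:RKHSfunctions_regularity} in the layer RKHS with kernel $\hat k^{(l-1)}$, and the normalization of $\hat\mu^{(l)}$. One small remark: the unit first moment $\int\|u\|\,\d\hat\mu^{(l)}(u)=1$ already suffices for both integral bounds without sphere support (the paper invokes exactly this), so your closing concern about needing the unit-norm support rather than the first-moment condition is unnecessary.
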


\begin{proof}\label{lem:kernel_boud_inductionproof}
Let
\begin{align}
\LHmuh
&\coloneqq
\left(\Hlhmul\right)_{l=1}^{L},
\qquad
\Hlhmu
\coloneqq
\mH_{\hat k^{(l)}},
\quad l\in[L-1],
\end{align}
with $\mH_{\hat\mu}^{(1)}\coloneqq\mH_{\mu}^{(1)}$.
We prove both estimates simultaneously by induction on $l$.

\smallskip
\noindent
\emph{Base case $l=0$.}
Since $k^{(0)}(\b x,\b x')=\b x^\top\b x'$, one has
\begin{align}
\left[k^{(0)}(\b x,\b x)\right]^{1/2}
&=
\|\b x\|_2,\\
d_{k^{(0)}}(\b x,\b x')
&=
\left(
\|\b x\|_2^2
-2\b x^\top\b x'
+\|\b x'\|_2^2
\right)^{1/2}
=
\|\b x-\b x'\|_2.
\end{align}
Thus \eqref{eq:nested:point-eval} and \eqref{eq:nested:Lipschitz} hold for $l=0$.

\smallskip
\noindent
\emph{Induction step.}
Fix $l\in\{0,\ldots,L-2\}$ and assume that both estimates hold at level $l$.
For every $\b x\in\mX$, the canonical construction and the identity
$\kB(a,a)=|a|$ give
\begin{align}
\hat k^{(l+1)}(\b x,\b x)
&\stackrel{(a)}{=}
\int_{\SHlpmu}
\kB\left(u(\b x),u(\b x)\right)
\,\d\hat\mu^{(l+1)}(u)\\
&\stackrel{(b)}{=}
\int_{\SHlpmu}
|u(\b x)|
\,\d\hat\mu^{(l+1)}(u)\\
&\stackrel{(c)}{\le}
\int_{\SHlpmu}
\|u\|_{\Hlpmuh}
\left[\hat k^{(l)}(\b x,\b x)\right]^{1/2}
\,\d\hat\mu^{(l+1)}(u)\\
&\stackrel{(d)}{=}
\left[\hat k^{(l)}(\b x,\b x)\right]^{1/2}
\int_{\SHlpmu}
\|u\|_{\Hlpmuh}
\,\d\hat\mu^{(l+1)}(u)\\
&\stackrel{(e)}{=}
\left[\hat k^{(l)}(\b x,\b x)\right]^{1/2}\\
&\stackrel{(f)}{\le}
\|\b x\|_2^{2^{-l}}.
\end{align}
Here (a) is the definition of the next canonical integral kernel, (b) uses
$\kB(a,a)=|a|$, (c) applies the RKHS pointwise estimate
\eqref{eq:|f(x)|-bound} in $\Hlpmuh=\mH_{\hat k^{(l)}}$, (d) factors out a term independent of $u$, (e) uses
$\hat\mu^{(l+1)}\in\mP_{1,1}(\SHlpmu)$, and (f) is the induction hypothesis.
Taking square roots of the nonnegative quantities yields
\begin{align}
\left[\hat k^{(l+1)}(\b x,\b x)\right]^{1/2}
&\le
\|\b x\|_2^{2^{-(l+1)}}.
\end{align}
For the canonical kernel metric, linearity of integration and the Brownian identity
\begin{align}
\kB(a,a)-2\kB(a,b)+\kB(b,b)
&=
|a-b|
\end{align}
give, for every $\b x,\b x'\in\mX$,
\begin{align}
\left[d_{\hat k^{(l+1)}}(\b x,\b x')\right]^2
&\stackrel{(a)}{=}
\int_{\SHlpmu}
|u(\b x)-u(\b x')|
\,\d\hat\mu^{(l+1)}(u)\\
&\stackrel{(b)}{\le}
\int_{\SHlpmu}
\|u\|_{\Hlpmuh}
 d_{\hat k^{(l)}}(\b x,\b x')
\,\d\hat\mu^{(l+1)}(u)\\
&\stackrel{(c)}{=}
 d_{\hat k^{(l)}}(\b x,\b x')\\
&\stackrel{(d)}{\le}
\|\b x-\b x'\|_2^{2^{-l}}.
\end{align}
Here (a) follows from the definition of the kernel metric and the preceding Brownian identity, (b) applies the RKHS difference estimate
\eqref{eq:|f(x)-f(x)|-bound} in $\Hlpmuh$, (c) again uses the unit first moment of $\hat\mu^{(l+1)}$, and (d) is the induction hypothesis.
Taking square roots gives
\begin{align}
 d_{\hat k^{(l+1)}}(\b x,\b x')
&\le
\|\b x-\b x'\|_2^{2^{-(l+1)}}.
\end{align}
This proves both assertions at level $l+1$ and completes the induction.
\end{proof}

\begin{lemma}[well-defined Brownian integral kernels]
\label{lem:BK}
Assume that $\mX\subset\R^d$ is compact, let $k^{(0)}(\b x,\b x')=\b x^\top\b x'$, and let $\Honemu=\mH_{k^{(0)}}$ as in \Cref{def:\BK{}def}. For $l\ge1$, suppose that $\Hlmu$ has already been defined, let $\Slmu\subseteq\Hlmu$ be measurable, and let $\mu^{(l)}\in\mP_1(\Slmu)$. For $u\in\Slmu$, define
\begin{align}
k_u(\b x,\b x')
\coloneqq
\kB\left(u(\b x),u(\b x')\right).
\end{align}
Then, for every $\b x\in\mX$,
\begin{align}
\int_{\Slmu}
k_u(\b x,\b x)
\,\d\mu^{(l)}(u)
<
\infty.
\end{align}
Consequently, $k^{(l)}=k[\Slmu,\mu^{(l)}]$ and $\Hlpmu=\mH_{k^{(l)}}$ are well-defined.
\end{lemma}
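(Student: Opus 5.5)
The plan is an induction on $l$ that carries a pointwise diagonal bound along with the finiteness claim. The induction hypothesis at level $l$ will be that $k^{(l-1)}$ is a well-defined kernel on $\mX$ whose diagonal is finite, namely $\sup_{\b x\in\mX}k^{(l-1)}(\b x,\b x)<\infty$; for $l=1$ this kernel is $k^{(0)}$. Once this holds, $\Hlmu=\mH_{k^{(l-1)}}$ is a genuine RKHS, and Lemma~\ref{lem:RKHSfunctions_regularity} gives for every $u\in\Hlmu$ the bound
\begin{align}
|u(\b x)|\le\|u\|_{\Hlmu}\left[k^{(l-1)}(\b x,\b x)\right]^{1/2}.
\end{align}

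\emph{Base case.} For $k^{(0)}(\b x,\b x)=\|\b x\|_2^2$, compactness of $\mX$ gives the bound $\sup_{\b x\in\mX}\|\b x\|_2^2<\infty$. The space $\Honemu$ consists of the linear maps $\b x\mapsto\bomega^\top\b x$, and each satisfies $|u(\b x)|\le\|u\|_{\Honemu}\|\b x\|_2$.

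\emph{Main step.} Since $\kB(a,a)=|a|$, we have $k_u(\b x,\b x)=|u(\b x)|$. Integrating the pointwise bound against $\mu^{(l)}$ gives
\begin{align}
\int_{\Slmu}k_u(\b x,\b x)\,\d\mu^{(l)}(u)\le\left[k^{(l-1)}(\b x,\b x)\right]^{1/2}\|\mu^{(l)}\|_{\Slmu,1}<\infty,
\end{align}
because $\mu^{(l)}\in\mP_1(\Slmu)$. This is condition \eqref{eq:integral-RKHS-condition}. Each $k_u$ is a kernel, being the pullback of the positive-semidefinite Brownian kernel through $u$, so the construction of \cite[Theorem 3.1]{Hotz2012RepresentationBI} applies with $(\Omega,\mD)\leftarrow(\Slmu,\mX)$. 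It yields that $k^{(l)}$ is a kernel and that $\Hlpmu=\mH_{k^{(l)}}$ is an integral RKHS. The displayed inequality also gives the bound $\sup_{\b x}k^{(l)}(\b x,\b x)\le\left(\sup_{\b x}k^{(l-1)}(\b x,\b x)\right)^{1/2}\|\mu^{(l)}\|_{\Slmu,1}<\infty$, which closes the induction.

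\emph{Measurability (expected obstacle).} For the integral to make sense, the map $u\mapsto|u(\b x)|$ must be measurable on $\Slmu$. I would handle this by observing that evaluation at $\b x$ is a bounded linear functional on the RKHS $\Hlmu$, hence continuous and therefore Borel. Consequently $u\mapsto\kB(u(\b x),u(\b x'))$ is continuous in $u$. Off-diagonal integrability then follows from $|\kB(a,b)|\le\sqrt{|a||b|}$ and Cauchy--Schwarz, and symmetry of $k^{(l)}$ is immediate. No further obstacle is expected, since the first-moment assumption does exactly the work needed.
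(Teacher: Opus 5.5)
Your proposal is correct and follows essentially the same argument as the paper: reduce to $\int |u(\b x)|\,\d\mu^{(l)}(u)$ via $\kB(a,a)=|a|$, bound $|u(\b x)|\le\|u\|_{\Hlmu}[k^{(l-1)}(\b x,\b x)]^{1/2}$ by the RKHS evaluation estimate, and conclude from the finite first moment, proceeding recursively from the linear base layer. Your uniform diagonal bound is more than needed, since pointwise finiteness of $k^{(l-1)}(\b x,\b x)$ suffices as in the paper; your measurability and off-diagonal integrability remarks supply details that the paper leaves implicit.
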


\begin{proof}\label{lem:BK_proof}
Since $\kB(a,a)=|a|$, it is enough to prove
\begin{align}
\int_{\Slmu}|u(\b x)|\,\d\mu^{(l)}(u)
<
\infty,
\qquad
\b x\in\mX.
\label{eq:i-kernel-reduction-to-u}
\end{align}
We proceed recursively.

\medskip
\noindent
\emph{Base layer $l=1$.}
For $u\in\Honemu$, the RKHS pointwise estimate and the identity $k^{(0)}(\b x,\b x)=\|\b x\|_2^2$ give
\begin{align}
|u(\b x)|
&\stackrel{(a)}{\le}
\|u\|_{\Honemu}
\left[k^{(0)}(\b x,\b x)\right]^{1/2}
\stackrel{(b)}{=}
\|u\|_{\Honemu}\|\b x\|_2.
\end{align}
Here (a) is \eqref{eq:|f(x)|-bound}, and (b) is the definition of the linear kernel. Therefore,
\begin{align}
\int_{\Sonemu}|u(\b x)|\,\d\mu^{(1)}(u)
&\le
\|\b x\|_2
\int_{\Sonemu}
\|u\|_{\Honemu}
\,\d\mu^{(1)}(u)
<
\infty,
\end{align}
because $\mu^{(1)}\in\mP_1(\Sonemu)$.

\medskip
\noindent
\emph{Recursive step.}
Assume that $k^{(l)}$ and $\Hlpmu=\mH_{k^{(l)}}$ are well-defined. For $u\in\Slpmu$, \eqref{eq:|f(x)|-bound} gives
\begin{align}
|u(\b x)|
\le
\|u\|_{\Hlpmu}
\left[k^{(l)}(\b x,\b x)\right]^{1/2}.
\end{align}
Hence
\begin{align}
\int_{\Slpmu}|u(\b x)|\,\d\mu^{(l+1)}(u)
&\le
\left[k^{(l)}(\b x,\b x)\right]^{1/2}
\int_{\Slpmu}
\|u\|_{\Hlpmu}
\,\d\mu^{(l+1)}(u)
<
\infty.
\end{align}
The first factor is finite because $k^{(l)}$ is a kernel, and the second is finite because $\mu^{(l+1)}\in\mP_1(\Slpmu)$. This proves \eqref{eq:i-kernel-reduction-to-u} at the next layer and completes the recursion.
\end{proof}

\begin{lemma}[kernel pullback and restriction]
\label{lem:kernel_transformation}
Let $\mZ_0$ be a set, let $u:\mX\to\mZ_0$, and put $\mZ\coloneqq\im(u)$. Let $k:\mZ_0\times\mZ_0\to\R$ be a kernel, denote its restriction to $\mZ\times\mZ$ by $k_{\mZ}$, and define
\begin{align}
k_u(\b x,\b x')
\coloneqq
k\left(u(\b x),u(\b x')\right),
\qquad
\b x,\b x'\in\mX.
\end{align}
Then the following statements hold.
\begin{enumerate}[(i)]
\item The pullback map
\begin{align}
U_u:\mH_{k_{\mZ}}\to\mH_{k_u},
\qquad
U_u g
\coloneqq
g\circ u,
\end{align}
is a surjective linear isometry. In particular,
\begin{align}
\mH_{k_u}
=
\left\{g\circ u:g\in\mH_{k_{\mZ}}\right\},
\qquad
\|g\circ u\|_{\mH_{k_u}}
=
\|g\|_{\mH_{k_{\mZ}}}.
\end{align}\label{kernel_transformation:1}
\item For every $g\in\mH_k$,
\begin{align}
g\circ u
\in
\mH_{k_u},
\qquad
\|g\circ u\|_{\mH_{k_u}}
\le
\|g\|_{\mH_k}.
\end{align}\label{kernel_transformation:2}
\item For every $h\in\mH_{k_u}$, there exists a minimum-norm extension $g_h\in\mH_k$ satisfying
\begin{align}
h
=
g_h\circ u,
\qquad
\|g_h\|_{\mH_k}
=
\|h\|_{\mH_{k_u}}.
\end{align}\label{kernel_transformation:3}
\item If $k=\kB$ and $k_u(\b x,\b x')=\kB(u(\b x),u(\b x'))$, then, for every $\alpha\ge0$ and every $\b x\in\mX$,
\begin{align}
\alpha k_u(\cdot,\b x)
=
k_{\alpha u}(\cdot,\b x).
\end{align}\label{kernel_transformation:4}
\end{enumerate}
\end{lemma}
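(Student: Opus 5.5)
The plan is to prove the four parts in order, using only the Moore--Aronszajn characterization of an RKHS: it is the completion of the pre-Hilbert space $\Span\{k(\cdot,z)\}$ under the inner product fixed by the kernel values. For \ref{kernel_transformation:1}, I will first define $U_u$ on finite combinations:
\begin{align}
U_u\Bigl(\sum_j\alpha_j k_{\mZ}(\cdot,z_j)\Bigr)
\coloneqq
\sum_j\alpha_j k_{\mZ}(u(\cdot),z_j).
\end{align}
Each point $z_j\in\mZ=\im(u)$ equals $u(\b x_j)$ for some $\b x_j$. Hence $k_{\mZ}(u(\cdot),z_j)=k_u(\cdot,\b x_j)$, and the image of the span of the $k_{\mZ}$-sections is exactly $\Span\{k_u(\cdot,\b x):\b x\in\mX\}$.

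Next I will check that this map is isometric on spans:
\begin{align}
\Bigl\|\sum_j\alpha_jk_u(\cdot,\b x_j)\Bigr\|^2_{\mH_{k_u}}
=\sum_{j,j'}\alpha_j\alpha_{j'}k(u(\b x_j),u(\b x_{j'}))
=\Bigl\|\sum_j\alpha_jk_{\mZ}(\cdot,z_j)\Bigr\|^2_{\mH_{k_{\mZ}}}.
\end{align}
This computation also shows that $U_u$ is well defined, since it does not depend on the chosen representation or on the chosen preimages $\b x_j$. I will then extend $U_u$ by density to an isometry $\mH_{k_{\mZ}}\to\mH_{k_u}$ whose range is closed and dense, so $U_u$ is surjective. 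It remains to identify the extension with $g\mapsto g\circ u$. Norm convergence in an RKHS implies pointwise convergence, so $(U_ug)(\b x)=\lim g_n(u(\b x))=g(u(\b x))$.

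For \ref{kernel_transformation:2} and \ref{kernel_transformation:3}, I will invoke the standard restriction theorem: $\mH_{k_{\mZ}}=\{g|_{\mZ}:g\in\mH_k\}$, with $\|h\|_{\mH_{k_{\mZ}}}=\min\{\|g\|_{\mH_k}:g|_{\mZ}=h\}$. The minimum is attained by the orthogonal projection of $g$ onto $\overline{\Span}\{k(\cdot,z):z\in\mZ\}$, because functions orthogonal to this subspace vanish on $\mZ$. Part \ref{kernel_transformation:2} then follows from
\begin{align}
\|g\circ u\|_{\mH_{k_u}}=\|g|_{\mZ}\|_{\mH_{k_{\mZ}}}\le\|g\|_{\mH_k}.
\end{align}
For \ref{kernel_transformation:3}, I will write $h=U_u\tilde h$ by \ref{kernel_transformation:1} and take $g_h$ to be the minimal-norm extension of $\tilde h$. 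This gives $\|g_h\|_{\mH_k}=\|\tilde h\|_{\mH_{k_{\mZ}}}=\|h\|_{\mH_{k_u}}$.

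Part \ref{kernel_transformation:4} is immediate from the nonnegative $1$-homogeneity of $\kB$:
\begin{align}
k_{\alpha u}(\b x',\b x)=\kB(\alpha u(\b x'),\alpha u(\b x))=\alpha\,\kB(u(\b x'),u(\b x)).
\end{align}

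I expect the main obstacle to be the restriction theorem in \ref{kernel_transformation:3}. The key point there is that the projection produces an extension with the same values on $\mZ$, which holds because the orthogonal complement consists exactly of functions vanishing on $\mZ$, by the reproducing property. Everything else is bookkeeping via density and pointwise convergence.
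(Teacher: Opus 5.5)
Your proposal is correct and follows essentially the same route as the paper. You use the Gram-matrix isometry on kernel sections, extended by density with closed dense range and identified with $g\mapsto g\circ u$ through continuity of point evaluation, for (i). For (ii)--(iii) you use the orthogonal decomposition $\overline{\Span}\{k(\cdot,z):z\in\mZ\}^{\perp}=\{g\in\mH_k:g|_{\mZ}=0\}$, which yields the contractive restriction and the minimum-norm extension. For (iv) you use the nonnegative $1$-homogeneity of $\kB$.

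The differences are cosmetic. The paper proves the restriction step explicitly through an isometry $R_{\mZ}$ from that closed span onto $\mH_{k_{\mZ}}$, whereas you cite it as the standard restriction theorem and sketch the same projection argument. You also define $U_u$ directly as composition with $u$, which makes independence of the chosen preimages automatic.
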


\begin{proof}

\textbf{Step 1} (restriction to $\mZ$):
Define the closed subspaces
\begin{align}
\mM_{\mZ}
&\coloneqq
\overline{\Span\left\{k(\cdot,z):z\in\mZ\right\}}^{\mH_k},\\
\mN_{\mZ}
&\coloneqq
\left\{g\in\mH_k:g(z)=0\text{ for all }z\in\mZ\right\}.
\end{align}
We first show that
\begin{align}
\mN_{\mZ}
=
\mM_{\mZ}^{\perp}.
\label{eq:kernel-restriction-orthogonal-decomposition}
\end{align}
Indeed, if $g\in\mN_{\mZ}$, then for every $z\in\mZ$,
\begin{align}
\langle g,k(\cdot,z)\rangle_{\mH_k}
\stackrel{(a)}{=}
g(z)
\stackrel{(b)}{=}0,
\end{align}
where (a) is the reproducing property and (b) is the definition of $\mN_{\mZ}$. Thus $g\perp\mM_{\mZ}$. Conversely, if $g\perp\mM_{\mZ}$, then for every $z\in\mZ$,
\begin{align}
g(z)
=
\langle g,k(\cdot,z)\rangle_{\mH_k}
=
0,
\end{align}
so $g\in\mN_{\mZ}$. This proves \eqref{eq:kernel-restriction-orthogonal-decomposition}. On the algebraic span of the kernel sections with centers in $\mZ$, define the restriction map
\begin{align}
R_{\mZ}
\left(
\sum_{i=1}^{n}\alpha_i k(\cdot,z_i)
\right)
\coloneqq
\sum_{i=1}^{n}\alpha_i k_{\mZ}(\cdot,z_i).
\end{align}
For every such finite combination,
\begin{align}
\left\|
\sum_{i=1}^{n}\alpha_i k(\cdot,z_i)
\right\|_{\mH_k}^2
&\stackrel{(a)}{=}
\sum_{i,j=1}^{n}
\alpha_i\alpha_j k(z_i,z_j)\\
&\stackrel{(b)}{=}
\sum_{i,j=1}^{n}
\alpha_i\alpha_j k_{\mZ}(z_i,z_j)\\
&\stackrel{(c)}{=}
\left\|
\sum_{i=1}^{n}\alpha_i k_{\mZ}(\cdot,z_i)
\right\|_{\mH_{k_{\mZ}}}^2.
\end{align}
Here, (a) and (c) follow from the reproducing property, and (b) follows from the definition of $k_{\mZ}$. Hence $R_{\mZ}$ extends uniquely to an isometry from $\mM_{\mZ}$ into $\mH_{k_{\mZ}}$. Its range contains the dense span of the kernel sections of $\mH_{k_{\mZ}}$ and is closed because $\mM_{\mZ}$ is complete. Therefore,
\begin{align}
R_{\mZ}:\mM_{\mZ}\to\mH_{k_{\mZ}}
\end{align}
is a surjective linear isometry. Let $P_{\mZ}:\mH_k\to\mM_{\mZ}$ be the orthogonal projection. By \eqref{eq:kernel-restriction-orthogonal-decomposition}, $g-P_{\mZ}g\in\mN_{\mZ}$, and therefore
\begin{align}
g|_{\mZ}
=
(P_{\mZ}g)|_{\mZ}.
\end{align}
Consequently,
\begin{align}
\|g|_{\mZ}\|_{\mH_{k_{\mZ}}}
&\stackrel{(a)}{=}
\|P_{\mZ}g\|_{\mH_k}
\stackrel{(b)}{\le}
\|g\|_{\mH_k},
\label{eq:kernel-restriction-contraction}
\end{align}
where (a) uses the isometry $R_{\mZ}$ and (b) uses contractivity of an orthogonal projection. Conversely, for every $q\in\mH_{k_{\mZ}}$, the function
\begin{align}
\widetilde q
\coloneqq
R_{\mZ}^{-1}q
\in
\mM_{\mZ}
\end{align}
is an extension of $q$ and satisfies
\begin{align}
\|\widetilde q\|_{\mH_k}
=
\|q\|_{\mH_{k_{\mZ}}}.
\label{eq:kernel-restriction-minimum-extension}
\end{align}
It is minimum norm because every other extension differs from $\widetilde q$ by an element of $\mN_{\mZ}=\mM_{\mZ}^{\perp}$.

\medskip
\noindent
\textbf{Step 2} (pullback from $\mZ$ to $\mX$):
For a finite combination of restricted kernel sections, define
\begin{align}
U_u
\left(
\sum_{i=1}^{n}\alpha_i k_{\mZ}(\cdot,z_i)
\right)
\coloneqq
\sum_{i=1}^{n}\alpha_i k_u(\cdot,\b x_i),
\end{align}
where $\b x_i\in\mX$ is chosen so that $u(\b x_i)=z_i$, which is possible because $\mZ=\im(u)$. This definition is independent of the chosen preimages because its squared norm is determined by the Gram matrix:
\begin{align}
\left\|
\sum_{i=1}^{n}\alpha_i k_u(\cdot,\b x_i)
\right\|_{\mH_{k_u}}^2
&\stackrel{(a)}{=}
\sum_{i,j=1}^{n}\alpha_i\alpha_j k_u(\b x_i,\b x_j)\\
&\stackrel{(b)}{=}
\sum_{i,j=1}^{n}\alpha_i\alpha_j k_{\mZ}(z_i,z_j)\\
&\stackrel{(c)}{=}
\left\|
\sum_{i=1}^{n}\alpha_i k_{\mZ}(\cdot,z_i)
\right\|_{\mH_{k_{\mZ}}}^2.
\end{align}
Here, (a) and (c) use the reproducing property and (b) uses $z_i=u(\b x_i)$. Thus $U_u$ extends to a linear isometry from $\mH_{k_{\mZ}}$ into $\mH_{k_u}$. Its range contains every kernel section $k_u(\cdot,\b x)$ and is closed, hence it equals $\mH_{k_u}$. Moreover, on the dense span of kernel sections,
\begin{align}
(U_uq)(\b x)
=
q(u(\b x)),
\end{align}
and the identity extends to every $q\in\mH_{k_{\mZ}}$ by continuity of point evaluation. This proves item~\ref{kernel_transformation:1}. Let $g\in\mH_k$. By \eqref{eq:kernel-restriction-contraction}, $q\coloneqq g|_{\mZ}$ belongs to $\mH_{k_{\mZ}}$ and $\|q\|_{\mH_{k_{\mZ}}}\le\|g\|_{\mH_k}$. Applying item~\ref{kernel_transformation:1},
\begin{align}
g\circ u
=
q\circ u
\in
\mH_{k_u},
\qquad
\|g\circ u\|_{\mH_{k_u}}
=
\|q\|_{\mH_{k_{\mZ}}}
\le
\|g\|_{\mH_k},
\end{align}
which proves item~\ref{kernel_transformation:2}. Let $h\in\mH_{k_u}$. By item (i), there exists a unique $q\in\mH_{k_{\mZ}}$ such that $h=q\circ u$ and $\|q\|_{\mH_{k_{\mZ}}}=\|h\|_{\mH_{k_u}}$. Let $g_h\coloneqq R_{\mZ}^{-1}q$. Then \eqref{eq:kernel-restriction-minimum-extension} gives
\begin{align}
h
=
g_h\circ u,
\qquad
\|g_h\|_{\mH_k}
=
\|q\|_{\mH_{k_{\mZ}}}
=
\|h\|_{\mH_{k_u}},
\end{align}
which proves item~\ref{kernel_transformation:3}.

\medskip
\noindent
\textbf{Step 3} (Brownian homogeneity):
Let $k=\kB$, $\alpha\ge0$, and fix $\b x\in\mX$. For every $\b x'\in\mX$,
\begin{align}
\alpha k_u(\b x',\b x)
&\stackrel{(a)}{=}
\alpha\kB\left(u(\b x'),u(\b x)\right)\\
&\stackrel{(b)}{=}
\kB\left(\alpha u(\b x'),\alpha u(\b x)\right)\\
&\stackrel{(c)}{=}
k_{\alpha u}(\b x',\b x),
\end{align}
where (a) is the definition of $k_u$, (b) uses the nonnegative one-homogeneity of the Brownian kernel, and (c) is the definition of $k_{\alpha u}$. Hence $\alpha k_u(\cdot,\b x)=k_{\alpha u}(\cdot,\b x)$, proving item~\ref{kernel_transformation:4}.
\end{proof}

\begin{lemma}[measure support on the sphere and kernel preservation]
\label{lem:measure_constraint_sphere}
Let the $L$-level \BK{} $\LHmu$ be defined according to Definition~\ref{def:\BK{}def} under Assumption~\ref{assumption:normalizing}, yielding
\begin{align}
k^{(l)} = k\left[\Hlmu,\,\mu^{(l)}\right],
\qquad
\Hlpmu = \mH_{k^{(l)}},
\quad \text{for all } l \in [L-1].
\end{align}
Then, for each $l \in [L-1]$, there exists a probability measure
$\hat{\mu}^{(l)} \in \Psph\left(\SHlmu\right)$ such that
\begin{align}
\hat{k}^{(l)} = k\left[\SHlmu,\,\hat{\mu}^{(l)}\right],
\qquad
\mH_{\hat{k}^{(l)}} = \Hlpmu,
\end{align}
and
\begin{align}
\hat{k}^{(l)} = k^{(l)} \quad \text{pointwise on } \mX \times \mX,
\qquad
\Hlpmu = \Hlpmuh.
\end{align}
\end{lemma}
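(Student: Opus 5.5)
The idea is to push each $\mu^{(l)}$ forward under the radial projection $u\mapsto u/\|u\|$ and reweight by $\|u\|$. Nonnegative $1$-homogeneity of $\kB$ then leaves the kernel unchanged. We fix $l\in[L-1]$ and write $\mH\coloneqq\Hlmu$ and $\mu\coloneqq\mu^{(l)}\in\Psph(\mH)$. By Assumption~\ref{assumption:normalizing}, $\mu$ is supported on all of $\mH$ and has unit first moment.

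The zero function contributes nothing to the kernel, because $\kB(0,0)=0$. We can therefore restrict attention to $\mH\setminus\{0\}$ and define the weighted measure
\begin{align}
\d\nu(u)\coloneqq\|u\|_{\mH}\,\I_{\{u\neq0\}}\,\d\mu(u).
\end{align}
Its total mass is $\int\|u\|_{\mH}\,\d\mu(u)=1$, so $\nu$ is a probability measure; the degenerate case $\mu=\delta_0$ is excluded, since then the first moment would be $0$. Let $\pi(u)\coloneqq u/\|u\|_{\mH}$, which is continuous and hence Borel on $\mH\setminus\{0\}$, and set $\hat\mu^{(l)}\coloneqq\pi_\#\nu$. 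This is a Borel probability measure on $\SHlmu$. Every point of the sphere has norm one, so $\int\|v\|_{\mH}\,\d\hat\mu^{(l)}(v)=1$ and $\hat\mu^{(l)}\in\Psph(\SHlmu)$.

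For the kernel identity we fix $\b x,\b x'\in\mX$. Homogeneity gives $\kB(u(\b x),u(\b x'))=\|u\|_{\mH}\,\kB(\pi(u)(\b x),\pi(u)(\b x'))$ for $u\neq0$, which is the pointwise form of Lemma~\ref{lem:kernel_transformation}\ref{kernel_transformation:4}. Combining this with the change-of-variables formula for pushforwards,
\begin{align}
k^{(l)}(\b x,\b x')=\int_{\mH\setminus\{0\}}\kB\bigl(\pi(u)(\b x),\pi(u)(\b x')\bigr)\,\d\nu(u)=\int_{\SHlmu}\kB\bigl(v(\b x),v(\b x')\bigr)\,\d\hat\mu^{(l)}(v)=\hat k^{(l)}(\b x,\b x').
\end{align}
Integrability is guaranteed by Lemma~\ref{lem:BK}, because $|\kB(a,b)|\le\sqrt{|a||b|}$. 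Equal kernels give equal RKHSs, hence $\mH_{\hat k^{(l)}}=\Hlpmu$.

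Finally we run an induction on $l$. Once the kernels agree up to layer $l-1$, the spaces $\Hlmu$ and $\Hlmuh$ coincide with identical norms, so the sphere $\SHlmu$ is the same set in both ladders and the construction above yields $\Hlpmu=\Hlpmuh$. The only delicate step is measurability: $\pi$ must be Borel, and the pushforward must be taken with respect to the Borel $\sigma$-algebra of $\mH$ restricted to the sphere. Both hold because the norm is continuous.
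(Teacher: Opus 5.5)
Your proposal is correct and follows the paper's proof essentially step for step. The paper also defines $\hat\mu^{(l)}(A)=\int_{\mH\setminus\{0\}}\I_A(u/\|u\|)\,\|u\|\,\d\mu^{(l)}(u)$, checks unit mass and unit first moment from the normalization, and recovers $k^{(l)}=\hat k^{(l)}$ from $\kB(0,0)=0$ and nonnegative $1$-homogeneity; your remarks on off-diagonal integrability via $|\kB(a,b)|\le\sqrt{|a||b|}$ and on the layerwise induction identifying $\Hlmu$ with $\Hlmuh$ are harmless refinements that the paper leaves implicit.
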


\begin{proof}
Fix $l \in [L-1]$.

\medskip
\noindent
\textbf{Step 1} (weighted radial projection):
Define
\begin{align}
N : \Hlmu \setminus \left\{0\right\}
&\longrightarrow \SHlmu,
&
N(u)
&\coloneqq
\frac{u}{\left\|u\right\|_{\Hlmu}}.
\end{align}
The map $N$ is continuous on $\Hlmu \setminus \left\{0\right\}$ and is therefore Borel measurable. For every Borel set $A \subseteq \SHlmu$, define
\begin{align}
\hat{\mu}^{(l)}(A)
\coloneqq
\int_{\Hlmu \setminus \left\{0\right\}}
\I_A\left(N(u)\right)
\left\|u\right\|_{\Hlmu}
\,\d\mu^{(l)}(u).
\end{align}
Since $u \mapsto \left\|u\right\|_{\Hlmu}$ is nonnegative and integrable under $\mu^{(l)}$, this formula defines a finite Borel measure on $\SHlmu$.

\medskip
\noindent
\textbf{Step 2} (probability and first-moment normalization):
By Assumption~\ref{assumption:normalizing}, one has $\mu^{(l)} \in \mP_{1,1}\left(\Hlmu\right)$. Hence,
\begin{align}
\hat{\mu}^{(l)}\left(\SHlmu\right)
&\stackrel{(a)}{=}
\int_{\Hlmu \setminus \left\{0\right\}}
\left\|u\right\|_{\Hlmu}
\,\d\mu^{(l)}(u)
\stackrel{(b)}{=}
\int_{\Hlmu}
\left\|u\right\|_{\Hlmu}
\,\d\mu^{(l)}(u)
\stackrel{(c)}{=}
1.
\end{align}
Here, (a) follows from the definition of $\hat{\mu}^{(l)}$ with $A=\SHlmu$, (b) uses $\left\|0\right\|_{\Hlmu}=0$, and (c) follows from the unit first moment of $\mu^{(l)}$. Therefore, $\hat{\mu}^{(l)}$ is a probability measure. Moreover, because $\left\|v\right\|_{\Hlmu}=1$ for every $v\in\SHlmu$,
\begin{align}
\left\|\hat{\mu}^{(l)}\right\|_{\SHlmu,1}
&\stackrel{(a)}{=}
\int_{\SHlmu}
\left\|v\right\|_{\Hlmu}
\,\d\hat{\mu}^{(l)}(v)
\stackrel{(b)}{=}
\hat{\mu}^{(l)}\left(\SHlmu\right)
\stackrel{(c)}{=}
1.
\end{align}
In (a), we use the definition of the first moment, (b) uses the fact that $\hat{\mu}^{(l)}$ is supported on $\SHlmu$, and (c) uses the preceding total-mass computation. Consequently,
\begin{align}
\hat{\mu}^{(l)}
\in
\Psph\left(\SHlmu\right).
\end{align}

\medskip
\noindent
\textbf{Step 3} (exact preservation of the kernel):
Fix $\b x,\b x'\in\mX$. Since the integrand vanishes at $u=0$, one has
\begin{align}
k^{(l)}\left(\b x,\b x'\right)
&\stackrel{(a)}{=}
\int_{\Hlmu}
\kB\left(u(\b x),u(\b x')\right)
\,\d\mu^{(l)}(u)\\
&\stackrel{(b)}{=}
\int_{\Hlmu \setminus \left\{0\right\}}
\kB\left(u(\b x),u(\b x')\right)
\,\d\mu^{(l)}(u)\\
&\stackrel{(c)}{=}
\int_{\Hlmu \setminus \left\{0\right\}}
\kB\left(
\left\|u\right\|_{\Hlmu}N(u)(\b x),
\left\|u\right\|_{\Hlmu}N(u)(\b x')
\right)
\,\d\mu^{(l)}(u)\\
&\stackrel{(d)}{=}
\int_{\Hlmu \setminus \left\{0\right\}}
\left\|u\right\|_{\Hlmu}
\kB\left(N(u)(\b x),N(u)(\b x')\right)
\,\d\mu^{(l)}(u)\\
&\stackrel{(e)}{=}
\int_{\SHlmu}
\kB\left(v(\b x),v(\b x')\right)
\,\d\hat{\mu}^{(l)}(v)\\
&\stackrel{(f)}{=}
\hat{k}^{(l)}\left(\b x,\b x'\right).
\end{align}
Equality (a) follows from the definition of $k^{(l)}$. Equality (b) uses $\kB\left(0,0\right)=0$. Equality (c) follows from the definition of $N$, equality (d) uses the nonnegative $1$-homogeneity of the Brownian kernel, equality (e) follows from the definition of $\hat{\mu}^{(l)}$, and equality (f) is the definition of $\hat{k}^{(l)}$. Since $\b x,\b x'\in\mX$ were arbitrary,
\begin{align}
\hat{k}^{(l)} = k^{(l)}
\qquad
\text{pointwise on } \mX\times\mX.
\end{align}

\medskip
\noindent
\textbf{Step 4} (preservation of the RKHS):
Because $k^{(l)}$ and $\hat{k}^{(l)}$ coincide pointwise, their associated RKHSs coincide isometrically. Therefore,
\begin{align}
\mH_{\hat{k}^{(l)}}
=
\mH_{k^{(l)}}
=
\Hlpmu,
\qquad
\Hlpmuh
=
\Hlpmu.
\end{align}
This completes the proof.
\end{proof}

\begin{lemma}[two-ladder amalgamation]
\label{lem:measures_inequality}
Let $L\ge2$.
\begin{enumerate}[(i)]
\item \label{lem:local-kernel-domination}
\emph{Local domination.}
Let $\mH\subseteq\R^{\mX}$ be a Hilbert space of functions on $\mX$, and let $\mu_a,\mu_b\in\mP_1(\mH)$. Define
\begin{align}
k_s(\b x,\b x')
\coloneqq
\int_{\mH}
\kB\left(u(\b x),u(\b x')\right)
\,\d\mu_s(u),
\qquad
s\in\{a,b\}.
\end{align}
Assume that $\mu_b\ll\mu_a$ and that
\begin{align}
0
\le
\frac{\d\mu_b}{\d\mu_a}(u)
\le
M
\qquad
\mu_a\text{-a.e.}
\end{align}
for some $M>0$. Then
\begin{align}
\mH_{k_b}
\subseteq
\mH_{k_a},
\qquad
\|f\|_{\mH_{k_a}}
\le
\sqrt{M}\,\|f\|_{\mH_{k_b}},
\quad
f\in\mH_{k_b}.
\end{align}

\item \label{lem:finite-moment-canonicalization}
\emph{Canonicalization of a finite-moment ladder.}
Let $\LHmu=\left(\Hlmu\right)_{l=1}^L$ be an $L$-level \BK{} as in Definition~\ref{def:\BK{}def}, with layer measures $\mu^{(l)}\in\mP_1\left(\Slmu\right)$. Assume that
\begin{align}
m_l
\coloneqq
\int_{\Slmu}
\|u\|_{\Hlmu}
\,\d\mu^{(l)}(u)
\in
(0,\infty),
\qquad
l\in[L-1].
\end{align}
Define
\begin{align}
q_1\coloneqq1,
\qquad
q_{l+1}\coloneqq\sqrt{q_lm_l},
\quad
l\in[L-1].
\end{align}
Then there exists a canonical ladder $\LHmuh\in\LHh$ such that, for every $l\in[L]$,
\begin{align}
\Hlmuh
&=
\Hlmu
\quad\text{as sets of functions},
&
\|f\|_{\Hlmuh}
&=
q_l\|f\|_{\Hlmu},
\quad
f\in\Hlmu.
\label{eq:finite-moment-canonicalization}
\end{align}
In particular,
\begin{align}
q_L
=
\prod_{l=1}^{L-1}
m_l^{2^{-(L-l)}}.
\end{align}

\item \label{lem:two-ladder-amalgamation}
\emph{Two-ladder canonical amalgamation.}
Let $\LH_{\hat\mu_b},\LH_{\hat\mu_c}\in\LHh$ be canonical $L$-level \BK{}s. Then there exists a canonical ladder $\LH_{\hat\mu_a}\in\LHh$ such that
\begin{align}
\mH_{\hat\mu_b}^{(L)}
\cup
\mH_{\hat\mu_c}^{(L)}
\subseteq
\mH_{\hat\mu_a}^{(L)}.
\end{align}
Moreover, with
\begin{align}
\kappa_L
\coloneqq
\frac{2}{\sqrt[\leftroot{-2}\uproot{0}2^L]{2}},
\end{align}
one has, for all $f_b\in\mH_{\hat\mu_b}^{(L)}$ and $f_c\in\mH_{\hat\mu_c}^{(L)}$,
\begin{align}
\|f_b+f_c\|_{\mH_{\hat\mu_a}^{(L)}}
\le
\kappa_L
\left(
\|f_b\|_{\mH_{\hat\mu_b}^{(L)}}
+
\|f_c\|_{\mH_{\hat\mu_c}^{(L)}}
\right).
\label{eq:two-ladder-amalgamation}
\end{align}
\end{enumerate}
\end{lemma}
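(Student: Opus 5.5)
For part \ref{lem:local-kernel-domination} I would work with the positive-semidefinite order on kernels and use Aronszajn's inclusion theorem. Write $g\coloneqq\frac{\d\mu_b}{\d\mu_a}$. Then
\begin{align}
M k_a(\b x,\b x')-k_b(\b x,\b x')
=
\int_{\mH}\kB\left(u(\b x),u(\b x')\right)\left(M-g(u)\right)\d\mu_a(u).
\end{align}
This is a mixture of the positive-semidefinite kernels $k_u$ with nonnegative weights, so it is itself a kernel. Integrability follows from \Cref{lem:BK}, because $\mu_a\in\mP_1(\mH)$. Hence $k_b\preceq Mk_a$, and Aronszajn's theorem gives $\mH_{k_b}\subseteq\mH_{Mk_a}$ with $\|f\|_{\mH_{Mk_a}}\le\|f\|_{\mH_{k_b}}$. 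The identity $\|f\|_{\mH_{k_a}}=\sqrt M\|f\|_{\mH_{Mk_a}}$ then yields the claim. If one prefers to avoid citing Aronszajn, the same conclusion follows from the integral-RKHS norm \eqref{eq:intRKHSnorm}, by transporting a representing family $(f_u)$ for $k_b$ to the family $(g(u)f_u)$ for $k_a$.

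For part \ref{lem:finite-moment-canonicalization} I would argue by induction on $l$. The induction hypothesis is that layer $l$ has already been replaced by a canonical layer with the same function set and norm $q_l\|\cdot\|_{\Hlmu}$. Measured in the new norm, $\mu^{(l)}$ has first moment $q_lm_l$. Let $\nu$ be the pushforward of $\mu^{(l)}$ under the dilation $u\mapsto u/(q_lm_l)$. Then $\nu$ has unit first moment in the new norm. By the nonnegative $1$-homogeneity of $\kB$, its kernel is $k^{(l)}/(q_lm_l)$. This kernel has the same RKHS as $k^{(l)}$, with norm multiplied by $\sqrt{q_lm_l}=q_{l+1}$. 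Finally, \Cref{lem:measure_constraint_sphere} moves $\nu$ onto the unit sphere without changing the kernel, which gives the next canonical layer. Unrolling $\log q_{l+1}=\frac12(\log q_l+\log m_l)$ with $q_1=1$ gives $q_L=\prod_l m_l^{2^{-(L-l)}}$.

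For part \ref{lem:two-ladder-amalgamation} I would again use induction on the layer. At layer $l$, I build a canonical $\mH_{\hat\mu_a}^{(l)}$ together with a constant $c_l$ such that
\begin{align}
\mH_{\hat\mu_b}^{(l)}\cup\mH_{\hat\mu_c}^{(l)}\subseteq\mH_{\hat\mu_a}^{(l)},
\qquad
\|f\|_{\mH_{\hat\mu_a}^{(l)}}\le c_l\|f\|_{\mH_{\hat\mu_s}^{(l)}},
\quad s\in\{b,c\},
\end{align}
starting from $c_1=1$, since the first layers coincide. The inclusions are bounded linear maps, hence continuous, so $\hat\mu_b^{(l)}$ and $\hat\mu_c^{(l)}$ can be regarded as Borel measures on $\mH_{\hat\mu_a}^{(l)}$. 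There their first moments are at most $c_l$ and strictly positive, because the measures sit on unit spheres. Set $\mu\coloneqq\frac12\left(\hat\mu_b^{(l)}+\hat\mu_c^{(l)}\right)$. Its kernel is $\frac12\left(\hat k_b^{(l)}+\hat k_c^{(l)}\right)$, and the density of $\hat\mu_s^{(l)}$ with respect to $\mu$ is at most $2$. Part \ref{lem:local-kernel-domination} therefore embeds $\mH_{\hat\mu_s}^{(l+1)}$ into $\mH_{k_\mu}$ with factor $\sqrt2$. Canonicalizing $\mu$ by the one-step version of part \ref{lem:finite-moment-canonicalization}, with moment $m\le c_l$, multiplies norms by $\sqrt m\le\sqrt{c_l}$. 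This gives $c_{l+1}\le\sqrt{2c_l}$, hence $c_L\le2^{1-2^{-(L-1)}}\le\kappa_L$. Estimate \eqref{eq:two-ladder-amalgamation} then follows from the triangle inequality in the Hilbert space $\mH_{\hat\mu_a}^{(L)}$.

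I expect the main obstacle to be the bookkeeping in this last induction. The two inherited measures live on spheres of different Hilbert spaces, so their measurability and first moments must be checked after transport into $\mH_{\hat\mu_a}^{(l)}$. One must also confirm that the scaling produced by the canonicalization step composes correctly with the factor $\sqrt2$ from the density bound; I would verify these two points first.
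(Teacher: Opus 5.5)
Your proof is correct and follows the paper's overall plan: domination under a bounded density, then a one-step rescaling with radial projection, then an equal-weight mixture of the two canonical measures pushed forward along identity embeddings. The differences are in how the steps are carried out.

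\emph{Part (i).} The paper does not use Aronszajn's inclusion theorem. It takes an $\varepsilon$-optimal representing field $(a_u)$ for $f$ under $\mu_b$ in the integral-RKHS norm \eqref{eq:intRKHSnorm}, transports it to $h_u=w(u)a_u$ with $w=\d\mu_b/\d\mu_a$, and uses $w^2\le Mw$; this is the alternative you sketch. Your order argument $k_b\preceq Mk_a$ is shorter and works with the kernels alone, without choosing near-optimal representing fields.

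\emph{Part (ii).} Your dilation followed by radial projection is the paper's single defining formula for $\hat\mu^{(l)}$, split into two steps.

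\emph{Part (iii).} The paper first builds a noncanonical ladder whose layer kernels are $\frac12(\hat k_b^{(l)}+\hat k_c^{(l)})$. Every layer embedding then has constant $\sqrt2$ and every first moment satisfies $m_l\le\sqrt2$. It canonicalizes only once at the end, via part (ii), and bounds $q_L$ through the product formula. Canonicalizing layer by layer, as you do, produces exactly the same kernels, because your moment at layer $l$ equals the paper's $q_lm_l$. Your recursion $c_{l+1}\le\sqrt{2c_l}$ with $c_1=1$ keeps track of $m_1=1$. It therefore yields the constant $2^{1-2^{-(L-1)}}$, slightly sharper than $\kappa_L=2^{1-2^{-L}}$. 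The paper lands on $\kappa_L$ only because it also bounds $m_1$ by $\sqrt2$.
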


\begin{proof}~

\emph{Proof of \ref{lem:local-kernel-domination}.}
For $u\in\mH$, let
\begin{align}
k_u(\b x,\b x')
\coloneqq
\kB\left(u(\b x),u(\b x')\right).
\end{align}
Fix $f\in\mH_{k_b}$ and $\varepsilon>0$. By the integral-RKHS representation \eqref{eq:Hk-elements} and norm identity \eqref{eq:intRKHSnorm}, there exists an admissible measurable field $a=(a_u)_{u\in\mH}$ with $a_u\in\mH_{k_u}$ such that
\begin{align}
f(\b x)
&=
\int_{\mH}
a_u(\b x)
\,\d\mu_b(u),
\qquad
\b x\in\mX,\\
\int_{\mH}
\|a_u\|_{\mH_{k_u}}^2
\,\d\mu_b(u)
&\le
\|f\|_{\mH_{k_b}}^2+\varepsilon.
\end{align}
Let $w=\d\mu_b/\d\mu_a$. Define $h_u=w(u)a_u$ on $\{w>0\}$ and $h_u=0$ on $\{w=0\}$. Then $h_u\in\mH_{k_u}$ for $\mu_a$-almost every $u$, and the change-of-measure identity gives
\begin{align}
f(\b x)
&=
\int_{\mH}
h_u(\b x)
\,\d\mu_a(u).
\end{align}
Thus $f\in\mH_{k_a}$. Moreover,
\begin{align}
\|f\|_{\mH_{k_a}}^2
&\stackrel{(a)}{\le}
\int_{\mH}
\|h_u\|_{\mH_{k_u}}^2
\,\d\mu_a(u)\\
&\stackrel{(b)}{=}
\int_{\mH}
w(u)^2
\|a_u\|_{\mH_{k_u}}^2
\,\d\mu_a(u)\\
&\stackrel{(c)}{\le}
M
\int_{\mH}
w(u)
\|a_u\|_{\mH_{k_u}}^2
\,\d\mu_a(u)\\
&\stackrel{(d)}{=}
M
\int_{\mH}
\|a_u\|_{\mH_{k_u}}^2
\,\d\mu_b(u)\\
&\stackrel{(e)}{\le}
M\left(
\|f\|_{\mH_{k_b}}^2+\varepsilon
\right).
\end{align}
Here (a) evaluates the infimum in \eqref{eq:intRKHSnorm} at the field $h$, (b) uses homogeneity of each fiber norm, (c) uses $w^2\le Mw$, (d) is the change-of-measure formula, and (e) uses the $\varepsilon$-optimal choice of $a$. Letting $\varepsilon\downarrow0$ proves \ref{lem:local-kernel-domination}.

\medskip
\noindent
\emph{Proof of \ref{lem:finite-moment-canonicalization}.}
We proceed recursively. Set $\mH_{\hat\mu}^{(1)}\coloneqq\mH_{\mu}^{(1)}$, so \eqref{eq:finite-moment-canonicalization} holds for $l=1$ with $q_1=1$. Suppose that it holds at layer $l$. For every Borel set $A\subseteq\S_1\left(\mH_{\hat\mu}^{(l)}\right)$, define
\begin{align}
\hat\mu^{(l)}(A)
\coloneqq
\frac{1}{m_l}
\int_{\Slmu\setminus\{0\}}
\I_A\left(
\frac{u}{q_l\|u\|_{\Hlmu}}
\right)
\|u\|_{\Hlmu}
\,\d\mu^{(l)}(u).
\end{align}
By the induction hypothesis,
\begin{align}
\left\|
\frac{u}{q_l\|u\|_{\Hlmu}}
\right\|_{\mH_{\hat\mu}^{(l)}}
=1,
\end{align}
so this is a probability measure on the required unit sphere. Its first moment is one. By nonnegative $1$-homogeneity of the Brownian kernel,
\begin{align}
\hat k^{(l)}(\b x,\b x')
&=
\int
\kB\left(v(\b x),v(\b x')\right)
\,\d\hat\mu^{(l)}(v)\\
&=
\frac{1}{q_lm_l}
k^{(l)}(\b x,\b x').
\end{align}
For every kernel $k$ and every $c>0$, one has
\begin{align}
\mH_{ck}=\mH_k
\quad\text{as sets},
\qquad
\|f\|_{\mH_{ck}}
=c^{-1/2}\|f\|_{\mH_k}.
\end{align}
Applying this identity with $c=(q_lm_l)^{-1}$ gives
\begin{align}
\mH_{\hat\mu}^{(l+1)}
&=
\mH_{\mu}^{(l+1)},
&
\|f\|_{\mH_{\hat\mu}^{(l+1)}}
&=
\sqrt{q_lm_l}\,
\|f\|_{\mH_{\mu}^{(l+1)}}\\
&&&=
q_{l+1}\|f\|_{\mH_{\mu}^{(l+1)}}.
\end{align}
This proves the induction. The product formula for $q_L$ follows directly from the recursion $q_{l+1}=\sqrt{q_lm_l}$.

\medskip
\noindent
\emph{Proof of \ref{lem:two-ladder-amalgamation}.}
We first construct a possibly noncanonical ladder $\mH_{\mu_a}^{(1:L)}$. At the first layer, all three ladders use the same linear RKHS. Suppose recursively that, for $s\in\{b,c\}$, the identity map on functions defines a continuous embedding
\begin{align}
J_s^{(l)}:
\mH_{\hat\mu_s}^{(l)}
\longrightarrow
\mH_{\mu_a}^{(l)}.
\end{align}
Let
\begin{align}
\nu_s^{(l)}
\coloneqq
\left(J_s^{(l)}\right)_{\#}\hat\mu_s^{(l)},
\qquad
\mu_a^{(l)}
\coloneqq
\frac{1}{2}\nu_b^{(l)}
+
\frac{1}{2}\nu_c^{(l)}.
\end{align}
These are Borel probability measures on $\mH_{\mu_a}^{(l)}$. Before constructing the next kernel, define
\begin{align}
m_l
\coloneqq
\int_{\mH_{\mu_a}^{(l)}}
\|u\|_{\mH_{\mu_a}^{(l)}}
\,\d\mu_a^{(l)}(u).
\end{align}
At the first layer, $m_1=1$. For $l\ge2$, the embeddings already obtained at level $l$ and the unit-sphere support of the original canonical measures give
\begin{align}
0<m_l\le\sqrt{2}.
\end{align}
Thus $\mu_a^{(l)}$ has finite first moment and defines a valid next-layer integral kernel. Because each $J_s^{(l)}$ is the identity on the underlying functions, the kernel induced by $\nu_s^{(l)}$ is exactly $\hat k_s^{(l)}$. Hence
\begin{align}
k_a^{(l)}
=
\frac{1}{2}\hat k_b^{(l)}
+
\frac{1}{2}\hat k_c^{(l)}.
\end{align}
Moreover, $\nu_s^{(l)}\ll\mu_a^{(l)}$ with density bounded by $2$. Applying \ref{lem:local-kernel-domination} yields the next embeddings
\begin{align}
\mH_{\hat\mu_s}^{(l+1)}
&\subseteq
\mH_{\mu_a}^{(l+1)},
&
\|f\|_{\mH_{\mu_a}^{(l+1)}}
&\le
\sqrt{2}\,
\|f\|_{\mH_{\hat\mu_s}^{(l+1)}}.
\label{eq:two-ladder-precanonical-embedding}
\end{align}
This completes the recursive construction and simultaneously shows that every layer measure has a strictly positive finite first moment. Apply \ref{lem:finite-moment-canonicalization} to $\mH_{\mu_a}^{(1:L)}$. It produces a canonical ladder $\LH_{\hat\mu_a}$ satisfying
\begin{align}
\|f\|_{\mH_{\hat\mu_a}^{(L)}}
=
q_L\|f\|_{\mH_{\mu_a}^{(L)}},
\qquad
q_L
=
\prod_{l=1}^{L-1}m_l^{2^{-(L-l)}}.
\end{align}
Using $m_l\le\sqrt2$ for every $l$ gives
\begin{align}
q_L
&\le
\left(\sqrt2\right)^{\sum_{l=1}^{L-1}2^{-(L-l)}}
=
2^{1/2-2^{-L}}.
\end{align}
Finally, by \eqref{eq:two-ladder-precanonical-embedding} at the top layer and the triangle inequality in $\mH_{\mu_a}^{(L)}$,
\begin{align}
\|f_b+f_c\|_{\mH_{\hat\mu_a}^{(L)}}
&=
q_L
\|f_b+f_c\|_{\mH_{\mu_a}^{(L)}}\\
&\le
q_L\sqrt2
\left(
\|f_b\|_{\mH_{\hat\mu_b}^{(L)}}
+
\|f_c\|_{\mH_{\hat\mu_c}^{(L)}}
\right)\\
&\le
2^{1-2^{-L}}
\left(
\|f_b\|_{\mH_{\hat\mu_b}^{(L)}}
+
\|f_c\|_{\mH_{\hat\mu_c}^{(L)}}
\right).
\end{align}
This is \eqref{eq:two-ladder-amalgamation}.
\end{proof}

\begin{lemma}[countable canonical amalgamation]
\label{lemma:TVconvergence}
Let $L\ge2$, and let
\begin{align}
\left\{
\LH_{\hat\mu_j}
\right\}_{j\ge0}
\subseteq
\LHh
\end{align}
be a countable family of canonical $L$-level \BK{}s. Then there exist a canonical ladder $\LH_{\hat\mu_*}\in\LHh$ and a finite constant $A_L>0$, depending only on $L$, such that, for every $j\ge0$,
\begin{align}
\mH_{\hat\mu_j}^{(L)}
&\subseteq
\mH_{\hat\mu_*}^{(L)},
&
\|f\|_{\mH_{\hat\mu_*}^{(L)}}
&\le
A_L
2^{(j+1)(L-1)/2}
\|f\|_{\mH_{\hat\mu_j}^{(L)}},
\quad
f\in\mH_{\hat\mu_j}^{(L)}.
\label{eq:countable-canonical-amalgamation}
\end{align}
\end{lemma}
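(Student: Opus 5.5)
The plan is to mimic the proof of \Cref{lem:measures_inequality}\ref{lem:two-ladder-amalgamation}, replacing the equal-weight mixture by a geometric one. The first step builds a possibly noncanonical ladder $\mH_{\mu_*}^{(1:L)}$ recursively, with weights $w_j\coloneqq 2^{-(j+1)}$ (so that $\sum_{j\ge0}w_j=1$). At layer one all ladders share $\Honemu$, with identity embeddings of norm one. Suppose that for every $j$ the identity map gives a continuous embedding
\begin{align}
J_j^{(l)}:\mH_{\hat\mu_j}^{(l)}\to\mH_{\mu_*}^{(l)},
\qquad
\|J_j^{(l)}\|\le c_{j,l}.
\end{align}
We then set
\begin{align}
\mu_*^{(l)}\coloneqq\sum_{j\ge0}w_j\left(J_j^{(l)}\right)_{\#}\hat\mu_j^{(l)}.
\end{align}
This is a countable convex combination of Borel probability measures on $\mH_{\mu_*}^{(l)}$, hence a probability measure. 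Since each $J_j^{(l)}$ is the identity on functions, the induced kernel is $k_*^{(l)}=\sum_j w_j\hat k_j^{(l)}$; the series converges pointwise because each $\hat k_j^{(l)}(\b x,\b x)\le\|\b x\|_2^{2^{-(l-1)}}$ by \Cref{lem:kernel_boud_induction}. Each summand measure is absolutely continuous with respect to $\mu_*^{(l)}$, with density at most $w_j^{-1}=2^{j+1}$. So \Cref{lem:measures_inequality}\ref{lem:local-kernel-domination} gives
\begin{align}
\|f\|_{\mH_{\mu_*}^{(l+1)}}\le 2^{(j+1)/2}\|f\|_{\mH_{\hat\mu_j}^{(l+1)}}.
\end{align}
Because the density bound does not involve $c_{j,l}$, we get $c_{j,l+1}=2^{(j+1)/2}$ at every level $l\ge1$.

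The second step controls the first moments. For $l=1$, $m_1=\sum_j w_j\cdot1=1$. For $l\ge2$, each $\hat\mu_j^{(l)}$ lives on the unit sphere of $\mH_{\hat\mu_j}^{(l)}$, so
\begin{align}
m_l=\int\|u\|_{\mH_{\mu_*}^{(l)}}\,\d\mu_*^{(l)}(u)\le\sum_j w_j\,2^{(j+1)/2}=\sum_{j\ge0}2^{-(j+1)/2}=\frac{1}{\sqrt2-1}\eqqcolon s.
\end{align}
This is finite, and $m_l>0$ because the measures are supported on nonzero functions. Hence $\mu_*^{(l)}\in\mP_1$ and each layer is well defined; measurability of the embedded pushforwards follows from continuity of $J_j^{(l)}$.

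The third step canonicalizes via \Cref{lem:measures_inequality}\ref{lem:finite-moment-canonicalization}. This gives a canonical ladder $\LH_{\hat\mu_*}$ with the same top-layer set and
\begin{align}
\|f\|_{\mH_{\hat\mu_*}^{(L)}}=q_L\|f\|_{\mH_{\mu_*}^{(L)}},
\qquad
q_L=\prod_{l=1}^{L-1}m_l^{2^{-(L-l)}}\le\max\{1,s\}.
\end{align}
Combining this with the top-layer embedding yields
\begin{align}
\|f\|_{\mH_{\hat\mu_*}^{(L)}}\le s\,2^{(j+1)/2}\|f\|_{\mH_{\hat\mu_j}^{(L)}}.
\end{align}
Since $L\ge2$, we have $2^{(j+1)/2}\le 2^{(j+1)(L-1)/2}$, so the claimed bound holds with $A_L\coloneqq s$. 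The weaker exponent in the statement leaves slack, for example if one prefers layer-dependent weights.

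The main obstacle is the first step in the countable setting. One must verify that the measurable-field representation behind \Cref{lem:measures_inequality}\ref{lem:local-kernel-domination} still applies to the infinite mixture: the density of each pushforward with respect to $\mu_*^{(l)}$ must exist and be bounded by $w_j^{-1}$. This follows from $w_j\nu_j\le\mu_*^{(l)}$, where $\nu_j\coloneqq(J_j^{(l)})_{\#}\hat\mu_j^{(l)}$, via Radon--Nikodym. One must also check that the series defining the kernel and the integral-RKHS condition \eqref{eq:int-kernel-condition} hold uniformly in $j$. Both reduce to the uniform diagonal bound of \Cref{lem:kernel_boud_induction} and the moment estimate above. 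The essential point is that this estimate is uniform because the embedding constants are driven by the mixture weights rather than accumulating across layers.
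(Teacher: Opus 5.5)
Your proof is correct and follows the paper's route: a countable mixture of identity pushforwards, a Radon--Nikodym density bound fed into \Cref{lem:measures_inequality}\ref{lem:local-kernel-domination}, a first-moment bound, and canonicalization via \Cref{lem:measures_inequality}\ref{lem:finite-moment-canonicalization}. The one difference is that you use layer-independent weights $2^{-(j+1)}$ instead of the paper's $p_{j,l}=(2^l-1)2^{-(j+1)l}$; this is legitimate because $\sum_j 2^{-(j+1)/2}<\infty$ already controls every moment, and it gives the sharper, $L$-independent bound $(\sqrt2+1)\,2^{(j+1)/2}$, which implies the stated one.
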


\begin{proof}
For $j\ge0$ and $l\in[L-1]$, define
\begin{align}
p_{j,l}
\coloneqq
\left(2^l-1\right)2^{-(j+1)l}.
\end{align}
The geometric-series identity gives
\begin{align}
\sum_{j=0}^{\infty}p_{j,l}=1.
\end{align}
We recursively construct a possibly noncanonical ladder $\mH_{\mu_*}^{(1:L)}$. At level one, set $\mH_{\mu_*}^{(1)}$ equal to the common linear RKHS. Suppose that continuous identity embeddings
\begin{align}
J_j^{(l)}:
\mH_{\hat\mu_j}^{(l)}
\longrightarrow
\mH_{\mu_*}^{(l)}
\end{align}
have been constructed. Define
\begin{align}
\nu_j^{(l)}
&\coloneqq
\left(J_j^{(l)}\right)_{\#}\hat\mu_j^{(l)},\\
\mu_*^{(l)}
&\coloneqq
\sum_{j=0}^{\infty}
p_{j,l}\nu_j^{(l)}.
\end{align}
The series converges in total variation because each $\nu_j^{(l)}$ is a probability measure and the weights sum to one. Thus $\mu_*^{(l)}$ is a Borel probability measure on $\mH_{\mu_*}^{(l)}$. Define its first moment by
\begin{align}
m_l
\coloneqq
\int_{\mH_{\mu_*}^{(l)}}
\|u\|_{\mH_{\mu_*}^{(l)}}
\,\d\mu_*^{(l)}(u).
\end{align}
At level one, all component measures are supported on the common unit sphere, so $m_1=1$. For $l\ge2$, the embeddings already obtained at level $l$ give
\begin{align}
m_l
&=
\sum_{j=0}^{\infty}
p_{j,l}
\int
\|J_j^{(l)}u\|_{\mH_{\mu_*}^{(l)}}
\,\d\hat\mu_j^{(l)}(u)\\
&\le
\sum_{j=0}^{\infty}
p_{j,l}
p_{j,l-1}^{-1/2}\\
&=
\frac{2^l-1}{\sqrt{2^{l-1}-1}}
\sum_{j=0}^{\infty}
2^{-(j+1)(l+1)/2}
<\infty.
\end{align}
The moment is positive because each embedded unit-sphere element is a nonzero function and hence has positive norm in the common RKHS. Therefore $\mu_*^{(l)}$ has a strictly positive finite first moment and defines a valid next-layer integral kernel. Since $J_j^{(l)}$ is the identity on functions, the kernel induced by $\nu_j^{(l)}$ is $\hat k_j^{(l)}$, and therefore
\begin{align}
k_*^{(l)}
=
\sum_{j=0}^{\infty}
p_{j,l}\hat k_j^{(l)}.
\end{align}
Moreover,
\begin{align}
\nu_j^{(l)}
\ll
\mu_*^{(l)},
\qquad
0
\le
\frac{\d\nu_j^{(l)}}{\d\mu_*^{(l)}}
\le
p_{j,l}^{-1}
\quad
\mu_*^{(l)}\text{-a.e.}
\end{align}
Applying \Cref{lem:measures_inequality}\ref{lem:local-kernel-domination} gives
\begin{align}
\mH_{\hat\mu_j}^{(l+1)}
&\subseteq
\mH_{\mu_*}^{(l+1)},
&
\|f\|_{\mH_{\mu_*}^{(l+1)}}
&\le
p_{j,l}^{-1/2}
\|f\|_{\mH_{\hat\mu_j}^{(l+1)}}.
\label{eq:countable-precanonical-embedding}
\end{align}
This completes the recursion and simultaneously verifies the finite-moment requirement at every layer. For $l\ge2$, introduce the deterministic bound
\begin{align}
\overline m_l
\coloneqq
\frac{2^l-1}{\sqrt{2^{l-1}-1}}
\frac{2^{-(l+1)/2}}{1-2^{-(l+1)/2}},
\end{align}
so that the preceding computation gives $m_l\le\overline m_l$. Define
\begin{align}
Q_L
\coloneqq
\prod_{l=2}^{L-1}
\overline m_l^{2^{-(L-l)}},
\end{align}
with the empty product interpreted as one. This number depends only on $L$. Apply \Cref{lem:measures_inequality}\ref{lem:finite-moment-canonicalization} to $\mH_{\mu_*}^{(1:L)}$. We obtain a canonical ladder $\LH_{\hat\mu_*}\in\LHh$ and
\begin{align}
q_L
=
\prod_{l=1}^{L-1}
m_l^{2^{-(L-l)}}
\le
Q_L
<\infty
\end{align}
such that
\begin{align}
\|f\|_{\mH_{\hat\mu_*}^{(L)}}
=
q_L\|f\|_{\mH_{\mu_*}^{(L)}}.
\end{align}
Using \eqref{eq:countable-precanonical-embedding} at $l=L-1$ yields
\begin{align}
\|f\|_{\mH_{\hat\mu_*}^{(L)}}
&\le
q_Lp_{j,L-1}^{-1/2}
\|f\|_{\mH_{\hat\mu_j}^{(L)}}\\
&\le
\frac{Q_L}{\sqrt{2^{L-1}-1}}
2^{(j+1)(L-1)/2}
\|f\|_{\mH_{\hat\mu_j}^{(L)}}.
\end{align}
Taking
\begin{align}
A_L
\coloneqq
\frac{Q_L}{\sqrt{2^{L-1}-1}}
\end{align}
proves \eqref{eq:countable-canonical-amalgamation}.
\end{proof}

\begin{lemma}[transverse line segment implies
\Cref{thm:charachteriz_funspace}\ref{thm:charachteriz_funspace-4}]
\label{lem:line-segment-implies-trace}
Let $\mX\subset\R^d$ be compact. Assume that there exist $\b x_0,\b v\in\R^d$ and $\rho>0$ such that
\begin{align}
\b v
\notin
\operatorname{span}\left\{\b x_0\right\},
\qquad
\left\{\b x_0+t\b v:t\in[0,\rho]\right\}
\subseteq
\mX.
\end{align}
Then the trace assumption in
\Cref{thm:charachteriz_funspace}\ref{thm:charachteriz_funspace-4}
is satisfied.
\end{lemma}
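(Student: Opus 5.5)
We must exhibit $\gamma$, $u_1\in\Honemu$, and constants $c_1,C_1$ satisfying the trace assumption. We take the segment itself as the curve, $\gamma(t)\coloneqq\b x_0+t\b v$ for $t\in[0,\rho]$. It is Lipschitz with constant $\|\b v\|_2$, satisfies $\gamma(0)=\b x_0$, and maps into $\mX$ by hypothesis.

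For $u_1$ we use a linear functional $u_1(\b x)=\bomega^\top\b x$. By \Cref{def:\BK{}def}, restrictions of such functionals to $\mX$ belong to $\Honemu$. Along the curve,
\begin{align}
u_1(\gamma(t))=\bomega^\top\b x_0+t\,\bomega^\top\b v .
\end{align}
So the trace condition with $c_1=C_1$ holds exactly when
\begin{align}
\bomega^\top\b x_0=0,
\qquad
\bomega^\top\b v>0 .
\end{align}
In that case $u_1(\gamma(t))=(\bomega^\top\b v)\,t$, and we set $c_1=C_1\coloneqq\bomega^\top\b v$.

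The only real step is to find such an $\bomega$, and this is where $\b v\notin\operatorname{span}\{\b x_0\}$ enters. Take $\bomega$ to be the component of $\b v$ orthogonal to $\b x_0$:
\begin{align}
\bomega\coloneqq\b v-\frac{\b v^\top\b x_0}{\|\b x_0\|_2^2}\,\b x_0
\quad\text{if }\b x_0\neq0,
\qquad
\bomega\coloneqq\b v\quad\text{if }\b x_0=0 .
\end{align}
Then $\bomega^\top\b x_0=0$ by construction. Moreover $\bomega^\top\b v=\|\bomega\|_2^2$, which is the squared distance from $\b v$ to $\operatorname{span}\{\b x_0\}$. This is strictly positive precisely because $\b v\notin\operatorname{span}\{\b x_0\}$.

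In the case $\b x_0=0$ the hypothesis forces $\b v\neq0$, so $\bomega^\top\b v=\|\b v\|_2^2>0$. This case gives the origin-endpoint segment mentioned in \Cref{rem:analytical_properties_thm}.

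I expect no genuine obstacle. The one point to check is the non-degeneracy: we need $\bomega^\top\b v>0$ and not merely $\neq0$. This holds by the projection identity above. A negative value could anyway be fixed by replacing $\bomega$ with $-\bomega$.
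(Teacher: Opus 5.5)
Your proof is correct and matches the paper's argument: you use the same curve $\gamma(t)=\b x_0+t\b v$ and the same linear functional, built from the component of $\b v$ orthogonal to $\b x_0$. The only cosmetic difference is that the paper rescales this component by $\|\b w\|_2^{-2}$, so that $u_1(\gamma(t))=t$ and $c_1=C_1=1$, whereas you keep $c_1=C_1=\|\bomega\|_2^2$.
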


\begin{proof}
Define
\begin{align}
\gamma(t)
\coloneqq
\b x_0+t\b v,
\qquad
0\le t\le\rho.
\end{align}
The segment assumption gives $\gamma(t)\in\mX$ for every $t\in[0,\rho]$, and
\begin{align}
\gamma(0)=\b x_0.
\end{align}
Moreover, for all $s,t\in[0,\rho]$,
\begin{align}
\|\gamma(t)-\gamma(s)\|_2
&\stackrel{(a)}{=}
\|(t-s)\b v\|_2
\stackrel{(b)}{=}
|t-s|\,\|\b v\|_2,
\end{align}
where (a) uses the definition of $\gamma$ and (b) uses absolute homogeneity of the Euclidean norm. Hence $\gamma$ is Lipschitz. Define the component of $\b v$ orthogonal to $\b x_0$ by
\begin{align}
\b w
\coloneqq
\begin{cases}
\b v,
& \b x_0=\b0,\\[4pt]
\displaystyle
\b v-
\frac{\langle\b v,\b x_0\rangle}{\|\b x_0\|_2^2}
\b x_0,
& \b x_0\neq\b0.
\end{cases}
\end{align}
The transversality assumption $\b v\notin\operatorname{span}\{\b x_0\}$ implies $\b w\neq\b0$. By construction,
\begin{align}
\langle\b w,\b x_0\rangle
=0,
\qquad
\langle\b w,\b v\rangle
=\|\b w\|_2^2.
\label{eq:transverse-segment-projection-identities}
\end{align}
For $\b x_0=\b0$, both identities are immediate because $\b w=\b v$. For $\b x_0\neq\b0$, the first follows by direct substitution, and the decomposition
\begin{align}
\b v
=
\b w+
\frac{\langle\b v,\b x_0\rangle}{\|\b x_0\|_2^2}
\b x_0
\end{align}
together with $\b w\perp\b x_0$ gives the second. Set
\begin{align}
\b\omega
\coloneqq
\frac{\b w}{\|\b w\|_2^2},
\qquad
u_1(\b x)
\coloneqq
\b\omega^\top\b x,
\qquad
\b x\in\mX.
\end{align}
Since $u_1$ is a linear functional, $u_1\in\Honemu$. Using \eqref{eq:transverse-segment-projection-identities},
\begin{align}
u_1(\b x_0)
&\stackrel{(a)}{=}
\frac{\langle\b w,\b x_0\rangle}{\|\b w\|_2^2}
\stackrel{(b)}{=}0,
\end{align}
where (a) uses the definition of $u_1$ and (b) uses the first identity in \eqref{eq:transverse-segment-projection-identities}. Furthermore, for every $t\in[0,\rho]$,
\begin{align}
u_1(\gamma(t))
&\stackrel{(a)}{=}
\frac{\langle\b w,\b x_0+t\b v\rangle}{\|\b w\|_2^2}\\
&\stackrel{(b)}{=}
\frac{\langle\b w,\b x_0\rangle+t\langle\b w,\b v\rangle}{\|\b w\|_2^2}\\
&\stackrel{(c)}{=}
t.
\end{align}
Here, (a) substitutes the definitions of $u_1$ and $\gamma$, (b) uses linearity of the inner product, and (c) uses both identities in \eqref{eq:transverse-segment-projection-identities}. Thus the theorem's trace condition holds with
\begin{align}
c_1=C_1=1.
\end{align}
\end{proof}

\begin{lemma}[Brownian RKHS characterization]
\label{lem:brownian-rkhs}
The RKHS $\mH_{\kB}$ associated with the Brownian kernel
\begin{align}
\kB(x,x') = \frac{|x|+|x'|-|x-x'|}{2},\qquad \text{for all } x,x' \in \R.
\end{align}
consists of all absolutely continuous functions $h:\R\to\R$ such that $h(0)=0$ and
\begin{align}
\int_{\R} |h'(t)|^2 \d t < \infty,
\end{align}
with norm
\begin{align}
\|h\|_{\mH_{\kB}}^2 = \int_{\R} |h'(t)|^2 \d t.
\end{align}
\end{lemma}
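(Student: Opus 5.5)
The plan is to identify $\mH_{\kB}$ with the Sobolev-type space
\begin{align}
\mathcal{W}\coloneqq\left\{h:\R\to\R \text{ absolutely continuous}: h(0)=0,\ h'\in L^2(\R)\right\},\qquad \langle h,g\rangle_{\mathcal{W}}\coloneqq\int_{\R}h'(t)g'(t)\,\d t .
\end{align}
By the Moore--Aronszajn uniqueness of the RKHS attached to a kernel, it suffices to show three things: $\mathcal{W}$ is a Hilbert space of functions, $\kB(\cdot,x)\in\mathcal{W}$ for every $x\in\R$, and the reproducing property $\langle h,\kB(\cdot,x)\rangle_{\mathcal{W}}=h(x)$ holds for all $h\in\mathcal{W}$.

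First I will verify the kernel sections directly. A case check gives
\begin{align}
\kB(t,x)=\begin{cases}\min\{t,x\}^{+}, & x\ge 0,\\ -\min\{-t,-x\}^{+}, & x<0.\end{cases}
\end{align}
For $x\ge0$ this function vanishes at $0$, is absolutely continuous, and has derivative $\I_{[0,x]}(t)$ almost everywhere. For $x<0$ the derivative is $\I_{[x,0]}(t)$ almost everywhere. In both cases the derivative lies in $L^2(\R)$, so $\kB(\cdot,x)\in\mathcal{W}$. For $x\ge0$ the reproducing property is then
\begin{align}
\int_{\R}h'(t)\I_{[0,x]}(t)\,\d t=h(x)-h(0)=h(x),
\end{align}
by the fundamental theorem of calculus for absolutely continuous functions. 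The case $x<0$ is symmetric, with the orientation of the interval handled carefully.

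Next comes the Hilbert-space structure of $\mathcal{W}$. Positive definiteness holds because $h'=0$ almost everywhere and $h(0)=0$ force $h\equiv0$. For completeness I will use the map $h\mapsto h'$, which is an isometry from $\mathcal{W}$ into $L^2(\R)$. It is onto, with inverse $g\mapsto\int_0^{\cdot}g(s)\,\d s$; this integral is well defined and absolutely continuous on bounded intervals because $g\in L^2_{\mathrm{loc}}\subset L^1_{\mathrm{loc}}$. Hence $\mathcal{W}\cong L^2(\R)$ is complete. Point evaluation is bounded by Cauchy--Schwarz, $|h(x)|\le\sqrt{|x|}\,\|h\|_{\mathcal{W}}$, so $\mathcal{W}$ is an RKHS, and its kernel is $\kB$ by the reproducing identity above. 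Uniqueness of the RKHS for a given kernel then yields $\mH_{\kB}=\mathcal{W}$ with equal norms.

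The main, if modest, obstacle is the technical bookkeeping: getting the signs and orientation right in the $x<0$ case, and justifying that the antiderivative of an arbitrary $L^2(\R)$ function (not just $L^1$) is absolutely continuous on $\R$ in the local sense intended by the statement. I would state explicitly that absolute continuity is meant on every compact interval.
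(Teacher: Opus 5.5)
Your route is essentially the paper's. The paper writes $\kB(x,x')=\langle\phi_x,\phi_{x'}\rangle_{L^2(\R)}$ with $\phi_x=\I_{[0,x]}$ for $x\ge0$ and $\phi_x=-\I_{[x,0]}$ for $x<0$. It then takes the space of antiderivatives $h(x)=\int_0^x g(t)\,\d t$ with $g\in L^2(\R)$, normed by $\|g\|_{L^2(\R)}$, identifies $\kB(\cdot,x)$ with the choice $g=\phi_x$, checks the reproducing identity, and invokes uniqueness of the RKHS. That is your $\mathcal{W}$ together with your isometry $h\mapsto h'$; you are in fact more explicit than the paper about completeness.

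One step fails as written, in the $x<0$ case. Your formula $\kB(t,x)=-\min\{-t,-x\}^{+}$ has the wrong sign. For $t,x<0$ one has $\kB(t,x)=\min\{|t|,|x|\}\ge0$; your version would even give $\kB(x,x)=-|x|<0$, which no kernel can have. The correct section is $\kB(t,x)=\min\{-t,-x\}^{+}$, and its a.e. derivative is $-\I_{[x,0]}(t)$, not $\I_{[x,0]}(t)$. With your sign, the reproducing computation yields $\int_x^0 h'(t)\,\d t=h(0)-h(x)=-h(x)$. With the corrected sign it yields $-\int_x^0 h'(t)\,\d t=h(x)$, as required. After this correction, the rest of your argument goes through unchanged.
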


\begin{proof}
We prove that the RKHS associated with $\kB$ is exactly the Cameron--Martin space on $\R$.

\medskip
\noindent
\textbf{Step 1} (feature representation of $\kB$):
For $x\in\R$, define the function $\phi_x:\R\to\R$ by
\begin{align}
\phi_x\left(t\right)
\coloneqq
\left\{
\begin{aligned}
&\I_{\left[0,x\right]}\left(t\right), && x\ge 0,\\
&-\I_{\left[x,0\right]}\left(t\right), && x<0.
\end{aligned}
\right.
\end{align}
We claim that
\begin{align}
\kB\left(x,x'\right)
=
\left\langle
\phi_x,\phi_{x'}
\right\rangle_{L^2\left(\R\right)},
\qquad x,x'\in\R.
\label{eq:brownian-feature-representation}
\end{align}
To verify this, we distinguish the possible signs of $x$ and $x'$. If $x\ge 0$ and $x'\ge 0$, then
\begin{align}
\left\langle
\phi_x,\phi_{x'}
\right\rangle_{L^2\left(\R\right)}
&\stackrel{(a)}{=}
\int_{\R}
\I_{\left[0,x\right]}\left(t\right)
\I_{\left[0,x'\right]}\left(t\right)\d t
\stackrel{(b)}{=}
\left|\left[0,x\right]\cap \left[0,x'\right]\right|
\stackrel{(c)}{=}
\min\left\{x,x'\right\}.
\end{align}
Here (a) follows from the definition of $\phi_x$ and $\phi_{x'}$, (b) uses that the integral of an indicator is the Lebesgue measure of the underlying set, and (c) is immediate since both intervals start at $0$. On the other hand,
\begin{align}
\kB\left(x,x'\right)
&\stackrel{(a)}{=}
\frac{x+x'-\left|x-x'\right|}{2}
\stackrel{(b)}{=}
\min\left\{x,x'\right\},
\end{align}
where (a) uses $|x|=x$ and $|x'|=x'$, and (b) is the elementary identity
\begin{align}
\min\left\{a,b\right\}
=
\frac{a+b-\left|a-b\right|}{2},
\qquad a,b\in\R.
\end{align}
If $x<0$ and $x'<0$, then
\begin{align}
\left\langle
\phi_x,\phi_{x'}
\right\rangle_{L^2\left(\R\right)}
&\stackrel{(a)}{=}
\int_{\R}
\I_{\left[x,0\right]}\left(t\right)
\I_{\left[x',0\right]}\left(t\right)\d t
\stackrel{(b)}{=}
\left|\left[x,0\right]\cap \left[x',0\right]\right|
\stackrel{(c)}{=}
\min\left\{\left|x\right|,\left|x'\right|\right\}.
\end{align}
Here (a) uses that the two minus signs cancel, (b) is again the integral-of-indicator identity, and (c) follows since both intervals end at $0$. Moreover,
\begin{align}
\kB\left(x,x'\right)
&\stackrel{(a)}{=}
\frac{\left|x\right|+\left|x'\right|-\left|x-x'\right|}{2}
\stackrel{(b)}{=}
\min\left\{\left|x\right|,\left|x'\right|\right\},
\end{align}
where (a) is the definition of $\kB$, and (b) is the same elementary identity applied to $\left|x\right|$ and $\left|x'\right|$. If $x\ge 0$ and $x'<0$, or vice versa, then the supports of $\phi_x$ and $\phi_{x'}$ are disjoint up to the single point $0$, hence
\begin{align}
\left\langle
\phi_x,\phi_{x'}
\right\rangle_{L^2\left(\R\right)}
=
0.
\end{align}
On the other hand, if $x\ge 0$ and $x'<0$, then
\begin{align}
\kB\left(x,x'\right)
&\stackrel{(a)}{=}
\frac{x+\left(-x'\right)-\left|x-x'\right|}{2}
\stackrel{(b)}{=}
\frac{x-x'-\left(x-x'\right)}{2}
\stackrel{(c)}{=}
0.
\end{align}
Here (a) uses $|x|=x$ and $|x'|=-x'$, (b) uses $x-x'>0$, and (c) is immediate. The case $x<0<x'$ is identical. Thus \eqref{eq:brownian-feature-representation} holds for all $x,x'\in\R$.

\medskip
\noindent
\textbf{Step 2} (construction of the candidate RKHS):
Let
\begin{align}
\mathcal{H}
\coloneqq
\left\{
h:\R\to\R:
\exists g\in L^2\left(\R\right)
\text{ such that }
h\left(x\right)=\int_0^x g\left(t\right)\d t
\text{ for all } x\in\R
\right\}.
\end{align}
For $h\in\mathcal{H}$ represented by $g\in L^2\left(\R\right)$, define
\begin{align}
\left\|h\right\|_{\mathcal{H}}
\coloneqq
\left\|g\right\|_{L^2\left(\R\right)}.
\end{align}
We first check that this is well defined. Suppose
\begin{align}
\int_0^x g\left(t\right)\d t
=
\int_0^x \tilde g\left(t\right)\d t,
\qquad \forall x\in\R.
\end{align}
Then
\begin{align}
\int_0^x \left(g-\tilde g\right)\left(t\right)\d t
=
0,
\qquad \forall x\in\R.
\end{align}
Hence the absolutely continuous function
\begin{align}
x\mapsto \int_0^x \left(g-\tilde g\right)\left(t\right)\d t
\end{align}
is identically zero, so its derivative vanishes almost everywhere. Therefore,
\begin{align}
g=\tilde g
\qquad \text{a.e. on } \R.
\end{align}
Thus the norm is well defined. Next, for $g\in L^2\left(\R\right)$ and $x\in\R$, one has
\begin{align}
\int_0^x g\left(t\right)\d t
=
\left\langle
g,\phi_x
\right\rangle_{L^2\left(\R\right)}.
\label{eq:integral-as-inner-product}
\end{align}
Indeed, if $x\ge 0$, then
\begin{align}
\left\langle
g,\phi_x
\right\rangle_{L^2\left(\R\right)}
\stackrel{(a)}{=}
\int_{\R} g\left(t\right)\I_{\left[0,x\right]}\left(t\right)\d t
\stackrel{(b)}{=}
\int_0^x g\left(t\right)\d t.
\end{align}
Here (a) uses the definition of $\phi_x$, and (b) restricts the integral to the support of the indicator. If $x<0$, then
\begin{align}
\left\langle
g,\phi_x
\right\rangle_{L^2\left(\R\right)}
&\stackrel{(a)}{=}
-\int_{\R} g\left(t\right)\I_{\left[x,0\right]}\left(t\right)\d t
\stackrel{(b)}{=}
-\int_x^0 g\left(t\right)\d t
\stackrel{(c)}{=}
\int_0^x g\left(t\right)\d t.
\end{align}
Here (a) again uses the definition of $\phi_x$, (b) restricts to the support, and (c) is the standard reversal-of-limits identity. Therefore each $h\in\mathcal{H}$ satisfies
\begin{align}
h\left(x\right)
=
\left\langle
g,\phi_x
\right\rangle_{L^2\left(\R\right)},
\qquad x\in\R.
\end{align}
This shows in particular that point evaluation is continuous on $\mathcal{H}$:
\begin{align}
\left|h\left(x\right)\right|
\stackrel{(a)}{=}
\left|
\left\langle
g,\phi_x
\right\rangle_{L^2\left(\R\right)}
\right|
\stackrel{(b)}{\le}
\left\|g\right\|_{L^2\left(\R\right)}
\left\|\phi_x\right\|_{L^2\left(\R\right)}
=
\left\|h\right\|_{\mathcal{H}}
\sqrt{\kB\left(x,x\right)}.
\end{align}
Here (a) is \eqref{eq:integral-as-inner-product}, and (b) follows from the CSB inequality together with
\begin{align}
\left\|\phi_x\right\|_{L^2\left(\R\right)}^2
=
\left\langle
\phi_x,\phi_x
\right\rangle_{L^2\left(\R\right)}
=
\kB\left(x,x\right)
\end{align}
by \eqref{eq:brownian-feature-representation}. Hence $\mathcal{H}$ is an RKHS.

\medskip
\noindent
\textbf{Step 3} (identification of the kernel and of the norm):
We now show that the reproducing kernel of $\mathcal{H}$ is exactly $\kB$. Fix $x\in\R$ and define
\begin{align}
k_x\left(\cdot\right)
\coloneqq
\kB\left(\cdot,x\right).
\end{align}
By \eqref{eq:brownian-feature-representation},
\begin{align}
k_x\left(y\right)
=
\left\langle
\phi_y,\phi_x
\right\rangle_{L^2\left(\R\right)},
\qquad y\in\R.
\end{align}
Hence $k_x\in\mathcal{H}$, namely it is the function associated with $g=\phi_x$. Moreover, if $h\in\mathcal{H}$ corresponds to $g\in L^2\left(\R\right)$, then
\begin{align}
\left\langle
h,k_x
\right\rangle_{\mathcal{H}}
\stackrel{(a)}{=}
\left\langle
g,\phi_x
\right\rangle_{L^2\left(\R\right)}
\stackrel{(b)}{=}
h\left(x\right).
\end{align}
Here (a) follows from the definition of the inner product on $\mathcal{H}$, and (b) is \eqref{eq:integral-as-inner-product}. Thus $\kB$ is the reproducing kernel of $\mathcal{H}$. By uniqueness of the RKHS associated with a given kernel, we conclude that $\mH_{\kB}
=
\mathcal{H}$. Finally, if $h\in\mH_{\kB}$, then by construction there exists $g\in L^2\left(\R\right)$ such that
\begin{align}
h\left(x\right)=\int_0^x g\left(t\right)\d t.
\end{align}
Therefore $h$ is absolutely continuous on $\R$, satisfies $h\left(0\right)=0$, and has weak derivative
\begin{align}
h'=g
\qquad \text{a.e. on } \R.
\end{align}
Conversely, every absolutely continuous $h:\R\to\R$ with $h\left(0\right)=0$ and $h'\in L^2\left(\R\right)$ admits the representation
\begin{align}
h\left(x\right)
=
\int_0^x h'\left(t\right)\d t,
\qquad x\in\R,
\end{align}
and hence belongs to $\mathcal{H}=\mH_{\kB}$. The norm is
\begin{align}
\left\|h\right\|_{\mH_{\kB}}^2
\stackrel{(a)}{=}
\left\|g\right\|_{L^2\left(\R\right)}^2
\stackrel{(b)}{=}
\int_{\R}\left|h'\left(t\right)\right|^2\d t.
\end{align}
Here (a) is the definition of the Hilbert norm on $\mathcal{H}$, and (b) uses $g=h'$ a.e. This proves the claim.
\end{proof}

\begin{lemma}[closed adaptive balls and lower semicontinuity]
\label{lem:lsc-CL}
Let $\mX\subset\R^d$ be compact, let $L\ge2$, and let Assumption~\ref{assumption:normalizing} hold. For every $r\ge0$, the adaptive ball
\begin{align}
\BL_r
\coloneqq
\left\{
 f\in\HL:
 \CL(f)\le r
\right\}
\end{align}
is closed in $\left(\mC(\mX),\|\cdot\|_\infty\right)$. Consequently, the extended canonical complexity
\begin{align}
\widehat C_{\mathrm{ext}}^{(L)}(f)
\coloneqq
\begin{cases}
\CL(f), & f\in\HL,\\[1mm]
+\infty, & f\in\mC(\mX)\setminus\HL,
\end{cases}
\end{align}
is lower semicontinuous with respect to uniform convergence. Equivalently, for every sequence
\begin{align}
(f_m)_{m\ge1}
\subseteq
\mC(\mX)
\end{align}
satisfying $f_m\to f$ uniformly on $\mX$, one has
\begin{align}
\widehat C_{\mathrm{ext}}^{(L)}(f)
\le
\liminf_{m\to\infty}
\widehat C_{\mathrm{ext}}^{(L)}(f_m).
\label{eq:adaptive-complexity-lsc}
\end{align}
\end{lemma}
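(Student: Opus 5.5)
The plan is a compactness argument on the ladder measures themselves, not on the functions alone. Let $(f_m)\subseteq\BL_r$ with $f_m\to f$ uniformly. By definition of $\CL$, choose canonical ladders $\LH_{\hat\mu_m}\in\LHh$ with $f_m\in\mH_{\hat\mu_m}^{(L)}$ and $\|f_m\|_{\mH_{\hat\mu_m}^{(L)}}\le r+1/m$. The goal is a limiting ladder whose top layer contains $f$ with norm at most $r$. The countable amalgamation of \Cref{lemma:TVconvergence} is useless here, because its constants blow up in $j$.

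\textbf{Step 1} (compact ambient space for the layer measures). By \Cref{lem:RKHSfunctions_regularity} and \Cref{lem:kernel_boud_induction}, every $u\in\S_1(\mH^{(l)}_{\hat\mu_m})$ satisfies $|u(\b x)|\le\|\b x\|_2^{2^{-(l-1)}}$ and $|u(\b x)-u(\b x')|\le\|\b x-\b x'\|_2^{2^{-(l-1)}}$, uniformly in $m$. Let $K_l\subset\mC(\mX)$ be the set of functions obeying these two bounds. By Arzel\`a--Ascoli, $K_l$ is compact. Push each $\hat\mu_m^{(l)}$ forward to a Borel probability measure $\nu_m^{(l)}$ on $K_l$; the inclusion of the unit sphere into $(\mC(\mX),\|\cdot\|_\infty)$ is continuous, hence measurable. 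By Prokhorov, after passing to a diagonal subsequence, $\nu_m^{(l)}\to\nu^{(l)}$ weakly for every $l\in[L-1]$. The integrand $u\mapsto\kB(u(\b x),u(\b x'))$ is continuous and bounded on $K_l$. Hence $\hat k_m^{(l)}\to k^{(l)}\coloneqq\int_{K_l}\kB(u(\b x),u(\b x'))\,\d\nu^{(l)}(u)$ pointwise on $\mX\times\mX$. This uses only the layer-$l$ measures, so the layers decouple.

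\textbf{Step 2} (limits stay in the limiting unit balls). I will use the standard characterization: $g\in\mH_k$ with $\|g\|_{\mH_k}\le c$ iff
\begin{align}
\Big|\sum_i a_i g(\b x_i)\Big|^2\le c^2\sum_{i,j}a_ia_jk(\b x_i,\b x_j)
\end{align}
for all finite families of points and coefficients. This condition is preserved under pointwise convergence of both $g$ and $k$.

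The first use is for the supports. Every $u$ in the support of $\nu^{(l)}$ is a uniform limit of elements $u_m\in\S_1(\mH^{(l)}_{\hat\mu_m})$. By induction on $l$, these layers have kernels $\hat k_m^{(l-1)}\to k^{(l-1)}$; at $l=1$ all ladders share the linear layer. Hence $u$ lies in the closed unit ball of the limit layer $\mH_{k^{(l-1)}}$. The second use is at the top, applied to $f_m\to f$ with $c=r+1/m$ and kernels $\hat k_m^{(L-1)}\to k^{(L-1)}$. It gives $f\in\mH_{k^{(L-1)}}$ with norm at most $r$.

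\textbf{Step 3} (canonicalization). Transfer $\nu^{(l)}$ to a Borel measure on the layer RKHS. On the unit ball of that RKHS, the sup-norm topology and the RKHS Borel structure should be comparable, because point evaluations generate both $\sigma$-fields on a separable ball. I expect this measurability transfer to be the main technical obstacle, and I would settle it first via this point-evaluation argument. The resulting ladder has first moments $m_l\le1$.

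If some $m_l=0$, then $\nu^{(l)}=\delta_0$ and the next kernel vanishes. The same then holds for all higher kernels, so $k^{(L-1)}=0$ and $f=0\in\BL_r$.

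Otherwise apply \Cref{lem:measures_inequality}\ref{lem:finite-moment-canonicalization}. It yields a canonical ladder whose top norm equals $q_L\|f\|$, with $q_L=\prod_l m_l^{2^{-(L-l)}}\le1$. Therefore $\CL(f)\le r$, so $\BL_r$ is closed.

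\textbf{Lower semicontinuity.} This follows from closedness. Set $c=\liminf_m \widehat C_{\mathrm{ext}}^{(L)}(f_m)$; if $c=\infty$ there is nothing to prove. Otherwise, for each $\varepsilon>0$, infinitely many $f_m$ lie in $\BL_{c+\varepsilon}$. Closedness of that ball gives $f\in\BL_{c+\varepsilon}$, and letting $\varepsilon\downarrow0$ yields \eqref{eq:adaptive-complexity-lsc}.
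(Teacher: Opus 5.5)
Your proposal is correct and follows essentially the same route as the paper. The shared steps are: Arzel\`a--Ascoli compact ambient sets for the layer measures; Prokhorov extraction of the pushed-forward canonical measures; the finite-test (Riesz) characterization of RKHS balls, used to place both the limit measures' supports and $f$ in the limiting unit and radius-$r$ balls; canonicalization via \Cref{lem:measures_inequality}\ref{lem:finite-moment-canonicalization} with $q_L\le1$, together with the degenerate case $m_l=0$; and lower semicontinuity from closedness. The differences are only in execution: you use pointwise rather than uniform kernel convergence, which suffices because the finite test evaluates kernels at finitely many points; you locate supports by approximating support points rather than by the paper's Portmanteau argument with the open sets $O_{q,l}$; and you handle measurability via point-evaluation $\sigma$-fields instead of Lusin--Souslin.
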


\begin{proof}
We first prove that every adaptive ball $\BL_r$ is closed.

\medskip
\noindent
\textbf{Step 1} (compact ambient sets for all layer measures):
For every $l\in[L]$, let $\mathscr U_l$ denote the union of the closed unit balls of all $l$-th-layer RKHSs arising from canonical $l$-level \BK{}s. At the first layer, all canonical ladders use the common linear RKHS $\Honemu$. Define
\begin{align}
\mathscr K_l
\coloneqq
\overline{\mathscr U_l}^{\|\cdot\|_\infty}
\subseteq
\mC(\mX),
\qquad
l\in[L].
\end{align}
Set
\begin{align}
\alpha_l
\coloneqq
2^{-(l-1)},
\qquad
R_{\mX}
\coloneqq
\sup_{x\in\mX}
\|x\|_2.
\end{align}
Let $u\in\mathscr U_l$. Then $u$ belongs to the closed unit ball of the $l$-th-layer RKHS of some canonical ladder. By \Cref{lem:RKHSfunctions_regularity,lem:kernel_boud_induction},
\begin{align}
|u(x)|
&\le
R_{\mX}^{\alpha_l},
\\
|u(x)-u(x')|
&\le
\|x-x'\|_2^{\alpha_l},
\qquad
x,x'\in\mX.
\end{align}
Thus $\mathscr U_l$ is uniformly bounded and equicontinuous. Applying \Cref{thm:arzela-ascoli} gives that
\begin{align}
\mathscr K_l
\end{align}
is compact in $\left(\mC(\mX),\|\cdot\|_\infty\right)$ for every $l\in[L]$.

\medskip
\noindent
\textbf{Step 2} (stability of RKHS balls under simultaneous kernel and function convergence):
We first establish an auxiliary fact. Let $(k_m)_{m\ge1}$ and $k$ be continuous kernels on $\mX$ such that
\begin{align}
\|k_m-k\|_{\infty,\mX\times\mX}
\longrightarrow
0.
\end{align}
Let $u_m\in\mH_{k_m}$ satisfy
\begin{align}
u_m
\longrightarrow
u
\qquad
\text{uniformly on }\mX,
\end{align}
and assume that, for some $c\ge0$,
\begin{align}
\limsup_{m\to\infty}
\|u_m\|_{\mH_{k_m}}
\le
c.
\end{align}
We claim that
\begin{align}
u
\in
\mH_k,
\qquad
\|u\|_{\mH_k}
\le
c.
\end{align}
Fix $N\in\Zp$, points $x_1,\ldots,x_N\in\mX$, and coefficients $a_1,\ldots,a_N\in\R$. By the reproducing property and the Cauchy--Schwarz inequality,
\begin{align}
\left|
\sum_{i=1}^N a_i u_m(x_i)
\right|^2
&\stackrel{(a)}{=}
\left|
\left\langle
u_m,
\sum_{i=1}^N a_i k_m(\cdot,x_i)
\right\rangle_{\mH_{k_m}}
\right|^2
\\
&\stackrel{(b)}{\le}
\|u_m\|_{\mH_{k_m}}^2
\left
\|
\sum_{i=1}^N a_i k_m(\cdot,x_i)
\right\|_{\mH_{k_m}}^2
\\
&\stackrel{(c)}{=}
\|u_m\|_{\mH_{k_m}}^2
\sum_{i,j=1}^N
 a_i a_j k_m(x_i,x_j).
\end{align}
Here (a) uses linearity and the reproducing property, (b) applies the Cauchy--Schwarz inequality, and (c) expands the squared RKHS norm by bilinearity and the reproducing property. Passing to the limit gives
\begin{align}
\left|
\sum_{i=1}^N a_i u(x_i)
\right|^2
\le
c^2
\sum_{i,j=1}^N
 a_i a_j k(x_i,x_j).
\label{eq:rkhs-membership-finite-test}
\end{align}
Define a linear functional on the algebraic span of the kernel sections of $k$ by
\begin{align}
T\left(
\sum_{i=1}^N a_i k(\cdot,x_i)
\right)
\coloneqq
\sum_{i=1}^N a_i u(x_i).
\end{align}
If
\begin{align}
\sum_{i=1}^N a_i k(\cdot,x_i)
=
0
\qquad
\text{in }\mH_k,
\end{align}
then
\begin{align}
\sum_{i,j=1}^N a_i a_j k(x_i,x_j)
=
0,
\end{align}
and \eqref{eq:rkhs-membership-finite-test} gives
\begin{align}
\sum_{i=1}^N a_i u(x_i)
=
0.
\end{align}
Hence $T$ is well defined. Moreover, \eqref{eq:rkhs-membership-finite-test} gives
\begin{align}
|T(v)|
\le
c\|v\|_{\mH_k}
\end{align}
for every $v$ in the algebraic span of the kernel sections. Since this span is dense in $\mH_k$, the functional $T$ extends uniquely to a bounded linear functional on $\mH_k$ with norm at most $c$. By the Riesz representation theorem, there exists $h\in\mH_k$ such that
\begin{align}
T(v)
=
\langle v,h\rangle_{\mH_k},
\qquad
v\in\mH_k,
\end{align}
and
\begin{align}
\|h\|_{\mH_k}
\le
c.
\end{align}
Taking $v=k(\cdot,x)$ gives
\begin{align}
u(x)
&\stackrel{(a)}{=}
T\left(k(\cdot,x)\right)
\\
&\stackrel{(b)}{=}
\left\langle
k(\cdot,x),h
\right\rangle_{\mH_k}
\\
&\stackrel{(c)}{=}
h(x),
\qquad
x\in\mX.
\end{align}
Here (a) follows from the definition of $T$, (b) uses the Riesz representation, and (c) applies the reproducing property. Hence $u=h\in\mH_k$ and $\|u\|_{\mH_k}\le c$, proving the claim. In particular, for a fixed continuous kernel $k$, every closed RKHS ball is closed under uniform convergence in $\mC(\mX)$.

\medskip
\noindent
\textbf{Step 3} (almost-minimizing canonical ladders and simultaneous weak extraction):
Fix $r\ge0$. Let
\begin{align}
(f_m)_{m\ge1}
\subseteq
\BL_r
\end{align}
and assume that
\begin{align}
f_m
\longrightarrow
f
\qquad
\text{uniformly on }\mX
\label{eq:adaptive-ball-uniform-limit}
\end{align}
for some $f\in\mC(\mX)$. By the definition of $\CL$ as an infimum, for every $m\in\Zp$ there exists a canonical ladder
\begin{align}
\LH_{\hat\mu_m}
\in
\LHh
\end{align}
such that
\begin{align}
f_m
&\in
\mH_{\hat\mu_m}^{(L)},
&
\|f_m\|_{\mH_{\hat\mu_m}^{(L)}}
&\le
\CL(f_m)
+
\frac{1}{m}
\le
r
+
\frac{1}{m}.
\label{eq:adaptive-ball-near-optimal-ladder}
\end{align}
For every $l\in[L-1]$, the canonical measure $\hat\mu_m^{(l)}$ is supported on
\begin{align}
\S_1\left(
\mH_{\hat\mu_m}^{(l)}
\right)
\subseteq
\mathscr U_l
\subseteq
\mathscr K_l.
\end{align}
The identity embedding of $\mH_{\hat\mu_m}^{(l)}$ into $\mC(\mX)$ is continuous by the pointwise estimate in \Cref{lem:RKHSfunctions_regularity,lem:kernel_boud_induction}. We may therefore identify $\hat\mu_m^{(l)}$ with its pushforward to $\mathscr K_l$ without changing the corresponding Brownian mixture kernel. Since $\mathscr K_l$ is compact metric, the space $\mP(\mathscr K_l)$ is compact for weak convergence. Hence \Cref{lem:probability-measure-compactness}, followed by a finite diagonal extraction over $l\in[L-1]$, gives a subsequence, not relabeled, and measures
\begin{align}
\mu_\infty^{(l)}
\in
\mP(\mathscr K_l),
\qquad
l\in[L-1],
\end{align}
such that
\begin{align}
\hat\mu_m^{(l)}
\Longrightarrow
\mu_\infty^{(l)},
\qquad
l\in[L-1].
\label{eq:layer-measure-weak-convergence}
\end{align}

\medskip
\noindent
\textbf{Step 4} (limit kernels and support of the limit measures):
Set
\begin{align}
k_\infty^{(0)}(x,x')
&\coloneqq
x^\top x',
&
\hat k_m^{(0)}
&\coloneqq
k_\infty^{(0)},
&
\mH_\infty^{(1)}
&\coloneqq
\Honemu.
\end{align}
For every $l\in[L-1]$, define
\begin{align}
k_\infty^{(l)}(x,x')
&\coloneqq
\int_{\mathscr K_l}
\kB\left(u(x),u(x')\right)
\,\d\mu_\infty^{(l)}(u),
\nonumber\\
\mH_\infty^{(l+1)}
&\coloneqq
\mH_{k_\infty^{(l)}}.
\label{eq:limit-ladder-kernel}
\end{align}
Each $k_\infty^{(l)}$ is positive semidefinite because it is an integral of positive-semidefinite pullback kernels. It is continuous because the map
\begin{align}
(u,x,x')
\longmapsto
\kB\left(u(x),u(x')\right)
\end{align}
is continuous on the compact set $\mathscr K_l\times\mX\times\mX$. We first prove that
\begin{align}
\|\hat k_m^{(l)}-k_\infty^{(l)}\|_{\infty,\mX\times\mX}
\longrightarrow
0,
\qquad
l\in[L-1].
\end{align}
Fix $l\in[L-1]$ and define
\begin{align}
\Phi_l:
\mathscr K_l\times\mX\times\mX
&\longrightarrow
\R,
\\
\Phi_l(u,x,x')
&\coloneqq
\kB\left(u(x),u(x')\right).
\end{align}
The map $\Phi_l$ is uniformly continuous. Fix $\varepsilon>0$. There exist points
\begin{align}
z_1,\ldots,z_N
\in
\mX\times\mX
\end{align}
such that, for every $z\in\mX\times\mX$, one can choose $j\in[N]$ satisfying
\begin{align}
\sup_{u\in\mathscr K_l}
\left|
\Phi_l(u,z)-\Phi_l(u,z_j)
\right|
<
\frac{\varepsilon}{3}.
\end{align}
For every fixed $j\in[N]$, weak convergence in \eqref{eq:layer-measure-weak-convergence} gives
\begin{align}
\int_{\mathscr K_l}
\Phi_l(u,z_j)
\,\d\hat\mu_m^{(l)}(u)
\longrightarrow
\int_{\mathscr K_l}
\Phi_l(u,z_j)
\,\d\mu_\infty^{(l)}(u).
\end{align}
Since the set of indices $[N]$ is finite, for all sufficiently large $m$ the absolute difference of these two integrals is smaller than $\varepsilon/3$ simultaneously for every $j\in[N]$. Adding and subtracting the two integrals at the appropriate net point and applying the triangle inequality gives
\begin{align}
\sup_{x,x'\in\mX}
\left|
\hat k_m^{(l)}(x,x')-k_\infty^{(l)}(x,x')
\right|
<
\varepsilon.
\end{align}
This proves the asserted uniform kernel convergence. We next prove that
\begin{align}
\mu_\infty^{(l)}
\left(
\B_1\left(\mH_\infty^{(l)}\right)
\right)
=
1,
\qquad
l\in[L-1],
\label{eq:limit-measure-unit-ball-support}
\end{align}
where each RKHS ball is viewed as a Borel subset of $\mathscr K_l\subseteq\mC(\mX)$. Fix $l\in[L-1]$ and set
\begin{align}
B_{\infty,l}
\coloneqq
\mathscr K_l
\cap
\B_1\left(\mH_\infty^{(l)}\right).
\end{align}
By Step 2 with the kernel held fixed, $B_{\infty,l}$ is closed in $\mathscr K_l$. For every $q\in\Zp$, define the open set
\begin{align}
O_{q,l}
\coloneqq
\left\{
 u\in\mathscr K_l:
 \operatorname{dist}_\infty\left(u,B_{\infty,l}\right)
 >
 \frac{1}{q}
\right\}.
\end{align}
We claim that, for every fixed $q\in\Zp$, the closed unit ball of $\mH_{\hat\mu_m}^{(l)}$ is disjoint from $O_{q,l}$ for all sufficiently large $m$. Suppose otherwise. Then, after passing to a subsequence, there exist
\begin{align}
u_m
\in
O_{q,l}
\cap
\B_1\left(\mH_{\hat\mu_m}^{(l)}\right).
\end{align}
Because $u_m\in\mathscr U_l\subseteq\mathscr K_l$ and $\mathscr K_l$ is compact, a further subsequence converges uniformly to some $u\in\mathscr K_l$. For $l=1$, one has $\hat k_m^{(0)}=k_\infty^{(0)}$; for $l\ge2$, the preceding uniform kernel convergence gives
\begin{align}
\hat k_m^{(l-1)}
\longrightarrow
k_\infty^{(l-1)}
\qquad
\text{uniformly on }\mX\times\mX.
\end{align}
Applying Step 2 yields
\begin{align}
u
\in
B_{\infty,l}.
\end{align}
On the other hand, continuity of the distance to the closed set $B_{\infty,l}$ gives
\begin{align}
\operatorname{dist}_\infty\left(u,B_{\infty,l}\right)
\ge
\frac{1}{q},
\end{align}
which is a contradiction. Hence
\begin{align}
\hat\mu_m^{(l)}(O_{q,l})
=
0
\end{align}
for all sufficiently large $m$. Since $O_{q,l}$ is open, the Portmanteau theorem and \eqref{eq:layer-measure-weak-convergence} give
\begin{align}
\mu_\infty^{(l)}(O_{q,l})
&\le
\liminf_{m\to\infty}
\hat\mu_m^{(l)}(O_{q,l})
\\
&=
0.
\end{align}
Since $B_{\infty,l}$ is closed,
\begin{align}
\mathscr K_l\setminus B_{\infty,l}
=
\bigcup_{q=1}^{\infty}
O_{q,l}.
\end{align}
Therefore, countable subadditivity gives \eqref{eq:limit-measure-unit-ball-support}. We finally place each limit measure on its natural RKHS. Since $k_\infty^{(l-1)}$ is continuous on the compact metric space $\mX$, the RKHS $\mH_\infty^{(l)}$ is separable and the inclusion
\begin{align}
J_l:
\mH_\infty^{(l)}
\longrightarrow
\mC(\mX)
\end{align}
is continuous and injective. The Lusin--Souslin theorem implies that $J_l\left(\mH_\infty^{(l)}\right)$ is Borel in $\mC(\mX)$ and that $J_l^{-1}$ is Borel on its image. In view of \eqref{eq:limit-measure-unit-ball-support}, we may therefore regard $\mu_\infty^{(l)}$ as a Borel probability measure on $\B_1\left(\mH_\infty^{(l)}\right)$. Define
\begin{align}
m_l
\coloneqq
\int_{\B_1\left(\mH_\infty^{(l)}\right)}
\|u\|_{\mH_\infty^{(l)}}
\,\d\mu_\infty^{(l)}(u).
\label{eq:limit-ladder-first-moment}
\end{align}
Since the measure is supported on the closed unit ball,
\begin{align}
0
\le
m_l
\le
1,
\qquad
l\in[L-1].
\label{eq:limit-first-moment-bound}
\end{align}
Thus the kernels in \eqref{eq:limit-ladder-kernel}, together with the support sets
\begin{align}
\mS_\infty^{(l)}
\coloneqq
\B_1\left(\mH_\infty^{(l)}\right),
\qquad
l\in[L-1],
\end{align}
define a valid, possibly noncanonical, $L$-level \BK{}.

\medskip
\noindent
\textbf{Step 5} (membership and norm bound at the limiting top layer):
By \eqref{eq:adaptive-ball-near-optimal-ladder},
\begin{align}
\limsup_{m\to\infty}
\|f_m\|_{\mH_{\hat\mu_m}^{(L)}}
\le
r.
\end{align}
Together with \eqref{eq:adaptive-ball-uniform-limit} and
\begin{align}
\hat k_m^{(L-1)}
\longrightarrow
k_\infty^{(L-1)}
\qquad
\text{uniformly on }\mX\times\mX,
\end{align}
Step 2 gives
\begin{align}
f
&\in
\mH_\infty^{(L)},
&
\|f\|_{\mH_\infty^{(L)}}
&\le
r.
\label{eq:limit-top-layer-norm}
\end{align}

\medskip
\noindent
\textbf{Step 6} (canonicalization of the limiting ladder):
Assume first that
\begin{align}
m_{l_0}
=
0
\end{align}
for some $l_0\in[L-1]$. Since the integrand in \eqref{eq:limit-ladder-first-moment} is nonnegative, one has
\begin{align}
\|u\|_{\mH_\infty^{(l_0)}}
=
0
\qquad
\mu_\infty^{(l_0)}\text{-a.e.}
\end{align}
Hence $\mu_\infty^{(l_0)}$ is concentrated at the zero function, and therefore
\begin{align}
k_\infty^{(l_0)}
=
0,
\qquad
\mH_\infty^{(l_0+1)}
=
\{0\}.
\end{align}
Applying \eqref{eq:limit-measure-unit-ball-support} recursively shows that every subsequent limit measure is concentrated at zero and every subsequent kernel is zero. In particular,
\begin{align}
\mH_\infty^{(L)}
=
\{0\}.
\end{align}
By \eqref{eq:limit-top-layer-norm}, $f=0$. Since $0\in\BL_r$, the desired conclusion follows in this case. Assume now that
\begin{align}
m_l
\in
(0,1]
\qquad
\text{for every }l\in[L-1].
\end{align}
Apply \Cref{lem:measures_inequality}\ref{lem:finite-moment-canonicalization} to the finite-moment limiting ladder constructed in Step 4. There exists a canonical ladder
\begin{align}
\LH_{\hat\mu_\infty}
\in
\LHh
\end{align}
such that its top layer coincides with $\mH_\infty^{(L)}$ as a set and
\begin{align}
\|g\|_{\mH_{\hat\mu_\infty}^{(L)}}
=
q_L
\|g\|_{\mH_\infty^{(L)}},
\qquad
g\in\mH_\infty^{(L)},
\end{align}
where
\begin{align}
q_L
=
\prod_{l=1}^{L-1}
m_l^{2^{-(L-l)}}.
\end{align}
By \eqref{eq:limit-first-moment-bound},
\begin{align}
q_L
\le
1.
\end{align}
Therefore,
\begin{align}
\CL(f)
&\stackrel{(a)}{\le}
\|f\|_{\mH_{\hat\mu_\infty}^{(L)}}
\\
&\stackrel{(b)}{=}
q_L
\|f\|_{\mH_\infty^{(L)}}
\\
&\stackrel{(c)}{\le}
r.
\end{align}
Here (a) evaluates the infimum defining $\CL$ at the canonical limiting ladder, (b) applies \Cref{lem:measures_inequality}\ref{lem:finite-moment-canonicalization}, and (c) uses $q_L\le1$ together with \eqref{eq:limit-top-layer-norm}. Hence
\begin{align}
f
\in
\BL_r.
\end{align}
Thus $\BL_r$ is closed in $\left(\mC(\mX),\|\cdot\|_\infty\right)$.

\medskip
\noindent
\textbf{Step 7} (lower semicontinuity of the extended complexity):
Let $(f_m)_{m\ge1}\subseteq\mC(\mX)$ satisfy $f_m\to f$ uniformly, and define
\begin{align}
a
\coloneqq
\liminf_{m\to\infty}
\widehat C_{\mathrm{ext}}^{(L)}(f_m).
\end{align}
If $a=+\infty$, then \eqref{eq:adaptive-complexity-lsc} is immediate. Suppose that $a<+\infty$. Choose a subsequence, denoted by $(f_{m_j})_{j\ge1}$, such that
\begin{align}
\widehat C_{\mathrm{ext}}^{(L)}(f_{m_j})
\longrightarrow
a.
\end{align}
Fix $\varepsilon>0$. For all sufficiently large $j$,
\begin{align}
\widehat C_{\mathrm{ext}}^{(L)}(f_{m_j})
\le
a+\varepsilon,
\end{align}
and hence
\begin{align}
f_{m_j}
\in
\BL_{a+\varepsilon}.
\end{align}
Since $\BL_{a+\varepsilon}$ is closed and $f_{m_j}\to f$ uniformly,
\begin{align}
f
\in
\BL_{a+\varepsilon}.
\end{align}
Therefore,
\begin{align}
\widehat C_{\mathrm{ext}}^{(L)}(f)
\le
a+\varepsilon.
\end{align}
Letting $\varepsilon\downarrow0$ proves \eqref{eq:adaptive-complexity-lsc}.
\end{proof}

\begin{lemma}[relative compactness of adaptive balls]
\label{lem:HL_precompact}
Let $\mX\subset\R^d$ be compact and let $r\ge0$. Then $\BL_r$ is relatively compact in $\mC(\mX)$ equipped with $\|\cdot\|_\infty$.
\end{lemma}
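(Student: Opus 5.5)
The plan is to apply the Arzel\`a--Ascoli theorem (\Cref{thm:arzela-ascoli}) in $\left(\mC(\mX),\|\cdot\|_\infty\right)$. By that theorem, a subset of $\mC(\mX)$ over the compact set $\mX$ is relatively compact exactly when it is uniformly bounded and equicontinuous. So it suffices to show that $\BL_r$ has both properties, with constants depending only on $r$, $L$ and $\mX$. The case $r=0$ is trivial, since by positive definiteness of $\CL$ the ball reduces to $\{0\}$; we assume $r>0$ from now on.

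\emph{Uniform boundedness.} Let $f\in\BL_r$. The pointwise estimate \eqref{eq:HL-pointwise} of \Cref{thm:charachteriz_funspace}\ref{thm:HLp-analytical} gives, for every $\b x\in\mX$,
\begin{align}
|f(\b x)|\le \CL(f)\,\|\b x\|_2^{2^{-(L-1)}}\le r\,R_{\mX}^{2^{-(L-1)}},
\qquad R_{\mX}\coloneqq\sup_{\b x\in\mX}\|\b x\|_2<\infty .
\end{align}
The radius $R_{\mX}$ is finite because $\mX$ is compact. This bound does not depend on $f$.

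\emph{Equicontinuity.} The H\"older estimate \eqref{eq:HL-holder} gives
\begin{align}
|f(\b x)-f(\b x')|\le r\,\|\b x-\b x'\|_2^{2^{-(L-1)}},
\qquad \b x,\b x'\in\mX .
\end{align}
Given $\varepsilon>0$, choose $\delta=(\varepsilon/r)^{2^{L-1}}$. Then $\|\b x-\b x'\|_2<\delta$ implies $|f(\b x)-f(\b x')|<\varepsilon$ uniformly over $f\in\BL_r$. This is exactly equicontinuity as defined in the appendix notation.

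Both properties hold, so Arzel\`a--Ascoli yields that $\BL_r$ is relatively compact. There is no real obstacle here: the only subtlety is that \Cref{thm:charachteriz_funspace}\ref{thm:HLp-analytical} is stated with the infimal complexity $\CL(f)$, so the bounds apply directly to every element of the ball, with no need to choose a realizing ladder. Combined with the closedness from \Cref{lem:lsc-CL}, this even shows that $\BL_r$ is compact.
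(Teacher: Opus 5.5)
Your proposal is correct and follows the paper's proof essentially verbatim: both obtain uniform boundedness and equicontinuity of $\BL_r$ from the pointwise and H\"older estimates of \Cref{thm:charachteriz_funspace}\ref{thm:HLp-analytical}, and then invoke Arzel\`a--Ascoli (\Cref{thm:arzela-ascoli}). Your separate treatment of $r=0$ is harmless but not needed, and your closing remark that closedness from \Cref{lem:lsc-CL} upgrades this to compactness is a valid addition.
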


\begin{proof}
Set
\begin{align}
\alpha_L
\coloneqq
2^{-(L-1)},
\qquad
R_{\mX}
\coloneqq
\sup_{x\in\mX}\|x\|_2.
\end{align}
For every $f\in\BL_r$, \Cref{thm:charachteriz_funspace}\ref{thm:HLp-analytical} gives
\begin{align}
|f(x)-f(x')|
&\le
r\|x-x'\|_2^{\alpha_L},\\
|f(x)|
&\le
r R_{\mX}^{\alpha_L},
\qquad
x,x'\in\mX.
\end{align}
Hence $\BL_r$ is equicontinuous and uniformly bounded. Applying \Cref{thm:arzela-ascoli} proves the claim.
\end{proof}

\begin{lemma}[common loss bounds]
\label{lem:loss-bounds}
Let $M>0$, and let $u,u_1,u_2,y\in[-M,M]$. For each loss below, the displayed constants $M_\ell$ and $L_\ell$ satisfy
\begin{align}
0
\le
\ell(u,y)
&\le
M_\ell,\\
\left|\ell(u_1,y)-\ell(u_2,y)\right|
&\le
L_\ell|u_1-u_2|.
\end{align}
Consequently, these constants verify
\eqref{eq:controlled-loss-boundedness}--\eqref{eq:controlled-loss-lipschitz}
after identifying the common range bound in \Cref{thm:excess-risk-\BK{}} with $M$.
The following choices are valid.
\begin{enumerate}
\item \textbf{Squared loss.}
For $\ell(u,y)=(u-y)^2$, one may take
\begin{align}
M_\ell
&=
4M^2,
&
L_\ell
&=
4M.
\end{align}

\item \textbf{Absolute loss.}
For $\ell(u,y)=|u-y|$, one may take
\begin{align}
M_\ell
&=
2M,
&
L_\ell
&=
1.
\end{align}

\item \textbf{Huber loss} with parameter $\delta>0$.
For
\begin{align}
\ell(u,y;\delta)
&=
\begin{cases}
\tfrac12(u-y)^2,
& |u-y|\le\delta,\\[3pt]
\delta|u-y|-\tfrac12\delta^2,
& |u-y|>\delta,
\end{cases}
\end{align}
one may take
\begin{align}
M_\ell
&=
\begin{cases}
2M^2,
& \delta\ge2M,\\[3pt]
2\delta M-\tfrac12\delta^2,
& \delta<2M,
\end{cases}
&
L_\ell
&=
\delta.
\end{align}

\item \textbf{$\varepsilon$-insensitive loss} with parameter $\varepsilon>0$.
For
\begin{align}
\ell(u,y;\varepsilon)
&=
\max\left\{0,|u-y|-\varepsilon\right\},
\end{align}
one may take
\begin{align}
M_\ell
&=
\max\left\{0,2M-\varepsilon\right\},
&
L_\ell
&=
1.
\end{align}
\end{enumerate}
\end{lemma}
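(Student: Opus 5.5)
The plan is elementary case work: I will verify each of the four losses separately. I use only that $u,u_1,u_2,y\in[-M,M]$ forces $|u-y|\le 2M$ and $|u_1-u_2|\le 2M$. Nonnegativity is immediate in every case. The squared, absolute and $\varepsilon$-insensitive losses are visibly $\ge 0$. For the Huber loss, the quadratic branch is $\ge0$, and on the linear branch $\delta|u-y|-\tfrac12\delta^2>\delta^2-\tfrac12\delta^2>0$.

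\emph{Upper bounds $M_\ell$.} Each loss is a nondecreasing function $\psi(|u-y|)$ of the residual, so $M_\ell=\psi(2M)$ is valid.
\begin{itemize}
\item Squared loss: $\psi(2M)=4M^2$.
\item Absolute loss: $\psi(2M)=2M$.
\item $\varepsilon$-insensitive loss: $\psi(2M)=\max\{0,2M-\varepsilon\}$.
\item Huber loss, $\delta\ge 2M$: the residual never leaves the quadratic branch, giving $\tfrac12(2M)^2=2M^2$.
\item Huber loss, $\delta<2M$: the linear branch at $2M$ gives $2\delta M-\tfrac12\delta^2$. This dominates the quadratic branch because $\psi$ is continuous and nondecreasing.
\end{itemize}

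\emph{Lipschitz constants $L_\ell$.}
\begin{itemize}
\item Absolute loss: by the reverse triangle inequality, $\bigl||u_1-y|-|u_2-y|\bigr|\le|u_1-u_2|$, so $L_\ell=1$.
\item $\varepsilon$-insensitive loss: $t\mapsto\max\{0,t-\varepsilon\}$ is $1$-Lipschitz, and composing with the absolute loss gives $L_\ell=1$.
\item Squared loss: factor $(u_1-y)^2-(u_2-y)^2=(u_1-u_2)(u_1+u_2-2y)$ and bound $|u_1+u_2-2y|\le 4M$, giving $L_\ell=4M$.
\item Huber loss: $\ell(\cdot,y;\delta)$ is $C^1$ with derivative equal to the clipping of $u-y$ to $[-\delta,\delta]$. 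This derivative has absolute value at most $\delta$, so the mean value theorem gives $L_\ell=\delta$.
\end{itemize}

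The only step needing any care is the Huber case: monotonicity of $\psi$ across the knot, and the correct branch for $M_\ell$ when $\delta<2M$. I will check this by noting that $\psi$ is continuous and has nonnegative derivative on both pieces. Finally, with $M\leftarrow A$, these constants satisfy \eqref{eq:controlled-loss-boundedness}--\eqref{eq:controlled-loss-lipschitz}, which reproduces Table~\ref{tab:loss-examples-final}.
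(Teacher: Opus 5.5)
Your proposal is correct and follows essentially the same route as the paper. Both bound the residual by $2M$ and use monotonicity of each loss in $|u-y|$ to get $M_\ell$. For the Lipschitz constants, both use the difference-of-squares factorization for the squared loss, the reverse triangle inequality for the absolute and $\varepsilon$-insensitive losses, and a mean-value argument with the derivative bounded by $\delta$ for the Huber loss.
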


\begin{proof}
For the squared loss, $|u-y|\le2M$ gives
\begin{align}
0
\le
(u-y)^2
&\le
4M^2.
\end{align}
Moreover,
\begin{align}
\left|(u_1-y)^2-(u_2-y)^2\right|
&\stackrel{(a)}{=}
|u_1-u_2|\,|u_1+u_2-2y|\\
&\stackrel{(b)}{\le}
4M|u_1-u_2|,
\end{align}
where (a) factors a difference of squares and (b) uses
$|u_1|,|u_2|,|y|\le M$. For the absolute loss,
\begin{align}
0
\le
|u-y|
&\le
2M,
\end{align}
and the reverse triangle inequality yields
\begin{align}
\left||u_1-y|-|u_2-y|\right|
&\le
|u_1-u_2|.
\end{align}
For the Huber loss, put $r=u-y$. The loss is nonnegative and nondecreasing as a function of $|r|$. Since $|r|\le2M$, its largest value is
\begin{align}
M_\ell
&=
\begin{cases}
\tfrac12(2M)^2,
& 2M\le\delta,\\[3pt]
\delta(2M)-\tfrac12\delta^2,
& 2M>\delta.
\end{cases}
\end{align}
The Huber function is differentiable in its first argument, and its derivative has magnitude at most $\delta$. The mean-value theorem therefore gives
\begin{align}
\left|
\ell(u_1,y;\delta)-\ell(u_2,y;\delta)
\right|
&\le
\delta|u_1-u_2|.
\end{align}
Thus the displayed value $L_\ell=\delta$ is a valid Lipschitz constant on the relevant range. Finally, the map $r\mapsto\max\{0,|r|-\varepsilon\}$ is nonnegative, nondecreasing in $|r|$, and $1$-Lipschitz. Since $|u-y|\le2M$,
\begin{align}
0
\le
\ell(u,y;\varepsilon)
&\le
\max\left\{0,2M-\varepsilon\right\},\\
\left|
\ell(u_1,y;\varepsilon)-\ell(u_2,y;\varepsilon)
\right|
&\le
|u_1-u_2|.
\end{align}
This proves all four claims.
\end{proof}

\section{External Lemmas}\label{app:Ext_Stats}
This section is dedicated to the external lemmas used in our proofs.

\setcounter{lemma}{0}
\renewcommand{\thelemma}{D\arabic{lemma}}

\begin{lemma}[Arzelà--Ascoli Theorem]
\label{thm:arzela-ascoli}
Let $(\mX,d)$ be a compact metric space and let $\F \subseteq \mC(\mX)$. Then $\F$ is relatively compact in $\left(\mC(\mX), \|\cdot\|_\infty\right)$ iff  $\F$ is equicontinuous and $\F$ is bounded pointwise (in other words, for all $x\in \mX$, $\sup_{f\in \F}|f(x)|<\infty$). 
\end{lemma}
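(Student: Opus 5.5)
The statement to be proved is the Arzelà--Ascoli theorem (\Cref{thm:arzela-ascoli}). I will prove the two directions separately, working throughout in the complete metric space $\left(\mC(\mX),\|\cdot\|_\infty\right)$. In that space, relative compactness is equivalent to total boundedness, since the closure of a totally bounded set is complete and totally bounded, hence compact.

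\emph{Necessity.} Assume $\F$ is relatively compact. Then $\overline{\F}$ is compact, so it is bounded in sup norm; in particular $\F$ is pointwise bounded. For equicontinuity, fix $\varepsilon>0$ and cover $\F$ by finitely many $\|\cdot\|_\infty$-balls of radius $\varepsilon/3$ centered at $f_1,\ldots,f_N$.
\begin{itemize}
\item Each $f_i$ is uniformly continuous on the compact space $\mX$. So there is a common $\delta>0$ with $|f_i(x)-f_i(y)|<\varepsilon/3$ whenever $d(x,y)<\delta$, for all $i\in[N]$.
\item For arbitrary $f\in\F$, pick $i$ with $\|f-f_i\|_\infty<\varepsilon/3$.
\item The three-term triangle inequality then gives $|f(x)-f(y)|<\varepsilon$ whenever $d(x,y)<\delta$.
\end{itemize}

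\emph{Sufficiency.} Assume $\F$ is equicontinuous and pointwise bounded. Fix $\varepsilon>0$.
\begin{itemize}
\item Choose $\delta$ from equicontinuity for $\varepsilon/4$, and a finite $\delta$-net $\{x_1,\ldots,x_K\}$ of the compact set $\mX$.
\item Pointwise boundedness makes the set $\{(f(x_1),\ldots,f(x_K)):f\in\F\}$ bounded in $\R^K$, hence totally bounded. Cover it by finitely many sup-norm cells of diameter $<\varepsilon/4$.
\item For each nonempty cell, select a representative $g\in\F$ whose vector of values lies in that cell.
\item Now let $f\in\F$ and let $g$ be the representative of the cell containing $f$'s values. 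For any $x$, pick $x_k$ with $d(x,x_k)<\delta$. Then
\begin{align}
|f(x)-g(x)|\le|f(x)-f(x_k)|+|f(x_k)-g(x_k)|+|g(x_k)-g(x)|<\tfrac{3\varepsilon}{4}<\varepsilon.
\end{align}
\end{itemize}
Thus finitely many representatives form an $\varepsilon$-net of $\F$, so $\F$ is totally bounded and therefore relatively compact.

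The main obstacle is this last assembly step: building the finite net of representatives from the finite-dimensional cover. It is handled by selecting one actual member of $\F$ in each nonempty cell, rather than using arbitrary grid points that need not correspond to any function in $\F$.
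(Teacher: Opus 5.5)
Your proof is correct. It is the standard argument via total boundedness in the complete space $\left(\mC(\mX),\|\cdot\|_\infty\right)$, and selecting an actual member of $\F$ in each nonempty cell is exactly what makes the finite net valid.

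The paper gives no proof to compare against. The lemma is quoted as a classical result in the external-lemmas appendix, and it is only used in the ``if'' direction, in \Cref{lem:lsc-CL} and \Cref{lem:HL_precompact}, where the families are even uniformly bounded and uniformly H\"older. So your sufficiency half is the part that actually matters for the paper.

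One minor remark: taking a single $\delta$ for all points uses equicontinuity in the uniform sense. This matches the paper's definition in \Cref{app:Additional_notations}. Under a pointwise definition, you would first extract a finite subcover of $\mX$ by balls $B(x,\delta_x)$, and the rest of the argument goes through unchanged.
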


\begin{lemma}[weak sequential compactness of probability measures]
\label{lem:probability-measure-compactness}
Let $(K,d)$ be a compact metric space. Then every sequence $(\mu_m)_{m\ge1}\subset\mP(K)$ admits a weakly convergent subsequence whose limit belongs to $\mP(K)$.
\end{lemma}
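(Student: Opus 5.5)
The statement is the standard weak sequential compactness of $\mP(K)$ for a compact metric space $K$. I will prove it by Riesz representation combined with a diagonal argument on a countable dense subset of $\mC(K)$.

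\textbf{Step 1: separability of $\mC(K)$.} Since $K$ is compact metric, $\left(\mC(K),\|\cdot\|_\infty\right)$ is separable. One way to see this is Stone--Weierstrass applied to the algebra generated by the functions $x\mapsto d(x,x_j)$, where $(x_j)$ is a countable dense subset of $K$, using rational coefficients. Fix a countable dense set $\{g_1,g_2,\ldots\}\subset\mC(K)$.

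\textbf{Step 2: diagonal extraction.} For each $j$, the real sequence $\int g_j\,\d\mu_m$ is bounded by $\|g_j\|_\infty$. Extracting successively convergent subsequences and then taking the diagonal gives a subsequence $(\mu_{m_k})_{k\ge1}$ along which $\Lambda(g_j)\coloneqq\lim_k\int g_j\,\d\mu_{m_k}$ exists for every $j$.

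Next I extend this to all of $\mC(K)$. For any $g\in\mC(K)$ and any $\varepsilon>0$, choose $g_j$ with $\|g-g_j\|_\infty<\varepsilon$. Then
\[
\left|\int g\,\d\mu_{m_k}-\int g\,\d\mu_{m_{k'}}\right|\le 2\varepsilon+\left|\int g_j\,\d\mu_{m_k}-\int g_j\,\d\mu_{m_{k'}}\right|,
\]
so the sequence $\int g\,\d\mu_{m_k}$ is Cauchy. Hence $\Lambda(g)\coloneqq\lim_k\int g\,\d\mu_{m_k}$ exists for every $g\in\mC(K)$.

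\textbf{Step 3: Riesz representation.} The functional $\Lambda$ inherits three properties from the approximating integrals: it is linear, positive, and satisfies $\Lambda(1)=1$. Since $K$ is compact, the Riesz--Markov representation theorem yields a unique regular Borel probability measure $\mu$ on $K$ with $\Lambda(g)=\int g\,\d\mu$ for all $g\in\mC(K)$. Note that every finite Borel measure on a metric space is regular, so no extra hypothesis is needed. By definition of $\Lambda$, this is exactly weak convergence $\mu_{m_k}\Rightarrow\mu$, and $\mu\in\mP(K)$.

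\textbf{Main obstacle.} The only nontrivial inputs are the separability of $\mC(K)$ and the Riesz representation theorem. I will cite both as standard; Prokhorov's theorem, with tightness automatic on a compact space, gives an alternative one-line route.
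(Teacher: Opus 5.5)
Your argument is correct, but it does not follow the paper, which gives no proof of this lemma. The paper simply cites Prokhorov's theorem (via Billingsley), using that on a compact metric space every family of probability measures is automatically tight.

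You prove the compact case directly in four steps:
\begin{enumerate}
\item separability of $\mC(K)$;
\item a diagonal extraction on a countable dense set;
\item a density argument that extends the limit functional to all of $\mC(K)$, using that each $\mu_m$ has mass one, so $\bigl|\int (g-g_j)\,\mathrm{d}\mu_m\bigr|\le\|g-g_j\|_\infty$;
\item the Riesz--Markov theorem, which represents the positive, normalized limit functional by a Borel probability measure.
\end{enumerate}
Your remark that finite Borel measures on metric spaces are regular correctly closes the gap between ``regular Borel measure'' and $\mP(K)$.

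The paper's citation buys brevity and places the lemma inside a theorem that also covers tight families on non-compact spaces. Your route is self-contained modulo two textbook facts and shows exactly where compactness enters: continuous functions are bounded, so weak convergence is tested on all of $\mC(K)$; $\mC(K)$ is separable; and Riesz--Markov applies. This is all that is needed in the paper's only application, to the compact sets $\mathscr K_l\subseteq\mC(\mX)$ in \Cref{lem:lsc-CL}.

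One cosmetic point in Step 1: adjoin the constant functions to the algebra generated by the $d(\cdot,x_j)$. Without them, Stone--Weierstrass fails in the degenerate one-point case, where $d(\cdot,x_1)\equiv0$. When $K$ has at least two points, the generated algebra already vanishes nowhere and separates points, so the argument goes through as written.
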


This is the compact-space case of Prokhorov's theorem; see, for example, Billingsley~\cite{billingsley12probability}.

\bibliographystyle{emss}
\bibliography{BIB/bkl_references}

\end{document}